\newcommand{\DKL}{D_{\!{KL}}\xspace}
\newcommand{\PP}[2][]{ \ifthenelse{\isempty{#1}}
  {\mathbf{P}\left[#2\right]} {\mathbf{P}_{#1}\left[#2\right]} }
\newcommand{\BAI}{\texttt{BAI}\xspace}
\newcommand{\MAB}{\texttt{MAB}\xspace}
\newcommand{\PAC}{\texttt{PAC}\xspace}
\newcommand{\Nout}{N_{\!{out}}}
\newcommand{\Nin}{N_{\!{in}}}
\newcommand{\mgne}{\succ}
\DeclareMathOperator{\dom}{\!{dom}}
\DeclareMathOperator{\rect}{\!{Rect}}
\title{On Interpolating Experts and Multi-Armed Bandits}
\author{Houshuang Chen\thanks{chenhoushuang@sjtu.edu.cn}}
\author{Yuchen He\thanks{yuchen\_he@sjtu.edu.cn}}
\author{Chihao Zhang\thanks{chihao@sjtu.edu.cn}}
\affil{Shanghai Jiao Tong University}
\date{}
\begin{document}

\maketitle

\begin{abstract}
	Learning with expert advice and multi-armed bandit are two classic online decision problems which differ on how the information is observed in each round of the game. We study a family of problems interpolating the two. For a vector $\*m=(m_1,\dots,m_K)\in \bb N^K$, an instance of $\*m$-\MAB indicates that the arms are partitioned into $K$ groups and the $i$-th group contains $m_i$ arms. Once an arm is pulled, the losses of all arms in the same group are observed. 

	We prove tight minimax regret bounds for $\*m$-\MAB and design an optimal PAC algorithm for its pure exploration version, $\*m$-\BAI, where the goal is to identify the arm with minimum loss with as few rounds as possible. We show that the minimax regret of $\*m$-\MAB is $\Theta\tuple{\sqrt{T\sum_{k=1}^K\log (m_k+1)}}$ and the minimum number of pulls for an $(\eps,0.05)$-PAC algorithm of $\*m$-\BAI is $\Theta\tp{\frac{1}{\eps^2}\cdot \sum_{k=1}^K\log (m_k+1)}$. 
	
	Both our upper bounds and lower bounds for $\*m$-\MAB can be extended to a more general setting, namely the bandit with graph feedback, in terms of the \emph{clique cover} and related graph parameters. As consequences, we obtained tight minimax regret bounds for several families of feedback graphs.




	
\end{abstract}

\section{Introduction}\label{sec: intro}

A typical family of online decision problems is as follows: 
In each round of the game, the player chooses one of $N$ arms to pull. At the same time, the player will incur a loss of the pulled arm. The objective is to minimize the expected regret defined as the difference between the cumulative losses of the player and that of the single best arm over $T$ rounds. The minimax regret, denoted as $R^*(T)$, represents the minimum expected regret achievable by any algorithm against the worst loss sequence. 

There are variants of the problem according to amount of information the player can observe in each round. In the problem of multi-armed bandit (\MAB), the player can only observe the loss of the arm just pulled. The minimax regret is $\Theta\tp{\sqrt{NT}}$~(\cite{AB09}). Another important problem is when the player can observe the losses of all arms in each round, often refered to as learning with expert advice. The minimax regret is $\Theta\tp{\sqrt{T\log N}}$~(\cite{FS97,HKW95}). Bandit with graph feedback generalizes and interpolates both models. In this model, a directed graph $G$, called the feedback graph, is given. The vertex set of $G$ is the set of arms and a directed edge from $i$ to $j$ indicates that pulling the arm $i$ can observe the loss of arm $j$. As a result, the \MAB corresponds to when $G$ consists of singletons with self-loop, and learning with expert advice corresponds to when $G$ is a clique. A number of recent works devote to understanding how the structure of $G$ affects the minimax regret~(\cite{ACDK15, CHLZ21, HZ22, EECCB23, KC23, RVCS22, DWZ23}). 

In this paper, we consider a natural interpolation between learning with expert advice and multi-armed bandit. Let $\*m=(m_1,m_2,\dots,m_K)\in\bb N^K$ be a vector with each $m_i\ge 1$. An instance of $\*m$-\MAB is that the all $N$ arms are partitioned into $K$ groups and the pull of each arm can observe the losses of all arms in the same group. In the language of bandit with graph feedback, the feedback graph $G$ is the disjoint union of $K$ cliques with size $m_1,m_2,\dots,m_k$ respectively. We show that the minimax regret for $\*m$-\MAB is $\Theta\tp{\sqrt{T\cdot \sum_{k\in[K]} \log (m_k+1)}}$. As a result, this generalizes the optimal regret bounds for both \MAB and learning with expert advice. 

A closely related problem is the so-called ``pure exploration'' version of bandit, often referred to as the \emph{best arm identification }(\BAI) problem where the loss of each arm follows some (unknown) distribution. The goal of the problem is to identify the arm with minimum mean loss with as few rounds as possible. Similarly, we introduced the problem of $\*m$-\BAI with the same feedback pattern as $\*m$-\MAB. We design an $(\eps,0.05)$-PAC algorithm for $\*m$-BAI which terminates in $T=O\tp{\frac{1}{\eps^2}\sum_{k\in [K]}\log (m_k+1)}$ rounds for every $\eps<\frac{1}{8}$. This means that after $T$ rounds of the game, with probability at least $0.95$, the algorithm can output an arm whose mean loss is less than $\eps$ plus the mean of the best one. We show that our algorithm is optimal by proving a matching lower bound $\Omega\tp{\frac{1}{\eps^2}\sum_{k\in [K]}\log (m_k+1)}$ for any $(\eps,0.05)$-PAC algorithm.

Both our upper bounds and lower bounds for the minimax regret of $\*m$-\MAB can be generalized to bandit with graph feedback. To capture the underlying structure necessary for our proofs, we introduce some new graph parameters which yield optimal bound for several families of feedback graphs. The main results are summarized in \Cref{sec: main results}.

Our algorithm deviates from the standard \emph{online stochastic mirror descent }(OSMD) algorithm for bandit problems. We employ the two-stage OSMD developed in~\cite{HZ22} and give a novel analysis which yields the optimal regret bound. For the lower bound, we prove certain new ``instance-specific'' lower bounds for the best arm identification problem. These lower bounds may find applications in other problems. We will give an overview of our techniques in \Cref{sec: tech}.

\subsection{Main Results}\label{sec: main results}

We summarize our main results in this section. Formal definitions of $\*m$-\MAB, $\*m$-\BAI and bandit with graph feedback are in \Cref{sec: prelim}. 


\begin{theorem} \label{thm:ub-m-MAB}
	There exists an algorithm such that for any instance of $(m_1,\dots,m_K)$-\MAB, any $T>0$ and any loss sequence $\ell^{(0)},\ell^{(1)},\dots,\ell^{(T-1)} \in [0,1]^N$, its regret is at most
	\[
	 c\cdot \sqrt{T\cdot \sum_{k=1}^K\log (m_k+1)},
	\]
	where $c>0$ is a universal constant.
\end{theorem}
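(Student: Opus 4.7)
The plan is to instantiate the two-stage OSMD framework of \cite{HZ22} with regularizers that are matched to the clique structure, and then analyze it carefully. Concretely, I would maintain an outer distribution $q^{(t)}\in\Delta_K$ over cliques, updated by OSMD with the $\tfrac12$-Tsallis entropy $F_{\mathrm{out}}(q) = -2\sum_{k=1}^K\sqrt{q_k}$ (so as to avoid the $\log K$ factor that Exp3 would incur), and, for each clique $k$, an inner distribution $p^{(t)}_{\cdot\mid k}\in\Delta_{m_k}$ updated by OSMD with negative entropy (i.e.\ Hedge). In round $t$, sample $k_t\sim q^{(t)}$ and then $I_t\sim p^{(t)}_{\cdot\mid k_t}$; since all losses in clique $k_t$ are revealed, the estimator $\widehat{\ell}^{(t)}_i = \ell^{(t)}_i\,\mathbf{1}\{k(i)=k_t\}/q^{(t)}_{k(i)}$ is unbiased and can drive both stages.

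Let $i^\star$ be the best arm in hindsight, $k^\star = k(i^\star)$, and write $\tilde{\ell}^{(t)}_k = \sum_{i\in k} p^{(t)}_{i\mid k}\ell^{(t)}_i$ for the conditional expected loss of clique $k$. The total regret decomposes as
\[
R(T) \;=\; \underbrace{\sum_{t}\Bigl(\sum_{k} q^{(t)}_k\tilde{\ell}^{(t)}_k \;-\; \tilde{\ell}^{(t)}_{k^\star}\Bigr)}_{R_{\mathrm{out}}} \;+\; \underbrace{\sum_{t}\bigl(\tilde{\ell}^{(t)}_{k^\star} - \ell^{(t)}_{i^\star}\bigr)}_{R_{\mathrm{in}}}.
\]
For $R_{\mathrm{out}}$, the outer process is a $K$-armed bandit on meta-arms with losses $\tilde{\ell}^{(t)}_k\in[0,1]$, and the standard $\tfrac12$-Tsallis OSMD analysis gives $R_{\mathrm{out}} \le O(\sqrt{KT})$. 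For $R_{\mathrm{in}}$, it is the regret of Hedge inside clique $k^\star$ against the deterministic comparator $i^\star$; with the importance-weighted full-information estimates above and a tuned learning rate, this is $O(\sqrt{T\log m_{k^\star}})$ in expectation. Putting the two together and using the elementary inequalities $K \le \sum_k\log(m_k+1)/\log 2$ and $\log m_{k^\star} \le \sum_k\log(m_k+1)$ collapses the sum to $O\!\bigl(\sqrt{T\sum_k \log(m_k+1)}\bigr)$, as claimed.

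The main obstacle is the \emph{coupling} between the two stages that shows up in $R_{\mathrm{in}}$: the inner state $p^{(t)}_{\cdot\mid k^\star}$ is only actually driven by fresh information on the (random) rounds when $k_t = k^\star$, yet $R_{\mathrm{in}}$ is summed over all $T$ rounds, and the naive variance term for Hedge on the importance-weighted estimates scales with $1/q^{(t)}_{k^\star}$, which is not uniformly bounded. The novel ingredient I would pursue is to view the whole algorithm as a single OSMD over $\Delta_N$ with the decomposable regularizer $F(P) = F_{\mathrm{out}}(q(P)) + \sum_k q_k(P)\,F_{\mathrm{in}}(p_{\cdot\mid k}(P))$, so that the inverse-probability blow-up of the inner variance is cancelled by the curvature of the outer Tsallis term in the same direction. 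Once this stability calculation is carried out, the Bregman penalty $\sum_t D_F(P^{(t+1)},P^{(t)})$ splits cleanly into an outer contribution (handled by the Tsallis bound) and a per-clique inner contribution (handled by the entropy bound), yielding the theorem.
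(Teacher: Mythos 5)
Your first paragraph describes essentially the paper's own algorithm (Algorithm~\ref{algo:main-algo}: Tsallis-$\frac12$ potential over the $K$ groups, negative entropy inside each group, the same importance-weighted estimator), and you have correctly located the crux: the naive ``outer regret $+$ inner regret'' split fails because the inner variance term carries a factor $1/q^{(t)}_{k^\star}$ that is not uniformly bounded. This is precisely the failure mode the paper points out for the decomposition-theorem route of~\cite{HZ22} (the unweighted term in \eqref{eqn:regret-HZ} versus the weighted term in \Cref{lem:compare}). So your middle paragraph's claim $R_{\mathrm{in}}=O(\sqrt{T\log m_{k^\star}})$ is, as you yourself concede, not established, and the entire proof rests on the last paragraph.

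That last paragraph contains a genuine gap: the decisive stability computation is only gestured at, and the mechanism you name for the cancellation is not the right one. With the hybrid potential $F(P)=\frac{1}{\eta}\psi(q)+\sum_k\frac{1}{\eta_k}\,q_k\,\phi\bigl(p_{\cdot\mid k}\bigr)$, the clique-$k$ block of the Hessian of the weighted inner entropy is $\frac{1}{\eta_k q_k}\bigl(\mathrm{diag}(1/p_{i\mid k})-\mathbf{1}\mathbf{1}^{\top}\bigr)$, so the inverse quadratic form of $\hat\ell^{(t)}_k$ picks up a factor $q_k$; combined with the fact that $k_t=k$ only with probability $q_k$, this kills the $1/q_k^2$ of the squared estimator and yields the per-round contribution $\sum_k\eta_k$ --- exactly the role played by the extra $Y^{(t)}(k)$ factor in \Cref{lem:compare}. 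The outer Tsallis curvature does \emph{not} provide this cancellation: its Hessian contributes only $\frac{1}{2\eta}q_k^{-3/2}\mathbf{1}\mathbf{1}^{\top}$ per block, which controls nothing orthogonal to $\mathbf 1$; its job is to supply curvature in the single flat direction of the weighted conditional entropy, namely $v\propto p_{\cdot\mid k}$ (rescaling the whole clique while keeping conditionals fixed), where it produces the $\eta\sqrt K$ term. So the claim that the blow-up is ``cancelled by the curvature of the outer Tsallis term in the same direction'' would not survive the actual computation and, if taken literally, sets the calculation up incorrectly. Two further items must be handled before this becomes a proof: the two-stage sampler of your first paragraph is not literally mirror descent with this non-separable $F$ (its mirror map couples $q$ and $p_{\cdot\mid k}$), so you must either adopt the hybrid-regularizer algorithm outright --- which is fine for the theorem --- or prove an equivalence; and the block inversion (diagonal plus rank-one corrections), the one-sided bound at the intermediate point, and the diameter term of order $\frac{\sqrt K}{\eta}+\frac{1}{K}\sum_k\frac{\log m_k}{\eta_k}$ must all be carried out. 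If these steps are completed, your single-potential route would indeed be a genuinely different path to the same bound: the paper instead keeps the two potentials separate and obtains the weighted term $\sum_k Y^{(t)}(k)\sup_{\zeta_k}\|\hat\ell^{(t)}_k\|^2_{\nabla^{-2}\phi_k(\zeta_k)}$ by analyzing a piecewise-continuous mirror-descent trajectory and bounding the discretization error.
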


Given an instance of $\*m$-\BAI, for $\eps,\delta \in (0,1)$, an $(\eps,\delta)$-PAC algorithm can output an arm whose mean loss is less than $\eps$ plus the mean of the optimal one with probability at least $1-\delta$. Using a reduction from $\*m$-\BAI to $\*m$-\MAB (\Cref{lem: reduction}), we obtain a PAC algorithm for $\*m$-\BAI:

\begin{theorem}\label{thm:ub-m-BAI}
	There exists an $(\eps,0.05)$-PAC algorithm for $(m_1,\dots,m_K)$-\BAI which pulls  
	\[
		T\le c\cdot \sum_{k=1}^K\frac{\log (m_k+1)}{\eps^2}
	\]
	arms where $c>0$ is a universal constant.
\end{theorem}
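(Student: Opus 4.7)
The plan is to plug the regret-minimization algorithm of \Cref{thm:ub-m-MAB} into the reduction of \Cref{lem: reduction} from $\*m$-\BAI to $\*m$-\MAB. Conceptually, the reduction is the standard online-to-batch conversion: run the given $\*m$-\MAB algorithm on the stochastic instance of $\*m$-\BAI for $T$ rounds, and output a single arm sampled from the empirical average $\bar p = \frac{1}{T}\sum_{t=1}^{T} p_t$ of the play distributions $p_t$. Since each loss is an i.i.d.\ draw of mean $\mu_i\in[0,1]$, the expected per-round loss of the learner at time $t$ equals $p_t \cdot \mu$, so the expected cumulative loss of the learner is the average played distribution dotted with the mean vector.

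To see why the reduction delivers the claimed sample complexity, let $\mu^\star = \min_i \mu_i$ and let $\hat\imath \sim \bar p$ denote the output arm. Taking the expectation of the regret bound in \Cref{thm:ub-m-MAB} over the random losses and dividing by $T$ yields
\[
\mathbb{E}\!\left[\mu_{\hat\imath}\right] - \mu^\star \;=\; \frac{1}{T}\sum_{t=1}^{T}\mathbb{E}\!\left[\mu_{I_t}\right] - \mu^\star \;\le\; \frac{c}{\sqrt{T}}\sqrt{\sum_{k=1}^{K}\log(m_k+1)}.
\]
Since $\mu_{\hat\imath}-\mu^\star\ge 0$, Markov's inequality guarantees that $\mu_{\hat\imath}-\mu^\star \le 20\cdot\frac{c}{\sqrt{T}}\sqrt{\sum_{k=1}^{K}\log(m_k+1)}$ with probability at least $0.95$. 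Choosing $T = \left\lceil 400\,c^2\sum_{k=1}^{K}\log(m_k+1)/\eps^2 \right\rceil$ makes the right-hand side at most $\eps$, which is exactly the bound advertised in the statement.

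There is no genuine technical obstacle here: the whole argument is an online-to-batch conversion plus Markov's inequality applied on top of the regret bound of \Cref{thm:ub-m-MAB}, and all the heavy lifting has already been done in proving that regret bound. The only conceptual subtlety worth flagging is why the confidence is pegged at the specific constant $0.95$ rather than an arbitrary $1-\delta$; this is precisely because Markov's inequality, rather than a sharper concentration inequality, is all that is available when converting an in-expectation regret guarantee into a high-probability best-arm guarantee. To reach general $\delta$ one would either need a high-probability regret bound for the underlying $\*m$-\MAB algorithm or an additional amplification step such as a repeat-and-majority-vote meta-procedure; for the stated target of a constant confidence, the present reduction is already tight against the matching lower bound $\Omega\!\tp{\sum_{k=1}^{K}\log(m_k+1)/\eps^2}$ announced in \Cref{sec: main results}.
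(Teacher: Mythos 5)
Your proposal is correct and follows essentially the same route as the paper: \Cref{lem: reduction} is exactly this online-to-batch conversion (run the $\*m$-\MAB algorithm of \Cref{thm:ub-m-MAB} for a fixed horizon, output an arm sampled from the empirical play frequencies, and apply Markov's inequality), instantiated with $T^*=\Theta\tp{\sum_{k}\log(m_k+1)/\eps^2}$. The only cosmetic differences are that the paper samples from the pull-count frequencies $T_i/T^*$ rather than the averaged play distributions and applies Markov to the realized regret followed by a union bound ($0.01+0.04$), whereas you apply Markov once to the nonnegative gap $\mu_{\hat\imath}-\mu^\star$; these are interchangeable and neither affects correctness.
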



Let $\!{Ber}(p)$ denote the Bernoulli distribution with mean $p$. We complement the above algorithm with the following lower bound:

\begin{theorem}\label{thm:lb-m-BAI}
	There exists an instance $\@H$ such that for every $\tp{\eps, 0.05}$-PAC algorithm $\+A$ of $(m_1,\dots,m_K)$-\BAI with $\eps\in \tp{0,\frac{1}{8}}$, the expected number of pulls $T$ of $\+A$ on $\@H$ satisfies 
	\[
		\E{T} \ge c'\cdot\sum_{k=1}^T\frac{\log (m_k+1)}{\eps^2},
	\]
	where $c'>0$ is a universal constant. Moreover, we can pick $\@H$ as the one in which each arm follows $\!{Ber}(\frac{1}{2})$.
  	
\end{theorem}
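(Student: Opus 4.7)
The plan is to localize the lower bound to each group and sum, exploiting that observations in different groups are independent under the all-$\mathrm{Ber}(1/2)$ reference $\@H$. For each group $k\in[K]$ and each $j\in[m_k]$, introduce the perturbation $\@H^{(k,j)}$ that changes only arm $(k,j)$'s distribution to $\mathrm{Ber}(1/2-2\epsilon)$. Because the gap is $2\epsilon$, arm $(k,j)$ is the unique $\epsilon$-optimal arm of $\@H^{(k,j)}$, so the $(\epsilon,0.05)$-PAC guarantee forces $\PP[\@H^{(k,j)}]{\hat a=(k,j)}\ge 0.95$. Writing $N_k$ for the number of pulls in group $k$ and $T=\sum_k N_k$, the goal reduces to showing $\mathbb{E}_{\@H}[N_k]\ge c''\log(m_k+1)/\epsilon^2$ for each $k$. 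Since pulls outside group $k$ are identically distributed under $\@H$ and $\@H^{(k,j)}$, the KL chain rule yields $\DKL(\PP[\@H]{\cdot}\,\|\,\PP[\@H^{(k,j)}]{\cdot})=\mathbb{E}_{\@H}[N_k]\cdot d(1/2,1/2-2\epsilon)\asymp\epsilon^2\mathbb{E}_{\@H}[N_k]$; naive data processing against the single event $\{\hat a=(k,j)\}$ only recovers $\Omega(1/\epsilon^2)$, so the $\log(m_k+1)$ factor must come from a multi-alternative $\chi^2$-based argument.

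For each $k$ with $m_k\ge 2$, I would argue by contradiction. Assume $\mathbb{E}_{\@H}[N_k]\le c_0\log m_k/\epsilon^2$ for a small absolute constant $c_0<1/160$, and let $\sigma$ be the first time either $\+A$ terminates or the group-$k$ pull count reaches $T_0=10\,\mathbb{E}_{\@H}[N_k]$; by Markov, $\PP[\@H]{N_k\ge T_0}\le 1/10$. Let $\bar{\mathbb P}_k=\tfrac{1}{m_k}\sum_j\PP[\@H^{(k,j)}]{\cdot}$ be the uniform mixture of the alternatives. The key observation is that the likelihood ratios $L_j=\mathrm d\PP[\@H^{(k,j)}]{\cdot}/\mathrm d\PP[\@H]{\cdot}$ are each a function only of arm $(k,j)$'s observations, so under the product Bernoulli structure of $\@H$ each $L_j$ is a $\PP[\@H]{\cdot}$-martingale, every cross product $L_jL_{j'}$ with $j\ne j'$ is a martingale, and $L_j^2$ is a submartingale — all with respect to the filtration indexed by group-$k$ pulls. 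Because the group-$k$ pull count is bounded by $T_0$ at $\sigma$, optional stopping gives
\[
\chi^2\!\bigl(\bar{\mathbb P}_k|_\sigma,\,\PP[\@H]{\cdot}|_\sigma\bigr)\;\le\;\frac{(1+16\epsilon^2)^{T_0}-1}{m_k}\;\le\;m_k^{160c_0-1},
\]
which is $o(1)$ in $m_k$ for $c_0<1/160$. Via $d_{TV}\le\tfrac{1}{2}\sqrt{\chi^2}$ the truncated total variation is also $o(1)$, so $\PP[\bar{\mathbb P}_k]{N_k\ge T_0}\le 1/10+o(1)$ and the truncated algorithm (which mimics $\+A$ up to time $\sigma$) retains average success $\PP[\bar{\mathbb P}_k]{\hat a=(k,J)}\ge 0.95-1/10-o(1)\ge 0.8$ under the uniform prior $J\in[m_k]$.

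Fano's inequality applied to the truncated transcript now forces $I(J;\text{obs}|_\sigma)\ge 0.8\log m_k-O(1)$, whereas the KL-method upper bound using $\PP[\@H]{\cdot}$ as the reference yields $I(J;\text{obs}|_\sigma)\le T_0\cdot d(1/2-2\epsilon,1/2)=O(c_0)\log m_k$; these inequalities are incompatible once $c_0$ falls below an absolute threshold, producing the contradiction. This establishes $\mathbb{E}_{\@H}[N_k]\ge c''\log(m_k+1)/\epsilon^2$ for $m_k$ above an absolute constant; the remaining small-$m_k$ cases (including $m_k=1$) follow from the standard single-alternative change-of-measure step that gives $\Omega(1/\epsilon^2)=\Omega(\log(m_k+1)/\epsilon^2)$. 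Summing produces $\mathbb{E}_{\@H}[T]=\sum_k\mathbb{E}_{\@H}[N_k]\ge c'\sum_k\log(m_k+1)/\epsilon^2$. I expect the main obstacle to be the $\chi^2$ calculation itself: verifying cleanly that the martingale and submartingale identities survive optional stopping at the random within-group stopping time $\sigma$, and that the resulting $\chi^2$-closeness is tight enough to transfer the Fano lower bound from the mixture $\bar{\mathbb P}_k$ back onto the reference instance $\@H$ without losing the constants that make the final contradiction go through.
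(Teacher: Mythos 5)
Your argument follows the same high-level ``direct-sum over groups under the all-$\!{Ber}(\frac12)$ instance'' decomposition as the paper, but your single-group lower bound takes a genuinely different route. The paper proves its per-group bound (\Cref{lem:block-BAI-lb}) by first working with Gaussian arms over a \emph{fixed} horizon $t^*$ (padding with virtual rounds, so no stopping-time issues), bounding $\DKL$ between the uniform mixture and the null via the cross-term computation of \Cref{lem: gaussian KL}, invoking Bretagnolle--Huber, then transferring to Bernoulli arms by a thresholding reduction, with bounded $m$ handled by the likelihood lemma in the appendix; you stay with Bernoulli arms throughout, truncate at a stopping time $\sigma$, bound the $\chi^2$-divergence of the mixture from $\@H$ by what is essentially the same second-moment calculation (cross terms equal to $1$, diagonal terms $(1+16\eps^2)^{T_0}$), and close with Fano plus the KL mutual-information bound. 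Your version avoids the Gaussian detour and the explicit hypothesis-testing-to-PAC reduction, at the price of the optional-stopping bookkeeping you flag; that bookkeeping does go through (normalize $L_j^2$ by $(1+16\eps^2)^{n_k(t)}$ and use Fatou for nonnegative supermartingales, cross products being nonnegative martingales), or you can sidestep it entirely by adopting the paper's fixed-horizon device in place of $\sigma$. For groups with $m_k\ge 2$ (large $m_k$ via the mixture argument, small $m_k$ via Pinsker against a single downward alternative) your proof is correct.

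The genuine gap is the claim that the per-group bound $\E{N_k}\ge c\log(m_k+1)/\eps^2$ holds for \emph{every} $k$, ``including $m_k=1$'', by a standard single-alternative change of measure. With your perturbation family (only downward shifts to $\!{Ber}(\frac12-2\eps)$) this is false for singleton groups: an algorithm may designate the unique arm of one singleton group as its default output whenever everything it observes looks fair, and then never pull that group under $\@H$; under $\@H^{(k,1)}$ it still outputs $(k,1)$ with probability close to $1$, so the change of measure yields no contradiction, precisely because $\PP[\@H]{\hat a=(k,1)}$ may itself be close to $1$. (For $m_k\ge 2$ this cannot happen, since the disjoint events $\set{\hat a=(k,1)}$ and $\set{\hat a=(k,2)}$ cannot both have probability near $1$ under $\@H$.) Consequently the final summation does not go through as written when some $m_k=1$ (and the statement also implicitly needs $\sum_k m_k\ge 2$). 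Two standard fixes: (i) the paper's device in \Cref{lem: block-BAI-lb} of pairing and merging singleton groups, which costs only a factor $2$ in the sum and is also what makes the restricted-family (``Moreover'') version true; or (ii) since you only need the full PAC guarantee, add for singleton groups an \emph{upward} alternative $p_{(k,1)}=\frac12+2\eps$: a PAC algorithm must then output an arm outside group $k$, so if group $k$ were under-pulled both alternatives would be statistically close to $\@H$ and two disjoint output events would need probability near $1$ under $\@H$, restoring the $\Omega(1/\eps^2)$ bound. With either fix your proof is complete.
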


Using the reduction from $\*m$-\BAI to $\*m$-\MAB (\Cref{lem: reduction}) again, we obtain the lower bound for $\*m$-\MAB.

\begin{theorem}\label{thm:lb-m-MAB}
	For any algorithm $\+A$ of $(m_1,\dots,m_k)$-\MAB, for any sufficiently large $T>0$, there exists a loss sequence $\ell^{(0)},\ell^{(1)},\dots,\ell^{(T-1)}$ such that the regret of $\+A$ in $T$ rounds is at least
	\[
		c'\cdot \sqrt{T\cdot\sum_{k=1}^K \log (m_k+1)},
	\]
	where $c'>0$ is a universal constant.
\end{theorem}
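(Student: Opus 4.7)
The plan is to combine the $\*m$-\BAI sample-complexity lower bound (Theorem \ref{thm:lb-m-BAI}) with the reduction from $\*m$-\BAI to $\*m$-\MAB captured in Lemma \ref{lem: reduction}, using the latter in its contrapositive form. Intuitively, if some MAB algorithm had regret asymptotically smaller than $\sqrt{T\sum_k\log(m_k+1)}$, then the standard online-to-batch conversion would produce a PAC algorithm for $\*m$-\BAI whose sample budget contradicts Theorem \ref{thm:lb-m-BAI}.

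Fix any algorithm $\+A$ for $(m_1,\dots,m_K)$-\MAB, let $R_\+A(T)$ denote its worst-case expected regret over $T$ rounds, and set $L=\sum_{k=1}^K\log(m_k+1)$. I would first discharge the degenerate case $R_\+A(T)\ge T/160$: since $T/160\ge c'\sqrt{TL}$ once $T\ge 160^2 L$, the claimed bound is already trivial. So assume $R_\+A(T)<T/160$, and then invoke Lemma \ref{lem: reduction} in the following concrete form: running $\+A$ on a stochastic loss sequence generated by any BAI instance and outputting a uniformly random arm $J$ from the pulls $I_0,\dots,I_{T-1}$ yields, via the regret bound $\sum_{t=0}^{T-1}\E{\mu_{I_t}-\mu^*}\le R_\+A(T)$ and Markov's inequality, an $(\eps,0.05)$-PAC algorithm $\+B$ for $\*m$-\BAI with $\eps=20R_\+A(T)/T<1/8$ and deterministic sample budget $T$.

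Applying Theorem \ref{thm:lb-m-BAI} on the instance $\@H$ in which every arm follows $\!{Ber}(\tfrac{1}{2})$ then forces $T\ge c'L/\eps^2$, and substituting the value of $\eps$ rearranges to $R_\+A(T)\ge \sqrt{c'LT}/20$; the definition of worst-case regret produces a specific loss sequence realising this bound. No serious obstacle appears, since the reduction itself is Lemma \ref{lem: reduction} and the remaining argument is a one-line algebraic rearrangement; the only subtlety is ensuring $\eps\in(0,1/8)$ so that Theorem \ref{thm:lb-m-BAI} may be invoked, which is exactly what the preliminary case split on $R_\+A(T)/T$ guarantees.
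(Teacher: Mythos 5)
Your proposal is correct and follows essentially the same route as the paper: the paper also proves this theorem by contradiction, converting a low-regret $\*m$-\MAB algorithm into an $(\eps,0.05)$-PAC algorithm for $\*m$-\BAI via \Cref{lem: reduction} (whose construction is exactly your ``output a pull-frequency-weighted random arm plus Markov'' argument) with $\eps=\Theta\bigl(\sqrt{\sum_k\log(m_k+1)/T}\bigr)<\tfrac18$ for large $T$, and then contradicting the all-$\!{Ber}(\tfrac12)$ lower bound of \Cref{thm:lb-m-BAI} (\Cref{lem: block-BAI-lb}). The only cosmetic difference is that you work with the worst-case adversarial regret and invoke \Cref{thm:lb-m-BAI} as a black box, while the paper restricts to the stochastic family $\+H$ of $\!{Ber}(\tfrac12)/\!{Ber}(\tfrac12-\eps)$ instances using the ``Moreover'' clauses of \Cref{lem: reduction} and \Cref{lem: block-BAI-lb}; this changes nothing essential.
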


Our results generalize to the setting of \emph{bandit with graph feedback}. Let $G=(V,E)$ be a directed graph with self-loop on each vertex. Let $V_1,\dots,V_K \subseteq V$ be subsets of vertices. We say that they form a $(V_1,\dots,V_K)$-\emph{clique cover} of $G$ if each induced subgraph $G[V_k]$ for $k\in [K]$ is a clique and $\bigcup_{k\in [K]} V_k = V$.

\begin{corollary}\label{cor:ub-graph-bandit}
	Let $G$ be a feedback graph with a self-loop on each vertex. If $G$ contains a $(V_1,\dots,V_K)$-clique cover where $\abs{V_k}=m_k$ for every $k\in [K]$, then the minimax regret of bandit with graph feedback $G$ is at most
	\[ 
		c\cdot \sqrt{T\cdot\sum_{k=1}^K\log (m_k+1)}
	\]
\end{corollary}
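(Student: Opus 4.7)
The plan is to reduce the graph feedback instance directly to an instance of $\*m'$-\MAB with $m'_k \le m_k$, and then invoke \Cref{thm:ub-m-MAB}. The key observation is that a clique cover can always be refined to a clique \emph{partition} without blowing up the parameters we care about, because only $\log(m_k+1)$ enters the bound.

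Concretely, I would define $V'_k := V_k \setminus \bigcup_{j<k} V_j$ for $k \in [K]$ and set $m'_k := |V'_k|$. The sets $V'_1,\dots,V'_K$ partition $V$, each $V'_k$ is a subset of the clique $V_k$ (hence itself induces a clique in $G$), and $m'_k \le m_k$. Discarding the indices with $m'_k = 0$, what remains is a genuine $\*m'$-\MAB instance. Next I would argue that any algorithm for $\*m'$-\MAB can be executed on the graph feedback instance: when the algorithm wants to pull some arm $v \in V'_k$, the graph-feedback player pulls that same $v$; since $V_k$ is a clique containing $v$, the observed losses include all of $V_k$, and in particular all of $V'_k \subseteq V_k$. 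The algorithm simply ignores the (possibly extra) losses revealed for vertices outside $V'_k$. This faithfully simulates the $\*m'$-\MAB feedback model on the same loss sequence, so the regret of the simulated algorithm on $G$ equals its regret on the $\*m'$-\MAB instance.

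Applying \Cref{thm:ub-m-MAB} to the simulated $\*m'$-\MAB algorithm yields regret at most
\[
    c\cdot\sqrt{T\cdot\sum_{k=1}^K\log(m'_k+1)} \;\le\; c\cdot\sqrt{T\cdot\sum_{k=1}^K\log(m_k+1)},
\]
where the inequality uses $m'_k \le m_k$ and monotonicity of $\log$. Since the empty parts of the partition contribute $\log(0+1)=0$, they drop out harmlessly and the bound is the one claimed. There is no real obstacle here: the only thing to verify carefully is that refining the (possibly overlapping) clique cover to a partition cannot increase any $m_k$, and that observing a superset of the required losses is always at least as useful as observing exactly the required losses, which is standard for bandit algorithms whose analysis only uses the assumed feedback graph.
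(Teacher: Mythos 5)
Your proposal is correct and follows essentially the same route as the paper: the paper also restricts the feedback to a disjoint union of cliques (throwing away edges between cover parts, so the player only loses information) and then invokes Algorithm~\ref{algo:main-algo} via \Cref{thm:ub-m-MAB}. Your explicit refinement of the possibly overlapping cover into a partition with $m'_k\le m_k$ is a slightly more careful rendering of a step the paper glosses over, but it is the same argument, not a different one.
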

for some universal constant $c>0$.


\bigskip
Our lower bounds generalize to bandit with graph feedback as well. The terms ``strongly observable feedback graphs'' and ``weakly observable feedback graphs'' are defined in \Cref{sec: prelim}.

\begin{theorem}\label{thm:lb-graph-bandit-strongly}
	Let $G=(V,E)$ be the feedback graph. Assume that there exist $K$ \emph{disjoint} sets $S_1,\dots ,S_K\subseteq V$ such that 
	\begin{itemize}
		\item each $G[S_k]$ is a strongly observable graph with a self-loop on each vertex;
		\item there is no edge between $S_i$ and $S_j$ for any $i\ne j$.
	\end{itemize}
	Then for any algorithm $\+A$ and any sufficiently large time horizon $T>0$, there exists some loss sequence on which the regret of $\+A$ is at least $c'\cdot \sqrt{T\cdot\sum_{k=1}^K\log {(\abs{S_k}+1)}}$ for some universal constant $c'>0$.
\end{theorem}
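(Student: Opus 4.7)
My plan is to mirror the route from \Cref{thm:lb-m-BAI} to \Cref{thm:lb-m-MAB}: first establish a graph-theoretic extension of the best-arm-identification lower bound, then apply the reduction in \Cref{lem: reduction} to transfer it into a regret lower bound. The graph BAI version I aim to prove is: under the hypotheses of \Cref{thm:lb-graph-bandit-strongly}, any $(\eps,0.05)$-PAC algorithm on the feedback graph $G$, run on the instance where every arm is $\!{Ber}(\frac{1}{2})$, must make at least $\Omega\tp{\sum_{k=1}^K \log(\abs{S_k}+1)/\eps^2}$ pulls in expectation.

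The core of the argument is a group-by-group information theoretic analysis. For each group $k$, I consider the perturbation family $\{H_{k,a}\}_{a\in S_k}$ in which the mean of arm $a$ is shifted to $\frac{1}{2}-2\eps$. The two structural hypotheses---disjointness of the $S_k$ and absence of edges between distinct groups---imply that pulls performed outside $S_k$ produce observations whose joint law is identical under $H_0$ and under every $H_{k,a}$. Consequently, restricted to $S_k$, the algorithm must behave, using only pulls in $S_k$, as a PAC identifier among the $\abs{S_k}+1$ hypotheses $H_0, H_{k,1}, \dots, H_{k,\abs{S_k}}$. Applying a Fano-type inequality to this $(\abs{S_k}+1)$-way test, together with the KL bound $D(\!{Ber}(\frac{1}{2})\,\|\,\!{Ber}(\frac{1}{2}-2\eps))=O(\eps^2)$, forces the expected number of in-group pulls $\tau_k$ to satisfy $\E{\tau_k}=\Omega(\log(\abs{S_k}+1)/\eps^2)$; strong observability of $G[S_k]$ enters only to ensure that the per-pull information content is no greater than in the clique (full-information) case, where the $\log(\abs{S_k}+1)/\eps^2$ bound is classical. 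Summing over $k$ yields the claimed BAI sample-complexity bound.

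Transferring this to a regret lower bound via \Cref{lem: reduction}---whose proof is generic in the feedback graph, relying only on converting a low-regret MAB algorithm into a PAC BAI algorithm by a most-pulled-arm readout---produces the desired bound $\Omega\tp{\sqrt{T\sum_{k=1}^K \log(\abs{S_k}+1)}}$ for any sufficiently large $T$. The main obstacle is the group-wise information theoretic step: since the algorithm may adaptively interleave pulls across groups, I must rigorously justify that the conditional distribution of the observations inside $S_k$, given the full history of actions and feedback outside $S_k$, coincides under $H_0$ and every $H_{k,a}$. The no-cross-edge hypothesis is precisely what makes this conditional coupling valid, and its absence is what would obstruct a naive extension of the proof to graphs whose strongly observable components share edges.
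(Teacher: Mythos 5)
There is a genuine gap, and it sits exactly where your plan puts its weight. The theorem's hypotheses only forbid edges \emph{between} $S_i$ and $S_j$ for $i\ne j$; they do not forbid vertices of $V\setminus\bigcup_k S_k$ having out-edges into the $S_k$'s, and this extra generality is the whole point of the statement (it is what lets the paper apply it to general strongly observable graphs). Consequently your key claim --- that pulls performed outside $S_k$ have the same joint law under $H_0$ and under every perturbation $H_{k,a}$, so that each group can be analyzed as a stand-alone $(\abs{S_k})$-\BAI problem --- is false in general: a single vertex $v\in V\setminus\bigcup_k S_k$ may observe arms in several (even all) groups, and pulling it repeatedly distinguishes the hypotheses. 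In fact the intermediate statement you aim for is itself false: if such a ``revealing'' vertex exists, an $(\eps,0.05)$-PAC algorithm on $G$ can identify an $\eps$-optimal arm with $O\tp{\log N/\eps^2}$ pulls, far below $\Omega\tp{\sum_{k=1}^K\log(\abs{S_k}+1)/\eps^2}$. So no pull-counting (sample-complexity) lower bound for graph-\BAI can be the sole engine of the proof; the hardness must be expressed at the level of regret, where informative pulls outside the groups are made expensive. This is precisely what the paper does: it embeds an $\*m$-\MAB hard instance (arms in the $S_k$'s carrying $\!{Ber}(\tfrac12)$ or $\!{Ber}(\tfrac12-\eps)$ losses) into $G$ by assigning loss identically $1$ to every arm in $V\setminus\bigcup_k S_k$, simulates the graph algorithm by an $\*m$-\MAB algorithm (replacing each pull of an outside vertex by cheap in-group pulls that recover the observed feedback, at cost $K\eps\ll 1$ versus the constant regret the outside pull itself incurs), and then invokes the $\*m$-\MAB regret lower bound of \Cref{lem:lb-m-MAB}, which is where \Cref{lem: reduction} and the \BAI lower bound \Cref{lem: block-BAI-lb} actually enter.

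Two smaller remarks. First, your final step (transferring a \BAI lower bound to regret via \Cref{lem: reduction}) is sound in spirit and is how the paper handles the pure $\*m$-\MAB case, but after the fix above the object being lower-bounded is the $\*m$-\MAB instance with the outside arms pinned at loss $1$, not a graph-\BAI pull count. Second, in the special case $V=\bigcup_k S_k$ (or, more generally, no edges from $V\setminus\bigcup_k S_k$ into any $S_k$) your group-by-group argument does capture the right mechanism --- it is essentially the direct-sum reduction behind \Cref{lem: block-BAI-lb}, with strong observability inside a group only ever helping the algorithm less than full information --- but one still has to handle adaptive interleaving carefully (the paper does so by planting fair coins in all other groups and reducing to the single-group bound of \Cref{lem: block-BAI-lb-simple}), and this special case falls short of the theorem as stated.
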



The following lower bound for weakly observable feedback graphs confirms a conjecture in~\cite{HZ22} and implies the optimality of several regret bounds established there, e.g., when the feedback graph is the disjoint union of loopless complete bipartite graphs. The notion of $t$-packing independent set is defined in \Cref{sec: prelim}.

\begin{theorem}\label{thm:lb-graph-bandit-weakly}
	Let $G=(V,E)$ be the feedback graph. Assume that $V$ can be partitioned into $K$ disjoint sets $V=V_1 \cup V_2 \cup \dots\cup V_K$ such that 
	\begin{itemize}
		\item for every $k\in [K]$, each $G[V_k]$ is observable;
		\item for every $k\in [K]$, there exists a $t_k$-packing independent set $S_k$ in $G[V_k]$ such that every vertex in $S_k$ does not have a self-loop;
		\item there is no edge from $V_i$ to $S_j$ for any $i\ne j$ in $G$.
	\end{itemize}
	Then for any algorithm $\+A$ and any sufficiently large time horizon $T>0$, there exists some loss sequence on which the regret of $\+A$ with feedback graph $G$ is at least $c'\cdot T^{\frac{2}{3}}\cdot \tp{\sum_{k=1}^K\max\set{\log {\abs{S_k}}, \frac{\abs{S_k}}{t_k}}}^{\frac{1}{3}}$ for some universal constant $c'>0$.
\end{theorem}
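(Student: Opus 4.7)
My plan is to extend the \BAI-style lower bound underlying \Cref{thm:lb-m-BAI} to the weakly observable setting, working separately on each component $V_k$ and aggregating via the disjointness of the packing sets $S_k$. I fix a parameter $\eps>0$ to be optimized, and consider a family of stochastic instances $\mathcal{H}_{k^\star,i^\star}$ indexed by $k^\star\in[K]$ and $i^\star\in S_{k^\star}$ in which arm $i^\star$ emits losses $\mathrm{Ber}(1/2-\eps)$ and every other arm emits $\mathrm{Ber}(1/2)$. Under $\mathcal{H}_{k^\star,i^\star}$, the expected regret of any algorithm is at least $\eps\cdot\mathbb{E}[T-N_{i^\star}]$, where $N_{i^\star}$ is the number of pulls of $i^\star$, so it suffices to lower bound the expected number of non-$i^\star$ pulls, averaged over a uniform prior on $(k^\star,i^\star)$.

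The core per-component step uses a KL bookkeeping argument. Since no edge enters $S_k$ from $V_j$ with $j\neq k$, pulls outside $V_k$ contribute no information to distinguishing $\mathcal{H}_{k,i}$ from the baseline $\mathcal{H}_0$ in which all arms emit $\mathrm{Ber}(1/2)$. Because $S_k$ is an independent set whose vertices have no self-loops, pulls of $i$ itself also observe no member of $S_k$, so every informative observation about $i^\star$'s identity must come from pulling a dominator in $V_k\setminus\{i^\star\}$, which under $\mathcal{H}_{k,i^\star}$ incurs an $\eps$-regret per pull. The $t_k$-packing property further implies that each such pull distributes KL mass among at most $t_k$ of the $|S_k|$ candidate identities. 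A Fano-style inequality handles the large-$t_k$ regime, in which $\log|S_k|$ dominates, while a Le Cam / change-of-measure argument with a uniform prior over $i\in S_k$ handles the small-$t_k$ regime, in which $|S_k|/t_k$ dominates; together they yield the bound $\Omega\!\left(\max\{\log|S_k|,|S_k|/t_k\}/\eps^2\right)$ on the expected number of non-$i^\star$ pulls required in $V_k$.

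Summing over $k$ and multiplying by the per-informative-pull cost $\eps$ converts these pull counts into a regret lower bound of order $\eps^{-1}\sum_k \max\{\log|S_k|,|S_k|/t_k\}$. Balancing this against the trivial upper bound $\eps T$ on the total exploration cost yields the optimal choice $\eps=\Theta\!\left((T^{-1}\sum_k \max\{\log|S_k|,|S_k|/t_k\})^{1/3}\right)$, which reproduces the claimed $T^{2/3}$ rate, and a standard Yao-type argument converts the stochastic construction into the existence of a deterministic worst-case loss sequence. The main obstacle will be executing a unified per-component analysis that simultaneously recovers both terms of the max under a single change of measure, carefully tracking observations weighted by the $t_k$-packing constraint; conceptually, the $T^{2/3}$ blow-up over the $T^{1/2}$ rate of \Cref{thm:lb-graph-bandit-strongly} is forced by the absence of self-loops on $S_k$, which prevents the optimal arm from providing free information about its own identity and so causes every informative pull to be paid for at regret rate $\eps$.
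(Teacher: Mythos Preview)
Your construction has a genuine gap that prevents it from reaching the $T^{2/3}$ rate. In your instances $\mathcal{H}_{k^\star,i^\star}$, every arm other than $i^\star$ has mean $1/2$, so pulling a dominator in $V_{k^\star}\setminus S_{k^\star}$ costs only $\eps$ per pull, exactly as you say. But then your own accounting gives an exploration-regret lower bound of order $\eps^{-1}\sum_k\max\{\cdot\}$, and balancing that against the exploitation cost $\eps T$ yields $\eps=(\sum_k\max\{\cdot\}/T)^{1/2}$ and regret $\Theta\bigl((T\sum_k\max\{\cdot\})^{1/2}\bigr)$, not $T^{2/3}$. Your claimed optimizer $\eps=\Theta((T^{-1}\sum_k)^{1/3})$ would only come out of the balance $\eps^{-2}\sum_k=\eps T$, which implicitly charges a \emph{constant} cost per informative pull---inconsistent with your construction.

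The fix, and what the paper does, is to make dominators genuinely expensive: set the loss of every arm in $V\setminus S$ identically equal to $1$, while arms in $S$ are Bernoulli coins with mean $1/2$ (or $1/2-\eps$ on one special arm). Now each informative pull costs $\Theta(1)$ in regret, the exploration contribution is $\Theta(\eps^{-2}\sum_k\max\{\cdot\})$, and balancing against $\eps T$ produces the correct $T^{2/3}$ exponent. Your diagnosis in the last paragraph is almost right---the absence of self-loops on $S_k$ is indeed what forces exploration through non-$S$ vertices---but the point is not that informative pulls cost $\eps$; it is that the adversary is free to make them cost $\Theta(1)$ because the observing vertices are distinct from the candidate optima.

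Separately, the paper handles the two regimes of the $\max$ not by a Fano/Le~Cam split but by a single reduction to $\*m'$-\BAI: when $\log|S_k|$ dominates it treats $S_k$ as one group, and when $|S_k|/t_k$ dominates it subdivides $S_k$ into $\Theta(|S_k|)$ tiny groups, invoking the instance-specific lower bound of \Cref{lem: block-BAI-lb} on the all-$\mathrm{Ber}(1/2)$ instance for each group and using the $t_k$-packing to bound how many \BAI pulls one graph-bandit pull can simulate. Your direct information-theoretic route is plausible once the construction is repaired, but you will still need to carry out the per-block argument carefully---in particular the $|S_k|/t_k$ term does not fall out of a vanilla Fano bound and requires tracking how the $t_k$-packing constraint limits the KL accrued per pull.
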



\Cref{thm:lb-graph-bandit-weakly} implies tight regret lower bounds for several weakly observable graphs. We summarize the minimax regret for some feedback graphs, weakly or strongly observable, in \Cref{tab: result-comp}.
\begin{table*}[htbp]
	\centering
	\caption{Minimax Regret Bound on Various Feedback Graphs}
	\label{tab: result-comp}
  \begin{threeparttable}
\begin{tabular}{m{4cm}<{\centering}m{5cm}<{\centering}m{6cm}<{\centering}}
	\toprule
	{Graph Type} & {Previous Result} & {This Work}\\
	\midrule
	General strongly observable graphs with self-loops & \shortstack{$O\tp{\sqrt{\alpha T}\log NT}$\\$\Omega\tp{\sqrt{\alpha T}}$}\tnote{1} & \shortstack{$O\tp{\sqrt{T\sum_{k=1}^K \log m_k}}$ \tnote{2}\\ See \Cref{thm:lb-graph-bandit-strongly} for the lower bound} \\
	Disjoint union of $K$ cliques  & \shortstack{$O\tp{\sqrt{K T}\log NT}$ \\ $\Omega\tp{\sqrt{K T}}$} & \shortstack{$\Theta\tp{\sqrt{T\sum_{k=1}^K \log m_k}}$} \\
	General weakly observable graphs & $\Omega\tp{T^{\frac{2}{3}}\max\set{\frac{|S|}{k},\log \abs{S}}^{\frac{1}{3}}}$ \tnote{3} & $\Omega\tp{T^{\frac{2}{3}}\tp{\sum_{k=1}^K\max\set{\log {\abs{S_k}}, \frac{\abs{S_k}}{t_k}}}^{\frac{1}{3}}}$ \\
	Disjoint union of $K$ loopless bipartite graphs & $\Omega\tp{T^{\frac{2}{3}}\tp{\log N}^{\frac{1}{3}}}$ & $\Omega\tp{T^{\frac{2}{3}} \tp{\sum_{k=1}^K \log m_k}^{\frac{1}{3}}}$ \\
	\bottomrule
\end{tabular}
\begin{tablenotes}
	\footnotesize
	\item[1] Here $\alpha$ is the independence number of the graph.
	\item[2] Here $K$ is the clique cover number of the graph and $m_1,m_2,\dots m_K$ are the size of the $K$ cliques respectively.
	\item[3]  Here $S$ is a $t$-packing independent set of the graph. $S_k$ and $t_k$ are defined in  \Cref{thm:lb-graph-bandit-weakly}. 
	\item[4] Previous results are from~\cite{ACDK15},~\cite{ACGMMS17} and~\cite{CHLZ21}.
  \end{tablenotes}
\end{threeparttable}
\end{table*}


\subsection{Overview of Technique} \label{sec: tech}

We note that a simple reduction (\Cref{lem: reduction}) implies that any algorithm for $\*m$-\MAB can be turned into a PAC algorithm for $\*m$-\BAI. As a result, \Cref{thm:ub-m-MAB,thm:ub-m-BAI,thm:lb-m-BAI,thm:lb-m-MAB} follow from a minimax regret upper bound for $\*m$-\MAB and a lower bound for $\*m$-\BAI.

\subsubsection{Upper bounds for \texorpdfstring{$\*m$}~-\MAB}

We design a new two-stage algorithm (Algorithm~\ref{algo:main-algo}) to establish an upper bound for $\*m$-\MAB. The algorithm is similar to the one used in~\cite{HZ22} to study weakly observable graphs with a few tweaks to incorporate our new analysis.


The algorithm maintains a distribution over $K$ groups and for each group, it maintains a distribution for arms in that group. In each round of the game, the algorithm pulls an arm in a two-stage manner: First pick the group according to the distribution over groups and then pick the arm in that group following the distribution in the group. At the end of each round, all distributions are updated in the manner similar to \emph{online stochastic mirror descent} (OSMD) with carefully designed loss vectors and various potential functions.

Our main technical contribution is a novel analysis of this two-stage algorithm. We design auxiliary two-stage \emph{piecewise continuous processes} whose regret is relatively easy to analyze. Then we view our algorithm as a discretization of the process and bound the accumulated discretization errors. 

Since the notion of $\*m$-\MAB generalizes both learning with expert advice and multi-armed bandit, we remark that our analysis of Algorithm~\ref{algo:main-algo} can specialize to an analysis of both ordinary mirror descent (MD) algorithm and OSMD algorithm. We believe that the viewpoint of discretizing a piecewise continuous process is more intuitive than the textbook analysis of OSMD and may be of independent pedagogical interest. 


\subsubsection{Lower bounds for \texorpdfstring{$\*m$}~-\BAI}

Our lower bound for the number of rounds in an $(\eps,0.05)$-PAC algorithm for $\*m$-\BAI where $\*m=(m_1,\dots,m_K)$ is 
\[
	\Omega\Big(\sum_{k=1}^K \frac{\log (m_k+1)}{\eps^2}\Big),
\]
which is the sum of lower bounds on each $(m_k)$-\BAI instance. To achieve this, we show that the instance where all arms are $\!{Ber}(\frac{1}{2})$ is in fact a universal hard instance in the sense that every $(\eps,0.05)$-PAC algorithm requires $\Omega\Big(\sum_{k=1}^K \frac{\log (m_k+1)}{\eps^2}\Big)$ to identify. Via a reduction of ``direct-sum'' flavor, we show that every $(\eps,0.05)$-PAC algorithm, when applied to this instance, must successfully identify that each group consists of $\!{Ber}(\frac{1}{2})$ arms. As a result, the lower bound is the sum of the lower bounds for each ``all $\!{Ber}(\frac{1}{2})$'' $(m_k)$-\BAI instance. 

We then prove the lower bound for ``all $\!{Ber}(\frac{1}{2})$'' $(m)$-\BAI instance for every $m\ge 2$. We use $\@H_0^{(m)}$ to denote this instance. The $\@H_0^{(m)}$ specified lower bound is obtained by constructing another $m$ instances $\@H_1^{(m)},\dots,\@H_m^{(m)}$ and compare the distribution of losses generated by $\@H_0^{(m)}$ and the distribution of losses generated by a \emph{mixture} of $\@H_1^{(m)},\dots,\@H_m^{(m)}$. For technical reasons, we first prove the lower bound when all arms are Gaussian and reduce the Gaussian  arms to Bernoulli arms.

\subsection{Organization of the Paper}
In this paper, we focus on the $\*m$-MAB and the $\*m$-\BAI and provide a fine-grained analysis to achieve  tight bounds for both problems. The paper is organized in the following way. We outline our main results in \Cref{sec: main results} and introduce the preliminaries in \Cref{sec: prelim}. A two-stage optimal algorithm for $\*m$-\MAB is given in \Cref{sec: upper bound}, along with continuous-time and discretized analysis. We then generalize this result to bandit with strongly observable graphs in \Cref{subsec:strongly-ub}. We also construct an $(\eps,0.05)$-\PAC algorithm for $\*m$-\BAI which terminates in bounded rounds in \Cref{subsec:bai_ub} via a reduction to $\*m$-\MAB problems. 

In \Cref{sec: BAI lb}, we derive a corresponding lower bound for $\*m$-\BAI. Based on the results in \Cref{sec: BAI lb}, we provide a regret lower bound for $\*m$-\MAB in \Cref{subsec: regret-lb-MAB} which matches the upper bound in \Cref{sec: upper bound}. We also prove the lower bounds for bandit with strongly and weakly observable feedback graphs in \Cref{subsec: regret-lb-strongly} and \Cref{sec: weakly-regret-lb} respectively. The result on weakly observable graphs solves an open problem in~\cite{HZ22}.

\subsection{Related Works}
The bandit feedback setting as an online decision problem has received considerable attention. The work of~\cite{AB09} first provided a tight bound for the bandit feedback setting, while the full information feedback case has been well studied in~\cite{FS97, HKW95}. Building upon these works,~\cite{MS11} introduced an interpolation between these two extremes and generalized the feedback of the classic bandit problem to a graph structure. Several prior studies, such as~\cite{ACDK15, ZL19, CHLZ21, HZ22}, have proposed various graph parameters to characterize the factors that influence regret. However, the algorithms proposed in these works for more general graphs do not yield a tight bound in our specific setting.

The pure exploration version of the bandit problem, known as the \emph{best arm identification} (\BAI) problem, has also received significant attention in the literature (\cite{EMM02, MT04, BMS09,ABM10, KKS13, CLQ17}). While the \BAI problem may appear deceptively simple, determining the precise bound for \BAI under the bandit feedback setting remains an open question. However, for the problem of identifying an $\eps$-optimal arm with high probability,~\cite{EMM02} established a tight bound for the bandit feedback setting, while the bound for the full feedback model is relatively straightforward (see e.g.  \cite{CHLZ21}).

\subsubsection{Comparison with~\cite{EECCB23}}

The very recent work of~\cite{EECCB23} studied interpolation of learning with experts and multi-armed bandit as well from a different perspective. They proved an $O\tuple{\sqrt{T\alpha(1+\log \tuple{N/\alpha})}}$ upper bound for the minimax regret of bandit with strongly feedback graph $G$ where $\alpha$ is the \emph{independence number} of $G$. The parameter is in general \emph{not} comparable with clique covers used in this work for feedback graphs. Particularly on an $\*m$-\MAB instance where $\*m=(m_1,\dots,m_K)$, the independence number is $K$ and therefore their upper bound becomes to $O\tp{\sqrt{TK\log(N/K)}}$ while our results showed that the minimax regret is indeed $\Theta\tp{\sqrt{T\sum_{k=1}^K\log (m_k+1)}}$. To see the difference, assume $K=\lfloor\log N\rfloor$ and $\*m=(1,1,\dots,1,N-K+1)$, then the minimax regret is $\Theta\tp{\sqrt{T\log N}}$ while the upper bound in~\cite{EECCB23} is $O\tp{\sqrt{T}\log N}$.

\section{Preliminaries}\label{sec: prelim}
In this section, we formally define the notations used and introduce some preparatory knowledge that will help in understanding this work.

\subsection{Mathematical Notations} 
Let $n$ be a non-negative integer. We use $[n]$ to denote the set $\set{1,2,\dots,n}$ and $\Delta_{n-1}=\set{\*x\in\bb R_{\geq 0}^n:\sum_{i=1}^n \*x(i)=1}$ to denote the $n-1$ dimensional standard simplex where $\bb R_{\geq 0}$ is the set of all non-negative real numbers. For a real vector $\*x\in \bb R^n$, the $i$-th entry of $\*x$ is denoted as $\*x(i)$ for every $i\in [n]$. We define $\*e^{[n]}_i$ as the indicator vector of the $i$-th coordinate such that $\*e^{[n]}_i(i)=1$ and $\*e^{[n]}_i(j)=0$ for all $j\neq i$ and $j\in[n]$. We may write $\*e^{[n]}_i$ as $\*e_i$ if the information on $n$ is clear from the context.

Given two vectors $\*x,\*y\in \bb R^n$, we define their inner product as $\inner{\*x}{\*y}=\sum_{i=1}^n \*x(i)\*y(i)$. For any $a,b\in \bb R$, let $[a,b]=\set{c\in \bb R\mid \min\set{a,b}\le c\le \max\set{a,b}}$ be the interval between $a$ and $b$. For any $\*x,\*y\in \bb R^n$, we say $\*y\geq \*x$ if $\*y(i)\geq \*x(i)$ for every $i\in[n]$. Then we can define the rectangle formed by $\*x$ and $\*y$: $\rect(\*x,\*y)=\set{\*z\in\bb R^n:\*y\geq \*z\geq \*x}$.

For any positive semi-definite  matrix $M\in \bb R^{n\times n}$, let $\|\*x\|_M=\sqrt{\*x^{\!T}M\*x}$ be the norm of $\*x$ with respect to $M$. Specifically, we abbreviate $\|\*x\|_{\tp{\nabla^2 \psi}^{-1}}$ as $\|\*x\|_{\nabla^{-2}\psi}$ where $\nabla^2 \psi$ is the Hessian matrix of a convex function $\psi$.

Let $F:\bb R^n\to \bb R$ be a convex function which is differentiable in its domain $\dom(F)$. Given $\*x,\*y \in\dom(F)$, the Bregman divergence  with respect to $F$ is defined as $B_F(\mathbf{x}, \mathbf{y})=F(\mathbf{x})-F(\mathbf{y})-\inner{\*x-\*y}{\nabla F(\*y)}$. 
Given two measures $\*{P}_1$ and $\*{P}_2$ on the same measurable space $(\Omega, \+F)$, the KL-divergence between $\*{P}_1$ and $\*{P}_2$ is defined as $\DKL\tp{\*{P}_1, \*{P}_2}=\sum_{\omega\in \Omega}\PP[1]{\omega}\log \frac{\PP[1]{\omega}}{\PP[2]{\omega}}$ if $\Omega$ is discrete or $\DKL\tp{\*{P}_1, \*{P}_2}=\int_\Omega\log \frac{\PP[1]{\omega}}{\PP[2]{\omega}}\dd{\PP[1]{\omega}}$ if $\Omega$ is continuous provided $\*P_1$ is absolutely continuous with respect to $\*P_2$.

\subsection{Graph Theory}
Let $G=(V,E)$ be a directed graph where $\abs{V}=N$. We use $(u,v)$ to denote the directed edge from vertex $u$ to vertex $v$. For any $U\subseteq V$, we denote the subgraph induced by $U$ as $G[U]$. For $v\in V$, let $\Nin(v)\defeq\set{u\in V\cmid (u,v)\in E}$ be the set of in-neighbors of $v$ and $\Nout(v)\defeq\set{u\in V\cmid (v,u)\in E}$ be the set of out-neighbors. If the graph is undirected, we have $\Nin(v)=\Nout(v)$, and we use $\-{N}(v)$ to denote the neighbors for brevity. We say $S\subseteq V$ is an independent set of $G$ if for every $v\in S$, $\set{u\in S\mid u\neq v,u\in \Nin(v)\cup\Nout(v)}=\emptyset$. The maximum independence number of $G$ is denoted as $\alpha(G)$ and abbreviated as $\alpha$ when $G$ is clear from the context. Furthermore, we say an independent set $S$ is a $t$-packing independent set if and only if for any $v\in V$, there are at most $t$ out-neighbors of $v$ in $S$, i.e., $\abs{\Nout(v)\cap S}\le t$. 
We say the subsets $V_1,\dots,V_K \subseteq V$ form a $(V_1,\dots,V_K)$-\emph{clique cover} of $G$ if each induced subgraph $G[V_k]$ for $k\in [K]$ is a clique and $\bigcup_{k\in [K]} V_k = V$.


\subsection{\texorpdfstring{$\*m$}~-\MAB and \texorpdfstring{$\*m$}~-\BAI}

Let $K>0$ be an integer. Given a vector $\*m=\tp{m_1,m_2,\dots,m_K}\in \bb Z_{\ge 1}^K$ with $\sum_{k\in [K]} m_k = N$, we now define problems $\*m$-\MAB and $\*m$-\BAI respectively.

\subsubsection{\texorpdfstring{$\*m$}~-\MAB}

In the problem of $\*m$-\MAB, there are $N$ arms. The arms are partitioned into $K$ groups and the $k$-th group contains $m_k$ arms. Let $T\in\bb N$ be the time horizon. Then $\*m$-\MAB is the following online decision game. The game proceeds in $T$ rounds. At round $t=0,1,\dots,T-1$:
\begin{itemize}
	\item The player pulls an arm $A_t\in [N]$;
	\item The adversary chooses a loss function $\ell^{(t)}\in [0,1]^N$;
	\item The player incurs loss $\ell^{(t)}(A_t)$ and observes the losses of all arms in the group containing $A_t$.
\end{itemize}
Clearly the vector $\*m$ encodes the amount of information the player can observe in each round. Two extremes are the problem of learning with expert advice and multi-armed bandit, which correspond to $(N)$-\MAB and $(1,\dots,1)$-\MAB respectively.

We assume the player knows $\*m$ and $T$ in advance and use $\+A$ to denote the player's algorithm (which can be viewed as a function from previous observed information and the value of its own random seeds to the arm pulled at each round).

The performance of the algorithm $\+A$ is measured by the notion of \emph{regret}. Fix a loss sequence $\vec{L}=\set{\ell^{(0)},\dots,\ell^{(T-1)}}$. Let $a^*= \argmin_{a\in [N]}\sum_{t=1}^T\ell^{(t)}(a)$ be the arm with minimum accumulated losses. The regret of the algorithm $\+A$ and time horizon $T$ on $\vec{L}$ with respect to the arm $a$ is defined as $R_a(T,\+A,\vec{L})=\E{\sum_{t=0}^{T-1}\ell^{(t)}(A_t)}-\sum_{t=0}^{T-1} \ell^{(t)}(a)$. If there is no ambiguity, we abbreviate $R_a(T,\+A,\vec{L})$ as $R_a(T)$. We also use $R(T)$ to denote $R_{a^*}(T)$.

We are interested in the regret of the best algorithm against the worst adversary, namely the quantity
\[
	R^*_a(T)=\inf_{\+A}\sup_{\vec{L}}R_a(T,\+A,\vec{L}).
\]
We call $R^*_{a^*}(T)$ the \emph{minimax regret} of $\*m$-\MAB and usually write it as $R^*(T)$. 


\bigskip
We may use the following two ways to name an arm in $\*m$-\MAB:
\begin{itemize}
	\item use the pair $(k,j)$ where $k\in [K]$ and $j\in [m_k]$ to denote ``the $j$-th arm in the $k$-th group'';
	\item use a global index $i\in [N]$ to denote the $i$-th arm.
\end{itemize}
Following this convention, we use $\ell^{(t)}(i)$ and $\ell^{(t)}_k(j)$ to denote the loss of arm $i$ and arm $(k,j)$ at round $t$ respectively. 

\subsubsection{Best Arm Identification and \texorpdfstring{$\*m$}~-\BAI}
The \emph{best arm identification} (\BAI) problem asks the player to identify the best arm among $N$ given arms with as few pulls as possible. To be specific, each arm $i$ is associated with a parameter $p_i$ and each pull of arm $i$ gives an observation of its random loss, which is drawn from a fixed distribution with mean $p_i$ independently. The loss of each arm is restricted to be in $[0,1]$. 
The one with smallest $p_i$, indexed by $i^*$, is regarded as the best arm. An arm $j$ is called an \emph{$\eps$-optimal arm} if its mean is less than the mean of the best arm plus $\eps$ for some $\eps\in (0,1)$, namely $p_j< p_{i^*}+\eps$. 
With fixed $\eps,\delta>0$, an $(\eps, \delta)$-\emph{probably approximately correct} algorithm, or $(\eps,\delta)$-PAC algorithm for short, can find an $\eps$-optimal arm with probability at least $1-\delta$. In most parts of this paper, we choose $\delta=0.05$. For an algorithm $\+A$ of \BAI, we usually use $T$ to denote the number of arms $\+A$ pulled before termination. Similarly for any arm $i$, we use $T_{i}$ to denote the number of times that the arm $i$ has been pulled by $\+A$ before its termination. We also use $N_{i}$ to denote the number of times that the arm $i$ has been \emph{observed} by $\+A$. 


Let $\*m=\tp{m_1,m_2,\cdots,m_K}\in \bb Z_{\ge 1}^K$ be a vector. Similar to $\*m$-\MAB, the arms are partitioned into $K$ groups and the $k$-th group consists of $m_k$ arms. Each pull of an arm can observe the losses of all arms in the group. As usual, the goal is to identify the best arm (the one with minimum $p_i$) with as few rounds as possible.


Similar to $\*m$-\MAB, we use $i\in [N]$ or $(k,j)$ where $k\in [K]$ and $j\in [m_k]$ to name an arm. For a fixed algorithm, we use $T_i$ or $T_{(k,j)}$ to denote the number of times the respective arm has been pulled and use $N_i$ or $N_{(k,j)}$ to denote the number of times it has been observed. For every $k\in [K]$ we use $T^{(k)}$ to denote the number of times the arms in the $k$-th group have been pulled, namely $T^{(k)} = \sum_{j\in [m_k]} T_{(k,j)}$. By definition, it holds that $T=\sum_{k\in [K]} T^{(k)}$ and $N_{(k,j)}=T^{(k)}$ for every $j\in [m_k]$.


\subsection{Bandit with Graph Feedback}\label{sec:def}

A more general way to encode the observability of arms is to use feedback graphs. In this problem, a directed graph $G=(V,E)$ is given. The vertex set $V=[N]$ is the collection of all arms. 

The game proceeds in the way similar to $\*m$-\MAB. The only difference is that when an arm $A_t$ is pulled by the player at a certain round, all arms in $\Nout(A_t)$ can be observed. As a result, given a vector $\*m=\tp{m_1,m_2,\cdots,m_K}\in \bb Z_{\ge 1}^K$, the $\*m$-\MAB problem is identical to bandit with graph feedback $G=(V,E)$ where $G$ is the disjoint union of $K$ cliques $G_1=(V_1,E_1),G_2=(V_2,E_2),\dots,G_K=(V_K,E_K)$ with $m_k=\abs{V_k}$ and $E_k=V_k^2$ for every $k\in [K]$.

According to~\cite{ACDK15}, we measure the observability of each vertex in terms of its in-neighbors. If a vertex has no in-neighbor, we call it a \emph{non-observable} vertex, otherwise it is \emph{observable}. If a vertex $v$ has a self-loop \emph{or} $\Nin(v)$ exactly equals to $V\setminus\set{v}$, then $v$ is \emph{strongly observable}. If an observable vertex is not strongly observable, then it is \emph{weakly observable}. In this work, we assume each vertex is observable. If all the vertices are strongly observable, the graph $G$ is called a strongly observable graph. If $G$ contains weakly observable vertices (and does not have non-observable ones), we say $G$ is a weakly observable graph.

We can also define the notion of regret for bandit with graph feedback. Assume notations before, the regret of an algorithm $\+A$ with feedback graph $G$ and time horizon $T$ on a loss sequence $\vec{L}$ with respect to the arm $a$ is defined as $R_a(G,T,\+A,\vec{L})=\E{\sum_{t=0}^{T-1}\ell^{(t)}(A_t)}-\sum_{t=0}^{T-1} \ell^{(t)}(a)$. If there is no ambiguity, we abbreviate $R_a(G,T,\+A,\vec{L})$ as $R_a(G,T)$ or $R_a(T)$. We also use $R(T)$ to denote $R_{a^*}(T)$. Then minimax regret is again

\[
	R^*_{a^*}(G,T)=\inf_{\+A}\sup_{\vec{L}}R_{a^*}(G, T,\+A,\vec{L}).
\]
When $G$ is clear from the context, we write it as $R^*(T)$.


\section{The Upper Bounds}\label{sec: upper bound}

In this section, we prove \Cref{thm:ub-m-MAB} and \Cref{thm:ub-m-BAI}. We describe the algorithm for $\*m$-\MAB in \Cref{sec:algo} and analyze it in \Cref{sec:analysis}. The algorithm for $\*m$-\BAI is obtained by a reduction to $\*m$-\MAB described in \Cref{subsec:bai_ub}. Finally we discuss how to extend the algorithm to bandit with strongly observable feedback graphs and prove \Cref{cor:ub-graph-bandit} in \Cref{subsec:strongly-ub}. 

\subsection{The Algorithm} \label{sec:algo}
As discussed in the introduction, our algorithm basically follows the framework of the two-stage online stochastic mirror descent developed in~\cite{HZ22}. However, our updating rules is slightly different from the one in~\cite{HZ22} in order to incorporate with our new analysis. 

Given a $K$-dimensional vector $\*m=(m_1,\dots,m_K)$ as input, in each round $t$, the algorithm proceeds in the following two-stage manner:
\begin{itemize}
	\item A distribution $Y^{(t)}$ over $[K]$ is maintained, indicating which group of arms the algorithm is going to pick.
	\item For each $k\in [K]$, a distribution $X^{(t)}_k$ is maintained, indicating which arm in the $k$-th group the algorithm will pick conditioned on that the $k$-th group is picked in the first stage. 
	\item The algorithm then picks the $j$-th arm in the $k$-group with probability $Y^{(t)}(k)\cdot X^{(t)}_k(j)$.
\end{itemize}

The algorithm is described in Algorithm~\ref{algo:main-algo} and we give an explanation for each step below. Assuming $Y^{(0)}$ and $X_k^{(0)}$ for all $k\in [K]$ are well initialized, in each time step $t= 0,1,\dots ,T-1$,  the player  will repeat the following operations:


\paragraph{Sampling:} For each arm $(k,j)$, the algorithm pulls it with probability
\[
{Z}^{(t)}(k,j)={Y}^{(t)}(k)\cdot X_k^{(t)}(j).
\]
The arm pulled at this round is denoted by $A_t=(k_t,j_t)$. Our algorithm can guarantee that $Z^{(t)}$ is a distribution over all arms.

\paragraph{Observing:} Observe partial losses $\ell^{(t)}_{k_t}(j)$ for all $j\in [m_{k_t}]$.

\paragraph{Estimating:} For each arm $(k,j)$, define the unbiased estimator $\hat{\ell}_k^{(t)}(j) =  \frac{\1{k=k_t}}{\Pr{k=k_t}}\cdot \ell_k^{(t)}(j)$. It is clear that $\E{\hat{\ell}_k^{(t)}(j)} = \ell_k^{(t)}(j)$.

\paragraph{Updating:}
\begin{itemize}
	\item For each $k\in[K]$, update $X^{(t)}_k$ in the manner of standard OSMD:
	\[
	\nabla \phi_{k}(\ol{X}_k^{(t+1)})=\nabla\phi_k(X_k^{(t)})-\hat{\ell}_k^{(t)}; \quad X_k^{(t+1)}= \argmin_{\*x\in\Delta_{m_k-1}} B_{\phi_k}(\*x,\ol{X}_k^{(t+1)}),
	\]
	%
	%
	%
	where $\phi_k(\*x)=\eta_k^{-1}\sum_i x(i)\log x(i)$ is the negative entropy scaled by the learning rate $\eta_k$.
	\item Define $\ol{Y}^{(t)}$ in the way that 
	\begin{equation}\label{eqn:Yupdate}
		\frac{1}{\sqrt{\ol{Y}^{(t+1)}(k)}}=\frac{1}{\sqrt{Y^{(t)}(k)}}+\sum_{j\in[m_k]}\frac{\eta}{\eta_k}X_k^{(t)}(j)\tuple{1-\exp\tp{-\eta_k\cdot \hat{\ell}_k^{(t)}(j)}},\quad\forall k\in [K]			
	\end{equation}
	where $\eta$ is the learning rate. Then let $Y^{(t+1)}$ be the projection of $\ol{Y}^{(t+1)}$ on $\Delta_{K-1}$:
	\[
	Y^{(t+1)}=\argmin_{\*y\in\Delta_{K-1}} B_{\psi}(\*y,\ol{Y}^{(t+1)}),
	\]
	where $\psi(\*y)=-2\sum_i \sqrt{y(i)}$ for any $\*y=(y(1),\dots,y(K))\in\bb R^K$, referred to as Tsallis entropy in literature. Note that when $x$ is small, $1-\exp\tp{-x}\approx x$. So when $\eta_k$ is small (and it is so), the updating rule is approximately
	\[
	\frac{1}{\sqrt{\ol{Y}^{(t+1)}(k)}}=\frac{1}{\sqrt{Y^{(t)}(k)}}+\eta \sum_{j\in[m_k]}X_k^{(t)}(j)\cdot\hat\ell_k^{(t)}(j),\quad \forall k\in [K],
	\]
	which is equivalent to 
	\[
	\grad\psi(\ol{Y}^{(t+1)}) = \grad\psi(\ol{Y}^{(t)}) - \eta \cdot \wh L^{(t)},
	\]
	where $\wh L^{(t)}=(\wh L^{(t)}(1),\dots,\wh L^{(t)}(K))\in\bb R^K$ satisfying $\wh L^{(t)}(k)=\sum_{j\in [m_k]} X^{(t)}_k(j)\cdot\hat\ell_k^{(t)}(j)$. One can think of $\wh{L}^{(t)}(k)$ as the ``average loss'' of the arms in the $k$-th group at round $t$. Nevertheless, we use rule ~\eqref{eqn:Yupdate} in the algorithm since it is convenient for our analysis later.
\end{itemize}

In the realization of Algorithm~\ref{algo:main-algo}, we will choose $\eta=\frac{1}{\sqrt{T}}$ and  $\eta_k=\frac{\log \tp{m_k+1}}{\sqrt{T\sum_{k=1}^K\log (m_k+1)}}$.


\begin{algorithm}[ht]
	\KwIn{An $(m_1,\dots,m_K)$-\MAB instance}
	\BlankLine
	$X_{{k}}^{(0)} \leftarrow \mathop{\arg\min}\limits_{a\in \Delta_{m_k-1}} \phi_k(a)$, for all ${k}\in [K]$\;
	${Y}^{(0)}\leftarrow \mathop{\arg\min}\limits_{b\in \Delta_{K-1}} \psi(b)$\;
	\BlankLine
	\For{$t\leftarrow 0$ \KwTo $T-1$}{
		\BlankLine
		$Z^{(t)}(k,j)\leftarrow Y^{(t)}(k)\cdot{X}_k^{(t)}(j)$, for all $k\in [K]$ and $j\in [m_{{k}}]$\;
		\BlankLine
		Pull $A_t=(k_t,j_t)\sim Z^{(t)}$ and observe $\ell_{k_t}^{(t)}(j)$ for all $j\in [m_k]$\;
		\BlankLine
		$\forall {k}\in [K],\forall j\in [m_{{k}}]: \hat{\ell}^{(t)}_{{k}}(j)\leftarrow \frac{\1{k=k_t}}{\sum_{j\in [m_{k}]}Z^{(t)}((k,j))} \ell_k^{(t)}(j)=\frac{\1{k=k_t}}{Y^{(t)}({k})}\ell_k^{(t)}(j)$\;
		\BlankLine
		Update $\nabla \phi_{k}(\ol{X}_k^{(t+1)})=\nabla\phi_k(X_k^{(t)})-\hat{\ell}_k^{(t)}$\;
		\BlankLine
		$X_k^{(t+1)}= \argmin_{\*x\in\Delta_{m_k-1}} B_{\phi_k}(\*x,\ol{X}_k^{(t+1)})$\;
		\BlankLine		
		Update $\frac{1}{\sqrt{\ol{Y}^{(t+1)}(k)}}=\frac{1}{\sqrt{Y^{(t)}(k)}}+\frac{\eta}{\eta_k} \sum_{j\in[m_k]}X_k^{(t)}(j)\tuple{1-\exp\tp{-\eta_k\cdot \hat{\ell}_k^{(t)}(j)}}, \forall k\in [K]$\; \label{algo:line:Y}
		\BlankLine
		$Y^{(t+1)}=\argmin_{\*y\in\Delta_{K-1}} B_{\psi}(\*y,\ol{Y}^{(t+1)})$\;
	}
	\caption{Two-Stage Algorithm for $\*m$-\MAB}
	\label{algo:main-algo} 
\end{algorithm}

\subsection{Analysis} \label{sec:analysis}

We prove the following theorem, which implies \Cref{thm:ub-m-MAB}.

\begin{theorem}\label{thm:ub_clique}
	For every $T>0$ and every loss sequence $\ell^{(0)},\dots,\ell^{(T-1)}\in [0,1]^N$, the regret of Algorithm~\ref{algo:main-algo} satisfies
	\[
	R(T)\leq O\tuple{\sqrt{T\sum_{k=1}^K\log (m_k+1)}}.
	\]
\end{theorem}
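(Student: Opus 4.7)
The plan is to decompose the expected regret into an outer piece (picking the correct group) and an inner piece (picking the right arm within it), bound each by a Bregman-divergence telescope, and tie them together using the continuous-time argument hinted at in~\Cref{sec: tech}. Fix $a^*=(k^*, j^*)$ and let $L_k^{(t)}:=\sum_j X_k^{(t)}(j)\,\ell_k^{(t)}(j)$ be the algorithm's expected loss in group $k$ at round $t$. Writing
\[
R(T)=\underbrace{\sum_t\E{\inner{Y^{(t)}}{L^{(t)}}-L^{(t)}_{k^*}}}_{R_{\mathrm{out}}}+\underbrace{\sum_t \E{L^{(t)}_{k^*}-\ell^{(t)}_{k^*}(j^*)}}_{R_{\mathrm{in}}},
\]
each piece can be attacked separately.

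Within group $k^*$ the algorithm is a vanilla negative-entropy OSMD fed by the unbiased estimator $\hat\ell^{(t)}_{k^*}$, so the textbook Bregman-divergence telescope together with the second-moment identity $\E{\sum_j X^{(t)}_{k^*}(j)\hat\ell^{(t)}_{k^*}(j)^2\mid\text{history}}=Y^{(t)}(k^*)^{-1}\sum_j X_{k^*}^{(t)}(j)\ell_{k^*}^{(t)}(j)^2\le 1/Y^{(t)}(k^*)$ yields
\[
R_{\mathrm{in}}\le \frac{\log m_{k^*}}{\eta_{k^*}}+\frac{\eta_{k^*}}{2}\cdot\E{\sum_t \frac{1}{Y^{(t)}(k^*)}}.
\]
For the outer stage, the specific exponential form of Line~\ref{algo:line:Y} is chosen so that $\sum_j X^{(t)}_k(j)(1-e^{-\eta_k \hat\ell^{(t)}_k(j)})=1-Z^{(t)}_k$ equals the one-step log-partition decrement of the inner negative-entropy flow; this realizes the outer update as a Tsallis mirror step coupled to the inner one. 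A Tsallis-INF style calculation, together with Cauchy--Schwarz $\sum_k\sqrt{Y^{(t)}(k)}\le\sqrt K$ and the saturation $1-e^{-y}\le\min\{y,1\}$ (which keeps the per-round increment of $1/\sqrt{Y}$ bounded independently of the scale of $\hat\ell$), gives $R_{\mathrm{out}}\le (\psi(\*e_{k^*})-\psi(Y^{(0)}))/\eta+O(\eta T\sqrt K)$.

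The crux, and the main obstacle, is controlling $\E{\sum_t 1/Y^{(t)}(k^*)}$ without resorting to the trivial bound $KT$. I would implement the piecewise-continuous-process strategy sketched in \Cref{sec: tech}: between rounds, view Algorithm~\ref{algo:main-algo} as the time-$\eta$ discretization of a joint ODE on $\Delta_{K-1}\times\prod_k\Delta_{m_k-1}$ with a Lyapunov function of the form $\Phi(Y,X)=\psi(Y)/\eta+\sum_k Y(k)\phi_k(X_k)/\eta_k$ (or a close variant), and argue that its one-step decrease exceeds the round's contribution to $R(T)$ up to a \emph{non-positive} discretization remainder. The exponential update on Line~\ref{algo:line:Y} is exactly the primitive that makes the remainder one-sided, so the inner variance term $\eta_{k^*}/(2Y^{(t)}(k^*))$ is absorbed into the change of $\psi(Y)$ before ever having to be estimated. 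I expect the hardest step to be verifying that this absorption survives the Bregman projection of $\ol Y^{(t+1)}$ onto $\Delta_{K-1}$; the continuous-time viewpoint is what makes the book-keeping clean instead of a case-by-case second-moment chase.

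Choosing $\eta=1/\sqrt T$ and $\eta_k=\log(m_k+1)/\sqrt{T\sum_{k'}\log(m_{k'}+1)}$ balances the surviving terms: the inner Bregman contribution becomes $\log m_{k^*}/\eta_{k^*}=O(\sqrt{T\sum_{k'}\log(m_{k'}+1)})$ for every $k^*$, while $O(\sqrt{KT})$ is absorbed into the same quantity because $\log(m_k+1)\ge\log 2$ forces $K=O(\sum_k\log(m_k+1))$. Summing yields~\Cref{thm:ub_clique}.
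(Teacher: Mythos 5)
You have located the real crux correctly, but your proposal does not clear it, and the mechanism you invoke to clear it is unsubstantiated and, as stated, cannot be right. Your decomposition $R(T)=R_{\mathrm{out}}+R_{\mathrm{in}}$ with a per-stage OSMD telescope is exactly the \cite{HZ22}-style ``regret decomposition'' that the paper deliberately avoids: it leaves you with the inner variance term $\frac{\eta_{k^*}}{2}\E{\sum_t 1/Y^{(t)}(k^*)}$ attached to the single group containing the optimal arm (this is precisely the problematic term~\eqref{eqn:regret-HZ} discussed after \Cref{lem:compare}), and there is no a priori control of $1/Y^{(t)}(k^*)$. The paper never splits the regret into stages. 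Instead it compares the \emph{joint} discrete action distribution $Z^{(t)}$ with a joint two-stage piecewise-continuous process: \Cref{lem:continuous-regret} gives the Bregman terms at initialization, and \Cref{lem:compare} shows the discretization error contains $\sum_{k\in[K]} Y^{(t)}(k)\cdot\sup_{\zeta_k}\|\hat\ell^{(t)}_k\|^2_{\grad^{-2}\phi_k(\zeta_k)}$, i.e.\ the inner variance appears summed over \emph{all} groups and weighted by $Y^{(t)}(k)$. That weight is exactly what cancels the $1/Y^{(t)}(k)$ hidden in the importance-weighted estimator, leaving $\sum_k\eta_k$ per round. Your plan never produces this weight, so the cancellation that drives the whole proof is absent.

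The fix you sketch --- a joint Lyapunov function $\Phi(Y,X)=\psi(Y)/\eta+\sum_k Y(k)\phi_k(X_k)/\eta_k$ whose one-step decrease dominates the per-round regret ``up to a non-positive discretization remainder,'' so that $\eta_{k^*}/(2Y^{(t)}(k^*))$ is ``absorbed into the change of $\psi(Y)$'' --- is asserted, not proved, and it is the entire difficulty. Moreover, as stated it cannot hold: summing such a one-sided inequality telescopes to $R(T)\le \Phi^{(0)}-\Phi^{(T)}=O\tp{\sqrt{K}/\eta+\sum_k \log m_k/\eta_k}$ with no residual positive term, and since nothing in the argument pins down the learning rates, taking $\eta,\eta_k$ large would drive this bound below the $\Omega\tp{\sqrt{T\sum_k\log(m_k+1)}}$ lower bound of \Cref{thm:lb-m-MAB}. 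The discretization error of this algorithm is genuinely positive; the content of the paper's proof is to bound it by $\eta\sqrt{K}T+\frac{T}{2}\sum_k\eta_k$, and that bound hinges on the $Y^{(t)}(k)$-weighted form of the inner term, not on any one-sidedness of the remainder (the projection step only helps through ``projection never increases Bregman divergence,'' which is already used in the continuous-regret bound). So the key step of your argument --- controlling $\E{\sum_t 1/Y^{(t)}(k^*)}$ --- is missing. A minor further inaccuracy: Line~\ref{algo:line:Y} adds $\sum_j X_k^{(t)}(j)\tp{1-e^{-\eta_k\hat\ell_k^{(t)}(j)}}$, which is the lost \emph{mass} of the unnormalized inner iterate, not the log-partition decrement $-\log\sum_j X_k^{(t)}(j)e^{-\eta_k\hat\ell_k^{(t)}(j)}$; its actual role in the proof is to make $\ol Y^{(t+1)}$ coincide with the endpoint of the continuous flow (\Cref{lem:meet}).
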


Instead of directly bounding the regret of the sequence of the action distributions $\set{Z^{(t)}}_{0\le t\le T-1}$, we study an auxiliary \emph{piecewise continuous} process $\set{\+Z^{(s)}}_{s\in [0,T)}$. We define and bound the \emph{regret} of $\set{\+Z^{(s)}}_{s\in [0,T)}$ in \Cref{sec:continuous}, and compare it with the regret of $\set{Z^{(t)}}_{0\le t\le T-1}$ in \Cref{sec:compare}. Finally, we prove \Cref{thm:ub_clique} in \Cref{sec:proof}

\subsubsection{The piecewise continuous process}\label{sec:continuous}

Assuming notations in Algorithm~\ref{algo:main-algo}, the process $\set{\+Z^{(s)}}_{s\in [0,T)}$ is defined as
\[
\+Z^{(s)}(k,j) = \+Y^{(s)}(k)\cdot\+X^{(s)}_k(j), \quad\forall k\in [K], j\in [m_k],
\]
where $\set{\+Y^{(s)}}_{s\in [0,T)}$ and $\set{\+X^{(s)}_k}_{s\in [0,T)}$ for every $k\in [K]$ are piecewise continuous processes defined in the following way.

\begin{itemize}
	\item For every integer $t\in\set{0,1,\dots,T-1}$, we let $\+Y^{(t)} = Y^{(t)}$ and $\+X^{(t)}_k = X^{(t)}_k$ for every $k\in [K]$.
	\item For every integer $t\in\set{0,1,\dots,T-1}$ and every $k\in [K]$, the trajectory of $\set{\+X^{(s)}_k}_{s\in [t,t+1)}$ is a continuous path in $\bb R^{m_k}$ governed by the ordinary differential equation
	\begin{equation}\label{eqn:ode-X}
		\dv{\grad\phi_k(\+X^{(s)}_k)}{s} = -\hat\ell^{(t)}_k.
	\end{equation}
	
	\item For every integer $t\in\set{0,1,\dots,T-1}$, the trajectory of $\set{\+Y^{(s)}}_{s\in[t,t+1)}$ is a continuous path in $\bb R^K$ governed by the ordinary differential equation
	\begin{equation}\label{eqn:ode-Y}
		\dv{\grad\psi(\+Y^{(s)})}{s} = -\wh L^{(s)},
	\end{equation}
	where $\wh L^{(s)}=\tp{\wh L^{(s)}(1),\dots,\wh L^{(s)}(K)}\in \bb R^K$ satisfies $\wh L^{(s)}(k) = \sum_{j\in [m_k]} \+X_k^{(s)}(j)\cdot\hat\ell_k^{(t)}(j)$.
	
\end{itemize}

Clearly the trajectories of $\+Z^{(s)}$, $\+Y^{(s)}$ and $\+X^{(s)}_k$ for every $k\in [K]$ are piecewise continuous paths in the time interval $s\in [0,T)$. An important property is that the end of each piece of the trajectories of $\+Y^{(s)}$ and $\+X^{(s)}_k$ coincides with its discrete counterpart \emph{before} performing projection to the probability simplex. 

Formally, for every $t\in [T]$ and $k\in [K]$, define $\+X^{(t)^-}_k\defeq \lim_{s\to t^-}\+X^{(s)}_k$ and $\+Y^{(t)^-}\defeq \lim_{s\to t^-}\+Y^{(s)}$. We have the following lemma.

\begin{lemma}\label{lem:meet}
	For every $t\in [T]$ and $k\in [K]$, it holds that $\+X^{(t)^-}_k = \ol{X}^{(t)}_k$ and $\+Y^{(t)^-} = \ol{Y}^{(t)}$.
\end{lemma}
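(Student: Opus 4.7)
The plan is to prove both equalities by direct integration of the defining ODEs on the interval $[t, t+1)$ and then matching the resulting closed-form expressions against the discrete updates in Algorithm~\ref{algo:main-algo}. No probabilistic or asymptotic analysis is needed; the statement is purely about how the piecewise-continuous construction is calibrated to its discrete counterpart.

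For the claim $\+X^{(t+1)^-}_k = \ol X^{(t+1)}_k$, I would observe that on $[t, t+1)$ the right-hand side of \eqref{eqn:ode-X} is the \emph{constant} vector $-\hat\ell^{(t)}_k$, since $\hat\ell^{(t)}_k$ is frozen across the piece. Integrating from $t$ to the left-limit at $t+1$ and using the boundary condition $\+X^{(t)}_k = X^{(t)}_k$ yields
\[
\grad\phi_k\bigl(\+X^{(t+1)^-}_k\bigr) \;=\; \grad\phi_k\bigl(X^{(t)}_k\bigr) - \hat\ell^{(t)}_k,
\]
which is exactly the equation defining $\ol X^{(t+1)}_k$. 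Since $\grad\phi_k$ is injective on the positive orthant (as $\phi_k$ is strictly convex there), the two points coincide.

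For the claim $\+Y^{(t+1)^-} = \ol Y^{(t+1)}$, the right-hand side of \eqref{eqn:ode-Y} is \emph{not} constant, because $\wh L^{(s)}$ depends on $\+X^{(s)}_k$, which evolves continuously on $[t, t+1)$. So I would first solve the $\+X$-ODE in closed form. Using $\grad\phi_k(\*x)(j) = \eta_k^{-1}(\log x(j) + 1)$, the ODE reduces to $\dv{\log \+X^{(s)}_k(j)}{s} = -\eta_k \hat\ell^{(t)}_k(j)$, so
\[
\+X^{(s)}_k(j) \;=\; X^{(t)}_k(j)\,\exp\!\bigl(-\eta_k(s-t)\,\hat\ell^{(t)}_k(j)\bigr), \qquad s\in[t,t+1).
\]
Then, writing the $\+Y$-ODE componentwise via $\grad\psi(\*y)(k) = -1/\sqrt{y(k)}$, I would integrate
\[
\frac{1}{\sqrt{\+Y^{(t+1)^-}(k)}} - \frac{1}{\sqrt{Y^{(t)}(k)}} \;=\; \eta\sum_{j\in[m_k]}\hat\ell^{(t)}_k(j)\int_{t}^{t+1}\!X^{(t)}_k(j)\,\exp\!\bigl(-\eta_k(s-t)\,\hat\ell^{(t)}_k(j)\bigr)\,\dd s,
\]
evaluating each one-dimensional integral to $X^{(t)}_k(j)\bigl(1-\exp(-\eta_k \hat\ell^{(t)}_k(j))\bigr)/\eta_k$ (with the limiting value $0$ when $\hat\ell^{(t)}_k(j) = 0$, by continuity). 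This reproduces the right-hand side of the update rule on line~\ref{algo:line:Y}, so $\+Y^{(t+1)^-} = \ol Y^{(t+1)}$ as claimed.

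The only nontrivial step is the bookkeeping of the learning-rate factors through the chain $\phi_k \to \+X \to \wh L \to \psi \to \+Y$, and verifying that the exponential telescopes into the precise $\frac{\eta}{\eta_k}(1-e^{-\eta_k \hat\ell})$ form rather than a linearization thereof. Apart from this calibration, both statements reduce to elementary integration of scalar ODEs with piecewise-constant coefficients, so I do not anticipate any conceptual obstacle beyond careful algebra.
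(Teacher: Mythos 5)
Your proposal takes essentially the same route as the paper's own proof of \Cref{lem:meet}: integrate \eqref{eqn:ode-X} to get the closed form $\+X^{(s)}_k(j) = X^{(t)}_k(j)\exp\tp{-(s-t)\eta_k\hat\ell^{(t)}_k(j)}$, substitute into \eqref{eqn:ode-Y}, and integrate over $[t,t+1)$ to recover exactly the update in Line~\ref{algo:line:Y} of Algorithm~\ref{algo:main-algo}. The learning-rate factor you flag works out once one notes that $\psi$ carries a $1/\eta$ scaling (so $\grad\psi(\*y)(k) = -\tfrac{1}{\eta\sqrt{y(k)}}$, consistent with the Hessian $\nabla^2\psi$ used in the proof of \Cref{thm:ub_clique}), so there is no gap.
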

\begin{proof}
	To ease the notation, for any fixed $t\in \set{0,1,\dots,T-1}$ and fixed $k\in[K]$, we now prove that $\+X^{(t+1)^-}_k = \ol{X}^{(t+1)}_k$ and $\+Y^{(t+1)^-} = \ol{Y}^{(t+1)}$ respectively.
	
	In fact, $\+X^{(t+1)^-}_k = \ol{X}^{(t+1)}_k$ immediately follows by integrating both sides of~\eqref{eqn:ode-X} from $t$ to $t+1$ and noting that $\+X^{(t)}_k=X^{(t)}_k$.
	
	More efforts are needed to prove the identity for $\+Y^{(t)}$. Recall $\phi_k(\*x) = \eta_k^{-1}\sum_j x(j)\log x(j)$ for every $\*x=\big(x(1),\dots,x(m_k)\big)$. It follows from~\eqref{eqn:ode-X} that for every $s\in [t,t+1)$ every $k\in [K]$ and every $j\in [m_k]$, 
	\[
	\+X^{(s)}_k(j) = \+X^{(t)}_k(j)\cdot \exp\tp{-(s-t)\eta_k\hat\ell_k^{(t)}(j)}.
	\]
	As a result, we know that
	\[
	\wh L^{(s)}(k) = \sum_{j\in [m_k]} \+X_k^{(t)}(j)\cdot \exp\tp{-(s-t)\eta_k\hat\ell_k^{(t)}(j)}\cdot \hat\ell^{(t)}_k(j).
	\]
	Integrating~\eqref{eqn:ode-Y} from $t$ to $s$, plugging in above and noting that $\+Y^{(t)}=Y^{(t)}$, we obtain
	\[
	\frac{1}{\sqrt{\+Y^{(s)}(k)}} = \frac{1}{\sqrt{Y^{(t)}(k)}} + \frac{\eta}{\eta_k} \sum_{j\in[m_k]}X_k^{(t)}(j)\tuple{1-\exp\tp{-\eta_k\cdot (s-t)\cdot \hat{\ell}_k^{(t)}(j)}},
	\]
	which is exactly our rule to define $\ol{Y}^{(t+1)}$ in Line~\ref{algo:line:Y} of Algorithm~\ref{algo:main-algo} (take $s=t+1$).
\end{proof}

We define the regret for the piecewise continuous process as follows.
\begin{definition}
	The \emph{continuous regret} contributed by the process $\set{\+Z^{(s)}}_{s\in [0,T)}$ with respect to a fixed arm $a\in [N]$ is defined as
	\[
	\@R_a(T) \defeq \sum_{t=0}^{T-1}\E{\int_{t}^{t+1} \inner{\+Z^{(s)}-\*e_a^{[N]}}{\ell^{(t)}}\dd{s}}.
	\]
\end{definition}

Then we are ready to bound $\@R_a(T)$. Recall that we may write $\*e_a^{[N]}$ as $\*e_a$ if the information on $N$ is clear from the context.

\begin{lemma}\label{lem:continuous-regret}
	For any time horizon $T>0$, any loss sequence $\ell^{(0)},\ell^{(1)},\dots,\ell^{(T-1)}\in [0,1]^N$, and any arm $a=(k,j)$, it holds that
	\[
	\@R_a(T) \le B_\psi(\*e_k^{[K]}, Y^{(0)}) + B_{\phi_k}(\*e_j^{[m_k]},X^{(0)}_k).
	\]
\end{lemma}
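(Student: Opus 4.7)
The plan is to decompose the regret integrand according to the two-stage product structure $\+Z^{(s)} = \+Y^{(s)} \cdot \+X^{(s)}$ and match each piece with one of the two Bregman potentials. Adding and subtracting $\inner{\+X^{(s)}_k}{\ell^{(t)}_k}$ yields the pointwise identity
\[
\inner{\+Z^{(s)} - \*e_a^{[N]}}{\ell^{(t)}} = \inner{\+Y^{(s)} - \*e_k^{[K]}}{\wt L^{(s)}} + \inner{\+X^{(s)}_k - \*e_j^{[m_k]}}{\ell^{(t)}_k},
\]
where $\wt L^{(s)}(k') \defeq \inner{\+X^{(s)}_{k'}}{\ell^{(t)}_{k'}}$ is the current within-group loss average of group $k'$. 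This splits the regret against $a=(k,j)$ into a between-group part (tracked by $\+Y$) and a within-group part (tracked by $\+X_k$).

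Next, I would apply the chain rule to the Bregman divergences along the two ODEs~\eqref{eqn:ode-X} and~\eqref{eqn:ode-Y}:
\[
\dv{B_\psi(\*e_k^{[K]}, \+Y^{(s)})}{s} = \inner{\*e_k^{[K]} - \+Y^{(s)}}{\wh L^{(s)}}, \qquad \dv{B_{\phi_k}(\*e_j^{[m_k]}, \+X^{(s)}_k)}{s} = \inner{\*e_j^{[m_k]} - \+X^{(s)}_k}{\hat\ell^{(t)}_k}.
\]
Integrating over $[t, t+1)$, invoking \Cref{lem:meet} to identify the pre-projection iterates $\+Y^{(t+1)^-}$ and $\+X^{(t+1)^-}_k$ with $\ol Y^{(t+1)}$ and $\ol X^{(t+1)}_k$, and then applying the generalized Pythagorean inequality to pass to the projected iterates $Y^{(t+1)}$ and $X^{(t+1)}_k$, gives for each $t$
\[
\int_t^{t+1}\inner{\+Y^{(s)} - \*e_k^{[K]}}{\wh L^{(s)}}\dd{s} \le B_\psi(\*e_k^{[K]}, Y^{(t)}) - B_\psi(\*e_k^{[K]}, Y^{(t+1)})
\]
together with the analogous inequality for the $\+X^{(s)}_k$-integral. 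Summing over $t$ telescopes to $B_\psi(\*e_k^{[K]}, Y^{(0)}) + B_{\phi_k}(\*e_j^{[m_k]}, X^{(0)}_k)$, which is exactly the target bound.

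The remaining and most delicate step is to reconcile the decomposition---which uses the true losses $\wt L^{(s)}$ and $\ell^{(t)}_k$---with the $\wh L^{(s)}/\hat\ell^{(t)}_k$ bound supplied by the ODE integration. I would do this by conditioning on the filtration $\+F_t$ just before the pull at round $t$ and splitting into the cases $k_t = k$ versus $k_t \ne k$: in the former case, $\+X^{(s)}_k$ evolves under $\hat\ell^{(t)}_k = \ell^{(t)}_k/Y^{(t)}(k)$, whereas in the latter it remains frozen at $X^{(t)}_k$; similarly, $\wh L^{(s)}$ is supported only on coordinate $k_t$. Together with the unbiasedness $\E{\hat\ell^{(t)}_k \mid \+F_t} = \ell^{(t)}_k$, a careful case analysis should identify the two conditional expectations. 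The hard part is that $\+Y^{(s)}$ and $\+X^{(s)}_{k'}$ for $s>t$ are not $\+F_t$-measurable---they depend on the random $k_t$---so the usual OSMD trick of swapping $\ell^{(t)}$ for $\hat\ell^{(t)}$ inside an inner product with an $\+F_t$-measurable distribution does not apply directly, and the argument must exploit the specific product structure of $\+Z^{(s)}$ and the fact that at most one group's $\+X$-trajectory moves during $[t,t+1)$.
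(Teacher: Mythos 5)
Your outline reproduces the paper's proof skeleton almost exactly: the same product-structure decomposition into a between-group and a within-group part (the paper performs it with $\wh L^{(s)}$ and $\hat\ell^{(t)}_k$ in place of your true-loss quantities $\widetilde L^{(s)}$ and $\ell^{(t)}_k$), the same computation $\dv{}{s}B_\psi(\*e_k,\+Y^{(s)})=-\inner{\wh L^{(s)}}{\+Y^{(s)}-\*e_k}$ and its $\phi_k$ analogue, integration over $[t,t+1)$ combined with \Cref{lem:meet}, the first-order-optimality (generalized Pythagorean) step to pass from $\ol{Y}^{(t+1)},\ol{X}^{(t+1)}_k$ to $Y^{(t+1)},X^{(t+1)}_k$, and the telescoping sum. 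All of those steps are correct as written and match the paper.

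The gap is the step you explicitly leave open: converting the true-loss decomposition $\inner{\+Y^{(s)}-\*e_k}{\widetilde L^{(s)}}+\inner{\+X^{(s)}_k-\*e_j}{\ell^{(t)}_k}$ into the estimated-loss quantities that the ODE/Bregman argument actually controls. This is the only non-routine content of the lemma, and your proposal offers just the hope that ``a careful case analysis should identify the two conditional expectations,'' immediately followed by the (accurate) observation that the usual unbiasedness swap is unavailable because $\+Y^{(s)}$ and $\+X^{(s)}_k$ for $s>t$ depend on $k_t$; no argument is supplied. The paper dispatches this point in one line, asserting $\E{\inner{\+Z^{(s)}-\*e_a}{\ell^{(t)}}}=\E{\inner{\+Z^{(s)}-\*e_a}{\hat\ell^{(t)}}}$ \emph{before} decomposing and then working entirely with $\wh L^{(s)}$, $\hat\ell^{(t)}_k$; whatever one thinks of that one-line identity, your attempt proves nothing in its place. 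Moreover, the specific fix you suggest cannot work verbatim: conditioned on $k_t=k'$, only group $k'$'s trajectory moves during $(t,t+1)$, and it drifts away from coordinates with large observed loss, so $\+Z^{(s)}$ is correlated with $\hat\ell^{(t)}$ and the two conditional expectations genuinely differ by a drift term of order $(s-t)$ times the learning rates --- already for $K=2$, $m_1=m_2=1$, $\ell^{(t)}=(1,0)$ one computes that $\E{\inner{\+Z^{(s)}-\*e_a}{\ell^{(t)}}}$ strictly exceeds $\E{\inner{\+Z^{(s)}-\*e_a}{\hat\ell^{(t)}}}$ for $s>t$, i.e.\ the discrepancy is not even in the harmless direction. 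So ``identifying'' the expectations is not an available route; to complete the proof you must either justify the swap for the full inner product with $\+Z^{(s)}$ as the paper does, or retain the true-loss decomposition and explicitly bound the extra drift term (which would then naturally be folded into the discretization-error analysis of \Cref{lem:compare}).
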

\begin{proof}
	Assume $a=(k,j)$. For every $t\in \set{0,1,\dots,T-1}$, we compute the decreasing rate of the Bregman divergence caused by the evolution of $\+Y^{(s)}$ and $\+X^{(s)}_k$ respectively.
	
	First consider the change of $B_{\psi}(\*e_k,\+Y^{(s)})$ over time:
	\begin{align*}
		\dv{}{s}B_{\psi}(\*e_k,\+Y^{(s)})
		&=\dv{}{s}\tp{\psi(\*e_k)-\psi(\+Y^{(s)})-\inner{\*e_k-\+Y^{(s)}}{\grad\psi(\+Y^{(s)})}}\\
		&=\inner{\dv{\grad\psi(\+Y^{(s)})}{s}}{\+Y^{(s)}-\*e_k}\\
		&=-\inner{\wh L^{(s)}}{\+Y^{(s)}-\*e_k}.
	\end{align*}
	Integrating above from $t$ to $t+1$, we have
	\begin{equation}\label{eqn:Ycont}
		\int_{t}^{t+1}\inner{\wh L^{(s)}}{\+Y^{(s)}-\*e_k}\dd{s}=B_\psi(\*e_k, \+Y^{(t)}) - B_\psi(\*e_k,\+Y^{(t+1)^-}) = B_\psi(\*e_k, Y^{(t)}) - B_\psi(\*e_k,\ol{Y}^{(t+1)}),
	\end{equation}
	where the last equality follows from \Cref{lem:meet}. 
	
	Note that \emph{projection never increases Bregman divergence}; that is, we have
	\begin{align*}
		&\phantom{{}={}}B_\psi(\*e_{k},\ol{Y}^{(t+1)})- B_\psi(\*e_{k},Y^{(t+1)})\\
		&=\psi(Y^{(t+1)})-\psi(\ol{Y}^{(t+1)})+\inner{\grad\psi(Y^{(t+1)})}{\*e_k-Y^{(t+1)}}-\inner{\grad\psi(\ol Y^{(t+1)})}{\*e_k-\ol Y^{(t+1)}}\\	
		&=\underbrace{\psi(Y^{(t+1)}) - \psi(\ol Y^{(t+1)})-\inner{\grad \psi(\ol Y^{(t+1)})}{Y^{(t+1)}-\ol Y^{(t+1)}}}_A
		+\underbrace{\inner{\grad\psi(\ol Y^{(t+1)})-\grad\psi(Y^{(t+1)})}{Y^{(t+1)}-\*e_k}}_B.
	\end{align*}
	Since $\psi$ is convex, we have $A\ge 0$. By the definition of $Y^{(t+1)}$,
	\[
	Y^{(t+1)}=\argmin_{\*y\in \Delta_{K-1}} B_{\psi}(\*y,\ol Y^{(t+1)}) = \argmin_{\*y\in\Delta_{K-1}} \psi(\*y) - \inner{\*y}{\grad\psi(\ol Y^{(t+1)})}.
	\]
	The first-order optimality condition (see Section 26.5 in~\cite{LS20}) implies that $B\ge 0$. 
	As a result, $B_\psi(\*e_{k},\ol{Y}^{(t+1)}) \ge B_\psi(\*e_{k},Y^{(t+1)})$ and it follows from \Cref{eqn:Ycont} that 
	\begin{equation}\label{eqn:Y-regret}
		\int_{t}^{t+1}\inner{\wh L^{(s)}}{\+Y^{(s)}-\*e_k}\dd{s} \le B_\psi(\*e_k, Y^{(t)}) - B_\psi(\*e_k,Y^{(t+1)}).
	\end{equation}
	
	Then we consider the change of $B_{\phi_k}(\*e_j,\+X^{(s)}_k)$ over time. Likewise we have
	\begin{align*}
		\dv{}{s}B_{\phi_k}(\*e_j,\+X^{(s)}_k)
		&=\inner{\dv{\grad\phi_k(\+X^{(s)}_k)}{s}}{\+X^{(s)}_k-\*e_j} = -\inner{\hat\ell^{(t)}_k}{\+X^{(s)}_k-\*e_j}.
	\end{align*}
	By an argument similar to the one for $\+Y^{(s)}$ above, we can obtain
	\begin{equation}\label{eqn:X-regret}
		\int_{t}^{t+1} \inner{\hat\ell^{(t)}_k}{\+X^{(s)}_k-\*e_j} \dd{s}\le B_{\phi_k}(\*e_j,X^{(t)}_k)-B_{\phi_k}(\*e_j,X^{(t+1)}_k).
	\end{equation}
	
	On the other hand, we have for every $s\in [t,t+1)$ and any arm $a^*=(k^*,j^*)$,
	\[
	\E{\inner{\+Z^{(s)}-\*e_{a^*}}{\ell^{(t)}}} 
	= \E{\inner{\+Z^{(s)}-\*e_{a^*}}{\hat\ell^{(t)}}}
	=\E{\sum_{k\in [K]}\sum_{j\in [m_k]}\+Y^{(s)}(k)\cdot \+X^{(s)}_k(j)\cdot\hat\ell^{(t)}_k(j)-\hat\ell^{(t)}(a^*)}.
	\]
	Recall that for every $k\in [K]$, it holds that $\wh L^{(s)}(k) = \sum_{j\in [m_k]} \+X^{(s)}_k(j)\cdot \hat\ell^{(t)}_k(j)$. Rearranging above yields
	\begin{align*}	
		\E{\inner{\+Z^{(s)}-\*e_{a^*}}{\ell^{(t)}}}
		&=\E{\sum_{k\in [K]} \+Y^{(s)}(k)\cdot\wh L^{(s)}(k) - \hat\ell^{(t)}(a^*)}\\
		&=\E{\inner{\+Y^{(s)}}{\wh L^{(s)}} - \hat\ell^{(t)}(a^*)}\\
		&=\E{\inner{\+Y^{(s)}-\*e_{k^*}}{\wh L^{(s)}}+\wh L^{(s)}(k^*) -\hat\ell^{(t)}_{k^*}(j^*)}\\
		&=\E{\inner{\+Y^{(s)}-\*e_{k^*}}{\wh L^{(s)}}} +\E{\inner{\+X^{(s)}_{k^*}-\*e_{j^*}}{\hat\ell^{(t)}_{k^*}}}.
	\end{align*}
	Integrating above from $t$ to $t+1$ and plugging in \Cref{eqn:Y-regret,eqn:X-regret}, we obtain
	\begin{align*}
		\int_{t}^{t+1}\E{\inner{\+Z^{(s)}-\*e_{a^*}}{\ell^{(t)}}}\dd{s} 
		&= \int_{t}^{t+1}\E{\inner{\+Y^{(s)}-\*e_{k^*}}{\wh L^{(s)}}}\;\dd s +\int_{t}^{t+1} \E{\inner{\+X^{(s)}_k-\*e_{j^*}}{\hat\ell^{(t)}_{k^*}}}\dd{s}\\
		&\le B_\psi(\*e_k, Y^{(t)}) - B_\psi(\*e_k,Y^{(t+1)}) + B_{\phi_k}(\*e_j,X^{(t)}_k)-B_{\phi_k}(\*e_j,X^{(t+1)}_k).
	\end{align*}
	Summing above over $t$ from $0$ to $T-1$ finishes the proof.
\end{proof}

\subsubsection{Comparison of \texorpdfstring{$R_a(T)$}~ and \texorpdfstring{$\@R_a(T)$}~}\label{sec:compare}

For any fixed loss sequence $\ell^{(0)}, \ell^{(1)},\dots,\ell^{(T-1)}$, we bound the difference between the regret $R_a(T)$ of Algorithm~\ref{algo:main-algo} and the continuous regret $\@R_a(T)$ for any arm $a$. Formally, we establish the following lemma:
\begin{lemma}\label{lem:compare}
	\[
	R_a(T)-\@R_a(T)
	\le \frac{1}{2}\sum_{t=0}^{T-1} \E{\sup_{\xi\in\rect(Y^{(t)},\ol Y^{(t+1)})} \|\wh L^{(t)}\|^2_{\grad^{-2}\psi(\xi)} + \sum_{k\in [K]} Y^{(t)}(k)\cdot \sup_{\zeta_k\in\rect(X^{(t)}_k,\ol X^{(t+1)}_k)} \|\hat\ell^{(t)}_k\|^2_{\grad^{-2}\phi_k(\zeta_k)}}.
	\]
\end{lemma}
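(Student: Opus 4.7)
The plan is to express $R_a(T)-\@R_a(T)$ in a clean pathwise form and then Taylor-expand along the continuous flow. Since $Z^{(t)}=\+Z^{(t)}$ and the $\ell^{(t)}(a)$ terms cancel between the two sums, we have $R_a(T)-\@R_a(T)=\sum_{t}\int_{t}^{t+1}\E{\inner{\+Z^{(t)}-\+Z^{(s)}}{\ell^{(t)}}}\dd s$. The first useful observation is a \emph{structural} one: because $\psi(\*y)=-2\sum_k\sqrt{y(k)}$ is separable and $\hat\ell^{(t)}_k\equiv 0$ for every $k\neq k_t$, the $\+Y$-ODE keeps $\+Y^{(s)}(k)=Y^{(t)}(k)$ and the $\+X_k$-ODE keeps $\+X^{(s)}_k=X^{(t)}_k$ for every $k\neq k_t$. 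Thus only the $k_t$-block contributes, and writing $L_\ell^{(s)}(k)\defeq\sum_j\+X^{(s)}_k(j)\ell^{(t)}_k(j)$, the integrand simplifies to $Y^{(t)}(k_t)L_\ell^{(t)}(k_t)-\+Y^{(s)}(k_t)L_\ell^{(s)}(k_t)$. I would split this difference as $\bigl(Y^{(t)}(k_t)-\+Y^{(s)}(k_t)\bigr)L_\ell^{(t)}(k_t)+\+Y^{(s)}(k_t)\bigl(L_\ell^{(t)}(k_t)-L_\ell^{(s)}(k_t)\bigr)$, each summand being non-negative by monotone decay along the flow.

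For each summand I would plug in the derivatives of the flow: from $\dv{\grad\psi(\+Y^{(s)})}{s}=-\wh L^{(s)}$ one obtains $\dv{\+Y^{(s)}(k_t)}{s}=-2(\+Y^{(s)}(k_t))^{3/2}\wh L^{(s)}(k_t)$, and from $\dv{\grad\phi_{k_t}(\+X^{(s)}_{k_t})}{s}=-\hat\ell^{(t)}_{k_t}$ together with the identity $\hat\ell^{(t)}_{k_t}=\ell^{(t)}_{k_t}/Y^{(t)}(k_t)$ one obtains $\dv{L_\ell^{(s)}(k_t)}{s}=-Y^{(t)}(k_t)\|\hat\ell^{(t)}_{k_t}\|^{2}_{\grad^{-2}\phi_{k_t}(\+X^{(s)}_{k_t})}$. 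Integrating each identity from $t$ to $s$, then integrating over $s\in[t,t+1]$ (the resulting weight $(t+1-u)$ integrates to $\tfrac12$), and using the coordinatewise monotonicities $\+Y^{(u)}(k_t)\le Y^{(t)}(k_t)$ and $\wh L^{(u)}(k_t)\le \wh L^{(t)}(k_t)$, I can bound the pathwise quantity by $L_\ell^{(t)}(k_t)\cdot(Y^{(t)}(k_t))^{3/2}\wh L^{(t)}(k_t)+\tfrac12\,(Y^{(t)}(k_t))^{2}\sup_{\zeta_{k_t}}\|\hat\ell^{(t)}_{k_t}\|^{2}_{\grad^{-2}\phi_{k_t}(\zeta_{k_t})}$, where the supremum ranges over $\rect(X^{(t)}_{k_t},\ol X^{(t+1)}_{k_t})$.

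To match the right-hand side of the lemma I would invoke two elementary identities. First, the pathwise unbiased-estimator relation $L_\ell^{(t)}(k_t)=Y^{(t)}(k_t)\wh L^{(t)}(k_t)$ turns the first summand into $(Y^{(t)}(k_t))^{5/2}(\wh L^{(t)}(k_t))^{2}$, which is at most $(Y^{(t)}(k_t))^{3/2}(\wh L^{(t)}(k_t))^{2}$ because $Y^{(t)}(k_t)\le 1$. Since $\grad^{-2}\psi(\xi)$ is diagonal with entries $2\,\xi(k)^{3/2}$ and $\wh L^{(t)}$ is supported on $\{k_t\}$, this last quantity equals $\tfrac12\sup_{\xi\in\rect(Y^{(t)},\ol Y^{(t+1)})}\|\wh L^{(t)}\|^{2}_{\grad^{-2}\psi(\xi)}$, with the supremum attained at $\xi(k_t)=Y^{(t)}(k_t)$. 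Second, because $\hat\ell^{(t)}_k\equiv 0$ for $k\neq k_t$, the full sum $\sum_{k}Y^{(t)}(k)\sup_{\zeta_k}\|\hat\ell^{(t)}_k\|^{2}_{\grad^{-2}\phi_k(\zeta_k)}$ collapses to the single term $Y^{(t)}(k_t)\sup_{\zeta_{k_t}}\|\hat\ell^{(t)}_{k_t}\|^{2}$, and the bound $(Y^{(t)}(k_t))^{2}\le Y^{(t)}(k_t)$ gives the correct coefficient for the second summand. Summing over $t$ and taking expectation then yields the claimed inequality. The main obstacle is the bookkeeping of normalization factors: the Taylor expansion naturally produces an awkward $(Y^{(t)}(k_t))^{5/2}$ which only aligns with the $2\xi(k)^{3/2}$ Hessian entries of $\psi$ after the unbiased-estimator identity, with the residual gap closed by $Y^{(t)}(k_t)\le 1$; one also needs to check that the trajectories stay in the rectangles $\rect(Y^{(t)},\ol Y^{(t+1)})$ and $\rect(X^{(t)}_k,\ol X^{(t+1)}_k)$, which follows from the coordinatewise monotonicity of the ODE flows.
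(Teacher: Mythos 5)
Your proposal is correct, and I checked the key steps: the reduction to $\sum_t\int_t^{t+1}\E{\inner{\+Z^{(t)}-\+Z^{(s)}}{\ell^{(t)}}}\dd s$ is valid, the off-group coordinates are indeed frozen because $\hat\ell^{(t)}_k\equiv 0$ for $k\neq k_t$ and both potentials are separable, the ODE derivatives and the containment of the trajectories in $\rect(Y^{(t)},\ol Y^{(t+1)})$ and $\rect(X^{(t)}_{k_t},\ol X^{(t+1)}_{k_t})$ follow from coordinatewise monotone decay, and the final matching (via $L_\ell^{(t)}(k_t)=Y^{(t)}(k_t)\wh L^{(t)}(k_t)$, $Y^{(t)}(k_t)\le 1$, and the collapse of the $k$-sum to the single pulled group) gives exactly the stated bound. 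Your route is, however, genuinely different in execution from the paper's. The paper keeps all groups and coordinates, rewrites the gap in terms of the estimated losses, linearizes $\grad\psi$ and $\grad\phi_k$ by an entrywise mean value theorem to get $\+Y^{(s)}\ge Y^{(t)}-(s-t)\grad^{-2}\psi(\xi^{(s)})\wh L^{(t)}$ and its $X$-analogue, then expands the product $Z^{(t)}(k,j)-\+Z^{(s)}(k,j)$ and discards the nonpositive quadratic cross term; this never uses the explicit form of $\psi$, the support structure of $\hat\ell^{(t)}$, or $Y^{(t)}(k)\le 1$, so it is portable to any separable convex potentials with nonnegative loss estimates. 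Your argument instead works pathwise with the true losses (avoiding the unbiasedness swap altogether), exploits that only the $k_t$-block moves, and integrates the exact flow; the price is the detour through $(Y^{(t)}(k_t))^{5/2}(\wh L^{(t)}(k_t))^2$, which only aligns with the $\psi$-norm after invoking the estimator identity and $Y^{(t)}(k_t)\le 1$, and the whole computation is tied to the specific Tsallis/negative-entropy potentials. In exchange your first term is in fact slightly sharper (the supremum is attained at the endpoint $Y^{(t)}(k_t)$), which is harmless for the lemma. One cosmetic caveat: with the learning-rate scalings actually used later in the paper, $\grad^{-2}\psi(\xi)=\mathrm{diag}\tp{2\eta\,\xi(k)^{3/2}}$ and $\grad^{-2}\phi_k(\zeta)=\mathrm{diag}\tp{\eta_k\zeta(j)}$, so your flow identities should carry the factors $\eta$ and $\eta_k$; since every step of yours can be written as $\dv{\+Y^{(s)}}{s}=-\grad^{-2}\psi(\+Y^{(s)})\wh L^{(s)}$ and its $X$-analogue, this rides along without changing anything.
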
 

\begin{proof}
	By the definition of the regret, we have
	\begin{align*}
		R_a(T) 
		&= \E{\sum_{t=0}^{T-1}\inner{Z^{(t)}-\*e_a}{\hat\ell^{(t)}}} \\
		&= \sum_{t=0}^{T-1}\E{\inner{Z^{(t)}-\*e_a}{\hat\ell^{(t)}}} \\
		&= \sum_{t=0}^{T-1}\E{\int_{t}^{t+1}\inner{\+Z^{(s)}-\*e_a}{\hat\ell^{(t)}}\;\dd s + \int_{t}^{t+1}\inner{Z^{(t)}-\+Z^{(s)}}{\hat\ell^{(t)}}\;\dd s}\\
		&=\@R_a(T) + \sum_{t=0}^{T-1}\E{\int_{t}^{t+1}\inner{Z^{(t)}-\+Z^{(s)}}{\hat\ell^{(t)}}\;\dd s},
	\end{align*}
	where the first equality holds due to Fubini's theorem.
	Therefore, we only need to bound the term $\sum_{t=0}^{T-1} \E{\int_{t}^{t+1} \inner{Z^{(t)}-\+Z^{(s)}}{\hat\ell^{(t)}}\;\dd s}$.
	
	
	Fix $t\in\set{0,1,\dots,T-1}$. We have shown in the proof of \Cref{lem:meet} that 
	\[
	\+X_k^{(s)}(j)=X_k^{(t)}(j)\cdot \exp\tp{-(s-t)\eta_k\hat{\ell}_k^{(t)}(j)}\le X_k^{(t)}(j)
	\]
	for any $s\in [t,t+1)$ and any $j\in [m_k]$. 
	
	Recall that $\wh L^{(s)}(k) = \sum_{j\in [m_k]} \+X^{(s)}_k(j)\cdot \hat\ell^{(t)}_k (j)$ for every $k\in [K]$. Then by the discussion above, we have $\wh L^{(s)} \le \wh L^{(t)}$ for any $s\in [t,t+1)$. As a result, it follows from~\eqref{eqn:ode-Y} that for any $s\in [t,t+1)$,
	\begin{equation}\label{eqn:gradY-bound}
		\grad\psi (\+Y^{(s)})-\grad\psi(Y^{(t)})=\int_{t}^s -\wh{L}^{(w)}\;\dd w\ge -(s-t)\cdot \wh{L}^{(t)}.	
	\end{equation}
	
	Recall that for any two vectors $\*x,\*y$ of the same dimension, $\rect(\*x,\*y)$ is the rectangle between $\*x$ and $\*y$. Since our $\psi$ is a \emph{separable function} (and therefore $\grad^2\psi$ is diagonal), we can apply the \emph{mean value theorem} entrywise and obtain 
	\begin{equation}\label{eqn:gradY-Hessian-bound}
		\grad\psi (\+Y^{(s)})-\grad\psi(Y^{(t)})=\grad^2\psi(\xi^{(s)})(\+Y^{(s)}-Y^{(t)})	
	\end{equation}
	for some $\xi^{(s)}\in\rect(\+Y^{(s)}, Y^{(t)})$. 
	
	By our choice of $\psi$, it holds that $\grad^2\psi(\xi^{(s)})\mgne 0$ for any $\xi^{(s)}\in\rect(\+Y^{(s)}, Y^{(t)})$.  Therefore, combining \Cref{eqn:gradY-bound,eqn:gradY-Hessian-bound}, we have 
	\[
	\+Y^{(s)}\ge Y^{(t)}-(s-t)\cdot \grad^{-2}\psi(\xi^{(s)})\cdot \wh{L}^{(t)}.
	\]
	Similar argument yields that
	\[
	\+X_{k}^{(s)}\ge X_{k}^{(t)}-(s-t)\cdot \grad^{-2}\phi_{k}(\zeta_k^{(s)})\cdot \hat{\ell}_{k}^{(t)}
	\] 
	for some $\zeta_k^{(s)}\in\rect(\+X_{k}^{(s)}, X_{k}^{(t)})$.
	
	Therefore for any $k\in [K]$, $j\in [m_k]$ and any $s\in [t,t+1)$, we can bound the difference between $Z^{(t)}(k,j)$ and $\+Z^{(s)}(k,j)$:
	\begin{align*}
		&\phantom{{}={}}Z^{(t)}(k,j) - \+Z^{(s)}(k,j) =Y^{(t)}(k)\cdot X^{(t)}_k(j) - \+Y^{(s)}(k)\cdot \+X^{(s)}_k(j)\\
		&\le Y^{(t)}(k)\cdot X^{(t)}_k(j) - \tp{Y^{(t)}(k)-(s-t)\cdot \left[\grad^{-2}\psi(\xi^{(s)})\cdot \wh{L}^{(t)}\right](k)}\cdot\tp{X_{k}^{(t)}(j)-(s-t)\cdot \left[\grad^{-2}\phi_{k}(\zeta_k^{(s)})\cdot \hat{\ell}_{k}^{(t)}\right](j)}\\
		&=-(s-t)^2\cdot \left[\grad^{-2}\psi(\xi^{(s)})\cdot\wh L^{(t)}\right](k)\cdot\left[\grad^{-2}\phi_k(\zeta^{(s)}_k)\cdot \hat\ell^{(t)}_k\right](j) +(s-t)\cdot X^{(t)}_k(j)\cdot\left[\grad^{-2}\psi(\xi^{(s)})\cdot\wh L^{(t)}\right](k)\\
		&\phantom{{}={}}+(s-t)\cdot Y^{(t)}(k)\cdot\left[\grad^{-2}\phi_k(\zeta^{(s)}_k)\cdot\hat\ell^{(t)}_k\right] (j)\\
		&\le  (s-t)\cdot X^{(t)}_k(j)\cdot\left[\grad^{-2}\psi(\xi^{(s)})\cdot\wh L^{(t)}\right](k)
		+(s-t)\cdot Y^{(t)}(k)\cdot\left[\grad^{-2}\phi_k(\zeta^{(s)}_k)\cdot\hat\ell^{(t)}_k\right] (j)
	\end{align*}
	for some $\xi^{(s)}\in\rect(\+Y^{(s)}, Y^{(t)})$ and $\zeta_k^{(s)}\in\rect(\+X_{k}^{(s)}, X_{k}^{(t)})$.
	
	\smallskip
	We are now ready to bound the gap between $R_a(T)$ and $\@R_a(T)$:
	\begin{align*}
		R_a(T)-\@R_a(T)
		&=\sum_{t=0}^{T-1}\E{\int_{t}^{t+1}\inner{Z^{(t)}-\@Z^{(s)}}{\hat{\ell}^{(t)}}}\\
		&\le \underbrace{\sum_{t=0}^{T-1}\E{\int_t^{t+1} (s-t)\tp{\sum_{k\in [K]}\sum_{j\in [m_k]} X^{(t)}_k(j)\cdot\sup_{\xi\in\rect(Y^{(t)},\ol Y^{(t+1)})}\left[\grad^{-2}\psi(\xi)\cdot\wh L^{(t)}\right](k)}\cdot\hat\ell^{(t)}_k(j)\;\dd s}}_{(A)}\\
		&\quad+\underbrace{\sum_{t=0}^{T-1}\E{\int_t^{t+1} (s-t)\tp{\sum_{k\in [K]}\sum_{j\in [m_k]}Y^{(t)}(k)\cdot\sup_{\zeta_k\in\rect(X^{(t)}_k,\ol X^{(t+1)}_k)}\left[\grad^{-2}\phi_k(\zeta_k)\cdot\hat\ell^{(t)}_k\right](j)}\cdot\hat\ell^{(t)}_k(j)\;\dd s}}_{(B)}.
	\end{align*}
	Note that in both expressions (A) and (B) above, only the term $(s-t)$ depend on $s$. So we can integrate and obtain:
	\begin{align}
		(A)\label{eqn:A-norm}
		&=\frac{1}{2}\sum_{t=0}^{T-1}\E{\tp{\sum_{k\in [K]}\sum_{j\in [m_k]} X^{(t)}_k(j)\cdot\sup_{\xi\in\rect(Y^{(t)},\ol Y^{(t+1)})}\left[\grad^{-2}\psi(\xi)\cdot\wh L^{(t)}\right](k)}\cdot\hat\ell^{(t)}_k(j)}\\\notag
		&=\frac{1}{2}\sum_{t=0}^{T-1}\E{\sum_{k\in [K]} \sup_{\xi\in\rect(Y^{(t)},\ol Y^{(t+1)})}\left[\grad^{-2}\psi(\xi)\cdot\wh L^{(t)}\right](k) \cdot \tp{\sum_{j\in[m_k]} X^{(t)}_k(j)\cdot \hat\ell^{(t)}_k(j)}}\\\notag
		&=\frac{1}{2}\sum_{t=0}^{T-1}\E{\sum_{k\in [K]} \sup_{\xi\in\rect(Y^{(t)},\ol Y^{(t+1)})}\left[\grad^{-2}\psi(\xi)\cdot\wh L^{(t)}\right](k) \cdot \wh L^{(t)}(k)}\\\notag
		&=\frac{1}{2}\sum_{t=0}^{T-1} \E{\sup_{\xi\in\rect(Y^{(t)},\ol Y^{(t+1)})} \|\wh L^{(t)}\|^2_{\grad^{-2}\psi(\xi)}}.
	\end{align}
	Similarly, 
	\begin{align}
		(B)\label{eqn:B-norm}
		&=\frac{1}{2}\sum_{t=0}^{T-1}\E{\tp{\sum_{k\in [K]}\sum_{j\in [m_k]}Y^{(t)}(k)\cdot\sup_{\zeta_k\in\rect(X^{(t)}_k,\ol X^{(t+1)}_k)}\left[\grad^{-2}\phi_k(\zeta_k)\cdot\hat\ell^{(t)}_k\right](j)}\cdot\hat\ell^{(t)}_k(j)}\\\notag
		&=\frac{1}{2}\sum_{t=0}^{T-1}\E{\sum_{k\in [K]} Y^{(t)}(k)\cdot \sup_{\zeta_k\in\rect(X^{(t)}_k,\ol X^{(t+1)}_k)} \|\hat\ell^{(t)}_k\|^2_{\grad^{-2}\phi_k(\zeta_k)}}.
	\end{align}
	
	Combining \Cref{eqn:A-norm,eqn:B-norm}, we have
	\begin{equation}\label{eqn:gap-norm-bound}
		R_a(T)-\@R_a(T)
		\le \frac{1}{2}\sum_{t=0}^{T-1} \E{\sup_{\xi\in\rect(Y^{(t)},\ol Y^{(t+1)})} \|\wh L^{(t)}\|^2_{\grad^{-2}\psi(\xi)} + \sum_{k\in [K]} Y^{(t)}(k)\cdot \sup_{\zeta_k\in\rect(X^{(t)}_k,\ol X^{(t+1)}_k)} \|\hat\ell^{(t)}_k\|^2_{\grad^{-2}\phi_k(\zeta_k)}}.
	\end{equation}
	
\end{proof}

%
%

If we apply the ``regret decomposition theorem'' in~\cite{HZ22} and use the standard OSMD bound for each stage, we will get the term 
\begin{equation}\label{eqn:regret-HZ}
	\sup_{\zeta_{k^*}\in\rect(X^{(t)}_{k^*},\ol X^{(t+1)}_{k^*})} \|\hat\ell^{(t)}_{k^*}\|^2_{\grad^{-2}\phi_{k^*}(\zeta_{k^*})}
\end{equation}
where $k^*$ is the index of the group containing the optimal arm instead of the term
\[
	\sum_{k\in [K]} Y^{(t)}(k)\cdot \sup_{\zeta_k\in\rect(X^{(t)}_k,\ol X^{(t+1)}_k)} \|\hat\ell^{(t)}_k\|^2_{\grad^{-2}\phi_k(\zeta_k)}
\]
in \cref{eqn:gap-norm-bound}. The new $Y^{(t)}(k)$ term is crucial to our optimal regret bound since it cancels a $Y^{(t)}(k)$ term hidden in the denominator of $\|\hat\ell^{(t)}_k\|^2_{\grad^{-2}\phi_k(\zeta_k)}$. This will be clear in \Cref{sec:proof}.


\subsubsection{The Regret of Algorithm~\ref{algo:main-algo}}\label{sec:proof}
Note that the regret of Algorithm~\ref{algo:main-algo} is composed of the two parts in \Cref{lem:continuous-regret} and \Cref{lem:compare}. In this section, we will prove \Cref{thm:ub_clique} by providing more specific bounds for the terms in these two lemmas.

\begin{proof}[Proof of \Cref{thm:ub_clique}]
By definition of Bregman divergence, 
$$B_\psi(\*e_k, Y^{(0)}) =\psi(\*e_k)-\psi(Y^{(0)}) -\inner{\nabla\psi(Y^{(0)})}{\*e_k-Y^{(0)}}.$$
Since we initialize  $Y^{(0)}=\argmin_{b\in \Delta_{K-1}}\psi(b)$,   $Y^{(0)}(k)=\frac{1}{K}$  for $k\in [K]$ and $\inner{\nabla\psi(Y^{(0)})}{\*e_k-Y^{(0)}}\ge 0$ follows the first-order optimality condition for $Y^{(0)}$. Thus 
$$B_\psi(\*e_k, Y^{(0)}) \leq \psi(\*e_k)-\psi(Y^{(0)}) =\frac{-2+ 2\sqrt{K}}{\eta}\leq \frac{2\sqrt{K}}{\eta}.$$
Similarly we have $X_k^{(0)}(j)=\frac{1}{m_k}$ for $j\in [m_k]$ and 
$$B_{\phi_k}(\*e_j,X^{(0)}_k)\leq \phi_k(\*e_j)-\phi_k(X^{(0)}_k)=\frac{\log m_k}{\eta_k}.$$
Therefore \begin{equation}\label{eqn:continuous}
	\@R_a(T)\leq \frac{2\sqrt{K}}{\eta}+\frac{\log m_k}{\eta_k}.
\end{equation}

Recall that $A_t=(k_t,j_t)$ is the arm pulled by the algorithm at round $t$. Now we plug our estimator $ \hat\ell_k^{(t)}(j)=\frac{\1{k_t=k}}{Y^{(t)}({k})}\ell_k^{(t)}(j)$ and $\nabla^2\psi(\xi)=\text{diag}\tuple{\frac{1}{2\eta \xi(1)^{3/2}},\frac{1}{2\eta\xi(2)^{3/2}},\cdots, \frac{1}{2\eta\xi(K)^{3/2}} }$ into the first term on the RHS of \Cref{lem:compare}.
\begin{align*}
	\E{\sup_{\xi\in\rect(Y^{(t)},\ol Y^{(t+1)})} \|\wh L^{(t)}\|^2_{\grad^{-2}\psi(\xi)} } &= 2\eta \E{ \sup_{\xi\in\rect(Y^{(t)}, \ol{Y}^{(t+1)}) }\sum_{k\in [K]}\xi(k)^{3/2}\cdot \tuple{ \frac{\1{k_t = k}}{Y^{(t)}(k)}\sum_{j\in[m_k]}\ell_k^{(t)}(j) {X}_k^{(t)}(j) }^2 }  \\
	&\overset{(a)}{\leq}2\eta \E{ \sum_{k\in [K]}\tuple{Y^{(t)}(k)}^{3/2}\cdot \tuple{ \frac{\1{k_t=k}}{Y^{(t)}(k)}\sum_{j\in[m_k]}\ell_k^{(t)}(j) {X}_k^{(t)}(j) }^2 }\\
	&\overset{(b)}{\le} 2\eta\E{ \E{ \sum_{k\in [K]}  \frac{\1{k_t=k}}{\sqrt{Y^{(t)}(k)}} }\ \Bigg|\ Y^{(t)}}\\
	&= 2 \eta \sum_{k=1}^K\E{\sqrt{Y^{(t)}(k)} }\overset{(c)}{\leq}  2\eta \sum_{k=1}^K\sqrt{\E{Y^{(t)}(k)}} \leq 2\eta\sqrt{K}.
\end{align*}
In the calculation above: $(a)$ follows from $\ol{Y}^{(t+1)}(k)\leq Y^{(t)}(k)$, $(b)$ is due to $\sum_{j\in[m_k]}\ell_k^{(t)}(j) {X}_k^{(t)}(j)\in[0,1]$, and $(c)$ is due to Jensen's inequality.

Similarly we have for the second term with $\nabla^2\phi_k(\zeta_k)=\text{diag}\tuple{\frac{1}{\eta_k \zeta_k(1)},\frac{1}{\eta_k \zeta_k(2)},\cdots, \frac{1}{\eta_k \zeta_k(m_k)}  }$
\begin{align*}
	&\phantom{{}={}}\E{ \sum_{k\in [K]} Y^{(t)}(k)\cdot \sup_{\zeta_k\in\rect(X^{(t)}_k,\ol X^{(t+1)}_k)} \|\hat\ell^{(t)}_k\|^2_{\grad^{-2}\phi_k(\zeta_k)}}\\
	&=\E{\sum_{k\in[K]}\eta_kY^{(t)}(k)\cdot \sup_{\zeta_k\in \rect(X_k^{(t)}, \ol{X}_k^{(t+1)})}  \sum_{j\in[m_k]} \zeta_k(j)\cdot\tuple{\frac{\1{k_t = k}}{Y^{(t)}(k)}\ell_k^{(t)}(j)}^2} \\
	&\overset{(d)}{\leq} \E{\sum_{k\in[K]}\eta_kY^{(t)}(k)\cdot  \sum_{j\in[m_k]} X^{(t)}_k(j)\cdot\tuple{\frac{\1{k_t=k}}{Y^{(t)}(k)}\ell_k^{(t)}(j)}^2} \\
	&\overset{(e)}{\leq}\E{\E{\sum_{k\in[K]}\eta_k\cdot  \sum_{j\in[m_k]} X^{(t)}_k(j)\cdot \frac{\1{k_t= k}}{Y^{(t)}(k)}} \ \Bigg| \ Y^{(t)}(k)}\\
	&= \sum_{k\in[K]}\eta_k  \sum_{j\in[m_k]}X_k^{(t)}(j) = \sum_{k\in[K]}\eta_k.
\end{align*}
In the calculation above: $(d)$ follows from $\ol{X}_k^{(t+1)}(j)\leq X_k^{(t)}(j)$ and $(e)$ is due to $\ell_k^{(t)}(j)\in[0,1]$.

Hence, summing up above two terms from $0$ to $T-1$, we obtain
\begin{equation}\label{eqn:compare}
	R_a(T)-\@R_a(T)\leq \eta \sqrt{K}T +\frac{1}{2}T \sum_{k\in[K]}\eta_k.
\end{equation}
Combining \Cref{eqn:continuous,eqn:compare} and choosing $\eta=\frac{1}{\sqrt{T}}$ and  $\eta_k=\frac{\log \tp{m_k+1}}{\sqrt{T\sum_{k=1}^K\log (m_k+1)}}$, we obtain for any fixed arm $a$,
\[
	R_a(T)\leq \frac{2\sqrt{K}}{\eta}+\frac{\log m_{k}}{\eta_{k}} + \frac{T}{2}\sum_{k\in [K]}\eta_k+\eta T\sqrt{K}\leq O\tuple{\sqrt{T\sum_{k=1}^K\log (m_k+1)}}.
\]
\end{proof}

\subsection{A Reduction from \BAI to \MAB}\label{subsec:bai_ub}
In this section, we prove an upper bound of $O\tuple{\sum_{k=1}^K\frac{\log (m_k+1)}{\eps^2}}$ for $\*m$-\BAI. We achieve this by constructing a PAC algorithm for $\*m$-\BAI from an algorithm for $\*m$-\MAB through the following lemma.

Let $r(T,\vec{L})$ be a real valued function with the time horizon $T$ and loss sequence $\vec{L}=\tp{\ell^{(1)},\dots, \ell^{(T)}}$ as its input. Let $\@H$ be a \BAI instance. With fixed $T>0$, we use $\E[\@H]{r(T,\vec{L})}$ to denote the expectation of $r(T,\vec{L})$ where $\ell^{(t)}$ in $\vec L$ is drawn from $\@H$ independently for every $t\in [T]$. Let $\+H$ be a set of \BAI instances.
\begin{lemma}\label{lem: reduction}
	Let $\+A$ be an algorithm for $\*m$-\MAB with regret $R_{a^*}(T,\+A,\vec{L})\leq r(T,\vec{L})$ for every time horizon $T$ and every loss sequence $\vec{L}$. Then there exists an $(\eps,0.05)$-PAC algorithm $\+A'$ for $\*m$-\BAI that terminates in $T^*$ rounds where $T^*$ is the solution of the equation 
	\[
		T^* = \frac{2500\cdot \max_{\vec{L}}r(T^*,\vec{L})}{\eps}.
	\]

	Moreover, if we only care about identifying an $\eps$-optimal arm with probability $0.95$ when the input is chosen from a known family $\+H$, we can construct an algorithm solving this problem that terminates in $T^*_{\+H}$ rounds where $T^*_{\+H}$ is the solution of the equation 
	\[
		T^*_{\+H} = \frac{2500\cdot \max_{\@H\in \+H}\E[\@H]{r(T^*_{\+H},\vec{L})}}{\eps}.
	\]
\end{lemma}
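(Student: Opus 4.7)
The plan is to simulate $\+A$ on the stochastic losses generated by the BAI instance, treating them as if they were adversarial, and output an arm sampled from the algorithm's empirical pull distribution. Concretely, I would run $\+A$ for $T^*$ rounds with time horizon $T^*$, record the pulled arms $A_0,\dots,A_{T^*-1}$, draw $t$ uniformly from $\set{0,\dots,T^*-1}$, and return $J = A_t$; equivalently, $J$ is sampled from the empirical pull frequencies $T_i/T^*$.

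Next I would reduce the $(\eps,0.05)$-PAC guarantee for $J$ to a one-line bound on $\E{p_J - p_{i^*}}$, where $p_i$ denotes the mean loss of arm $i$ and $i^*$ is the true best arm. Fix any realization $\vec L=(\ell^{(0)},\dots,\ell^{(T^*-1)})$. The hypothesis on $\+A$ gives $\E{\sum_t \ell^{(t)}(A_t)} - \sum_t \ell^{(t)}(a^*) \le r(T^*,\vec L)$, and since $a^*$ is the empirical minimizer of $\vec L$ we have $\sum_t \ell^{(t)}(a^*) \le \sum_t \ell^{(t)}(i^*)$, which lets me replace $a^*$ by $i^*$ at no cost. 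Averaging over the random loss sequence, using that $A_t$ is independent of $\ell^{(t)}$ conditional on the history so $\E{\ell^{(t)}(A_t)} = \E{p_{A_t}}$, and dividing by $T^*$, I obtain
\[
	\E{p_J} - p_{i^*} \;\le\; \frac{\E{r(T^*,\vec L)}}{T^*} \;\le\; \frac{\max_{\vec L} r(T^*,\vec L)}{T^*} \;=\; \frac{\eps}{2500},
\]
where the final identity uses the defining equation for $T^*$. Since $p_J \ge p_{i^*}$ almost surely, Markov's inequality yields $\PP{p_J - p_{i^*} \ge \eps} \le 1/2500 < 0.05$, proving $\+A'$ is $(\eps,0.05)$-PAC.

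For the family-specific refinement, the only modification I would make is to avoid the worst case over $\vec L$ and instead average over $\vec L$ generated by the unknown instance $\@H \in \+H$, replacing $\max_{\vec L} r$ by $\E[\@H]{r}$; the definition of $T^*_{\+H}$ then yields the same conclusion, and the Markov step goes through verbatim. I do not anticipate any substantive obstacle: the factor $2500$ leaves ample slack relative to what Markov actually requires. The only mildly delicate points are the replacement of $a^*$ (against which the regret bound is stated) by $i^*$ (the target of BAI), resolved by the pointwise inequality above, and the existence of a solution to $T^* = 2500\,r(T^*)/\eps$, which follows from the sublinear growth of $r(T)$ in $T$ for all regret bounds appearing in the paper.
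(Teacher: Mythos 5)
Your proposal is correct and follows essentially the same route as the paper: run $\+A$ for $T^*$ rounds on the stochastically generated losses, output an arm drawn from the empirical pull distribution $\tp{T_1/T^*,\dots,T_N/T^*}$, and conclude via the regret bound plus Markov's inequality (with the same replacement of the empirical minimizer $a^*$ by the true best arm $i^*$). The only difference is cosmetic: you apply Markov once directly to $\E{p_J-p_{i^*}}$, giving failure probability $1/2500$, whereas the paper splits the argument into a Markov step on the conditional regret and a union bound yielding $0.05$.
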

\begin{proof}
	Given an instance $\@H$ of $\*m$-\BAI, we run $\+A$ for $T^*$ rounds. Let $T_i$ be the number of times that the arm $i$ has been pulled, i.e., $T_i=\sum_{t=0}^{T^*-1}\bb{1}[A_t=i]$. Let $\ol{Z}=\tp{\ol{Z}_1,\ol{Z}_2,\dots,\ol{Z}_N} = \tp{\frac{T_1}{T^*}, \frac{T_2}{T^*}, \dots, \frac{T_N}{T^*}}$ be a distribution on $N$ arms. We construct $\+A'$ by simply sampling from $\ol{Z}=\tp{\frac{T_1}{T^*}, \frac{T_2}{T^*}, \dots, \frac{T_N}{T^*}}$ and outputting the result.

	Recall that $p_i$ is the mean of the $i$-th arm in $\@H$ and arm $a^*$ is the one with the minimum mean.  Define the gap vector $\Delta=(p_1-p_{a^*},\cdots,p_N-p_{a^*})$. Note that $\ol{Z}$ is a random vector and define conditional expected regret $R(\ol{Z})= \inner{\Delta}{\ol{Z}}\cdot T^*$ given $\ol{Z}$. Thus the expected regret $\E[\ol{Z}]{R(\ol{Z})}\leq \max_{\vec{L}}r(T^*,\vec{L})$. By Markov's inequality, $R(\ol{Z})\leq 100\max_{\vec{L}}r(T^*,\vec{L})$ with probability at least $0.99$. Now we only consider $\ol{Z}$ conditioned on $R(\ol{Z})\leq 100\max_{\vec{L}}r(T^*,\vec{L})$.  Let $B\subseteq [N]$ denote the ``bad set'' which contains arms that are not $\eps$-optimal. Then $\eps T^*\sum_{i\in B} \ol{Z}_i\leq 100 \max_{\vec{L}}r(T^*,\vec{L})$. Note that $T^* = \frac{2500\cdot \max_{\vec{L}}r(T^*,\vec{L})}{\eps}$. Therefore $\sum_{i\in B} \ol{Z}_i\leq 0.04$. In total, this algorithm will make a mistake with probability no more than $0.05$ by the union bound.

	When we only care about the input instances chosen from $\+H$, we run $\+A$ for $T^*_{\+H}$ rounds and similarly, we output an arm drawn from $ \tp{\frac{T_1}{T^*_{\+H}}, \frac{T_2}{T^*_{\+H}}, \dots, \frac{T_N}{T^*_{\+H}}}$. It is easy to verify via the same arguments that this algorithm can output an $\eps$-optimal arm with probability $0.95$ when the input is chosen from $\+H$.
\end{proof}

Then we can use the Algorithm~\ref{algo:main-algo} and \Cref{thm:ub_clique} to give an upper bound for $\*m$-\BAI. 

\begin{proof}[Proof of \Cref{thm:ub-m-BAI}]
	We use Algorithm~\ref{algo:main-algo} to construct an $\tp{\eps,0.05}$-PAC algorithm for $\*m$-\BAI as described in Lemma~\ref{lem: reduction}. Since the regret satisfies $R(T)\leq c{\sqrt{T\sum_{k=1}^K\log (1+m_k)}}$ for some constant $c$ on every loss sequence by \Cref{thm:ub-m-MAB}, running Algorithm~\ref{algo:main-algo} with $T^*=\frac{(2500c)^2\sum_{k=1}^K\log (1+m_k)}{\eps^2}$, we can get an $(\eps,0.05)$-PAC algorithm which always terminates in $O\tuple{\sum_{k=1}^K\frac{\log (m_k+1)}{\eps^2}}$ rounds.
\end{proof}

\subsection{The Strongly Observable Graph with Self-loops} \label{subsec:strongly-ub}

We can generalize our results to any strongly observable graph $G=(V,E)$ with each vertex owning a self-loop. Assume $G$ contains a $(V_1,\dots,V_K)$-clique cover. We construct a new graph $G^\prime=(V,E^\prime)$ by ignoring the edges between any two distinct cliques. It is clear that $R^*(G,T)\leq R^*(G^\prime,T)$. Then we can prove \Cref{cor:ub-graph-bandit} by directly applying Algorithm~\ref{algo:main-algo} with feedback graph $G'$. This proves \Cref{cor:ub-graph-bandit}, which asserts that
\[
	R^*(G,T) = O\tp{\sqrt{T\cdot\sum_{k=1}^K\log(m_k+1)}}.
\]

Although we assume that each vertex contains a self-loop for the sake of simplicity, we note that our algorithm can still be applied to strongly observable graphs that have some vertices without self-loops. In such cases, we can incorporate an additional exploration term into our algorithm, and a similar analysis to that in \Cref{sec:analysis} still works. 

There have been several works using the clique cover as the parameter to bound the minimax regret of graph bandit. For example,~\cite{EK21} applies FTRL algorithm with a carefully designed potential function which combines the Tsallis entropy with negative entropy. It achieves a regret of $\tp{\log T}^{O(1)}\cdot O\tp{\sqrt{KT}}$. Our new bound takes into account the size of each clique and is always superior.

\section{Lower Bounds for \texorpdfstring{$\*m$}~-\BAI}\label{sec: BAI lb}


Let $\+A$ be an algorithm for $\*m$-\BAI where $\*m=(m_1,\dots,m_K)$ is a vector. Given an instance of $\*m$-\BAI, we use $T$ to denote the number of rounds the algorithm $\+A$ proceeds. Recall that for every group $k\in [K]$ and $j\in [m_k]$, we use $T_{(k,j)}$ to denote the number of times that the arm $(k,j)$ has been pulled. For every $k\in [K]$, let $T^{(k)} = \sum_{j\in [m_k]} T_{(k,j)}$ be the number of rounds the arms in the $k$-th group have been pulled. We also use $N_{(k,j)}$ to denote the number of times the arm $(k,j)$ has been {observed}. Clearly $N_{(k,j)} = T^{(k)}$.

In the following part, we only consider stochastic environment. That is, $\ell^{(t)}$ is independently drawn from the same distribution for each $t\in \bb N$. Therefore, we omit the superscript $(t)$ and only use $\ell(i)$ or $\ell_k(j)$ to denote the one-round loss of arm $i$ or arm $(k,j)$ respectively when the information is clear from the context. 

In \Cref{subsec: BAI-lb}, we lower bound the number of rounds for a PAC algorithm on a specific $\*m$-\BAI instance with $\*m=(m)$ and then prove the result for $\*m$-\BAI in \Cref{subsec: block BAI-lb}. We then use these results to prove a regret lower bound for $\*m$-\MAB and bandit problems with general feedback graphs in \Cref{sec: regret lb}. 

\subsection{An Instance-Specific Lower Bound for \texorpdfstring{$(m)$}~-\BAI}\label{subsec: BAI-lb}


In this section, we study the number of rounds required for $(m)$-\BAI in an $(\eps,0.05)$-PAC algorithm. In this setting, the pull of any arm can observe the losses of all arms. We will establish a lower bound for a specified instance, namely the one where all arms follow $\!{Ber}(\frac{1}{2})$. This is key to our lower bound later. 


We focus on instances of $(m)$-\BAI where each arm is Bernoulli. As a result, each instance can be specified by a vector $\tp{p_1,\dots,p_{m-1},p_{m}}\in\bb R^{m}$ meaning the loss of arm $i$ follows $\!{Ber}(p_i)$ in each round \emph{independently}.

Let $\eps\in \tp{0,\frac{1}{2}}$. In the following context, when we denote an instance as $\@{H}^{\*m}$, the superscript $\*m$ indicates that it is an $\*m$-\BAI instance. Consider the following $m+1$ $(m)$-\BAI instances $\set{\@{H}_j^{(m)}}_{j\in[m]\cup \set{0}}$:
\begin{itemize}
	\item The instance $\@{H}_0^{(m)}$ is $\tp{\frac{1}{2}, \frac{1}{2},\frac{1}{2}, \cdots, \frac{1}{2}}$. That is, $p_i=\frac{1}{2}$ for every $i\in[m]$ in $\@{H}_0^{(m)}$;
	\item For $j\in[m]$, 
	\[
		\@{H}_j^{(m)} = \tp{\frac{1}{2},\frac{1}{2},\cdots,\frac{1}{2}, \mathop{\frac{1}{2}-\eps}\limits_{\substack{\uparrow \\ \mbox{the }j\mbox{-th arm}}}, \frac{1}{2}, \cdots, \frac{1}{2}};
	\]
	that is, the instance satisfies $p_j=\frac{1}{2}-\eps$ and $p_i=\frac{1}{2}$ for every $i\neq j$.
\end{itemize}

We say an algorithm $\+A$ distinguishes $\set{\@{H}_j^{(m)}}_{j\in[m]\cup\set{0}}$ with probability $p$ if 
\[
\Pr{\+A \mbox{ outputs }j\mid \mbox{the input instance is }\@H^{(m)}_j}	\ge p,
\]
and the output can be arbitrary among $\set{0,1,\dots m}$ when the input is not in $\set{\@{H}_j^{(m)}}_{j\in[m]\cup\set{0}}$.

The main result of this section is 

\begin{lemma}\label{lem:block-BAI-lb}
	Let $\+A$ be an $(\eps,0.05)$-PAC algorithm. Assume $m\geq 2$. There exists a universal constant $c_1>0$ such that $\+A$ terminates on $\@{H}^{(m)}_0$ after at least $\frac{c_1}{\eps^2}\log (m+1)$ rounds in expectation.
\end{lemma}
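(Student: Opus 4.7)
I would follow the ``direct-sum'' strategy sketched in \Cref{sec: tech}. Let $P_h$ denote the law of the observation history of $\+A$ up to its stopping time $T$ when $\+A$ is run on instance $\@H_h^{(m)}$. The PAC guarantee gives $P_j[\hat J = j] \geq 0.95$ for each $j \in [m]$; since the events $\set{\hat J = j}_{j \in [m]}$ are pairwise disjoint under $P_0$ and their probabilities sum to at most $1$, averaging yields some index $j^\star \in [m]$ with $P_0[\hat J = j^\star] \leq 1/m$.

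First, I would replace the Bernoulli arms by Gaussian arms of the same means and unit variance, prove the analog of the lemma for this Gaussian setup, and then transfer back to Bernoulli using that the per-round KL divergences $\DKL\tp{\mathrm{Ber}(1/2),\mathrm{Ber}(1/2-\eps)}$ and $\DKL\tp{N(1/2,1),N(1/2-\eps,1)}$ are both $\Theta(\eps^2)$. The Gaussian setup is cleaner because per-round likelihood ratios are log-affine in the observations and all relevant moments are exactly computable.

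The core of the Gaussian argument is a second-moment estimate for the $P_0$-martingales
\[
	L_j^{(t)} \;:=\; \prod_{s=1}^t \frac{P_j^{\mathrm{round}}(X_s)}{P_0^{\mathrm{round}}(X_s)}.
\]
In the Gaussian model, $\E[P_0]{L_j^{(t)} L_{j'}^{(t)}} = 1$ for $j \neq j'$ since the per-round mean shifts live in orthogonal coordinates, while $\E[P_0]{(L_j^{(t)})^2} = \exp(t\eps^2)$ by the Gaussian moment generating function. Combining these, at any fixed horizon $t$,
\[
	\chi^2\left(\frac{1}{m}\sum_{j=1}^m P_j^{(t)} \,\bigg\|\, P_0^{(t)}\right) \;=\; \frac{\exp(t\eps^2) - 1}{m},
\]
which stays $O(1)$ as long as $t\eps^2 \leq \log(m+1)$, so the null $P_0^{(t)}$ and the symmetric mixture are essentially indistinguishable in this regime.

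To convert this fixed-horizon bound into one on $\E[P_0]{T}$, I would set $T_0 := C\,\E[P_0]{T}$ for a large constant $C$, so that Markov's inequality gives $P_0[T \leq T_0] \geq 1 - 1/C$, and then apply change-of-measure Cauchy--Schwarz to the event $A := \set{\hat J = j^\star,\ T \leq T_0}$, which is measurable with respect to the observations up to time $T_0$ since $T$ is a stopping time:
\[
	P_{j^\star}[A] \;\leq\; \sqrt{P_0[A] \cdot \E[P_0]{\big(L_{j^\star}^{(T_0)}\big)^2}} \;\leq\; \sqrt{\frac{\exp(T_0\eps^2)}{m}}.
\]
If $T_0\eps^2 \leq c_1 \log(m+1)$ for sufficiently small $c_1$, the right-hand side would be below $0.5$, and together with $P_{j^\star}[\hat J = j^\star] \geq 0.95$ this would force $P_{j^\star}[T > T_0]$ to be bounded below by an absolute positive constant. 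The main obstacle is then to upgrade this lower bound on $\E[P_{j^\star}]{T}$ into one on $\E[P_0]{T}$; I plan to close this gap by symmetrizing $\+A$ over a uniformly random relabeling of arm indices (which leaves $P_0$ invariant while averaging the $P_j$'s into a common, $j$-independent distribution) and combining with Wald's KL identity $\DKL\tp{P_0^T, P_{j^\star}^T} = \E[P_0]{T} \cdot \Theta(\eps^2)$ applied in both directions to tie the two expectations together.
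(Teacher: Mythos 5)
Your Gaussian second-moment computation is sound and is essentially the same information-theoretic core as the paper's \Cref{lem: gaussian KL} (the cross terms $\int \mathbf{p}_{\@N_i}\mathbf{p}_{\@N_j}/\mathbf{p}_{\@N_0}$ there are exactly your $\mathbb{E}_{P_0}[L_jL_{j'}]$ and $\mathbb{E}_{P_0}[L_j^2]$), and your Cauchy--Schwarz step does force $P_{j^\star}[T>T_0]$ to be a positive constant when $T_0\eps^2\le c_1\log(m+1)$ (modulo a constants slip: $\sqrt{\exp(T_0\eps^2)/m}<0.5$ fails for $m\in\{2,3,4\}$, which is one reason the paper treats bounded $m$ separately via the likelihood argument of \Cref{lem: non-adapt-lb-small-m}). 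The genuine gap is the final transfer that you yourself flag. You need a lower bound on $\mathbb{E}_{P_0}[T]$, but your argument only shows the algorithm runs long under the single alternative $P_{j^\star}$, and the proposed repair --- symmetrization plus ``Wald's KL identity applied in both directions'' between $P_0$ and $P_{j^\star}$ --- cannot deliver it. The divergence decomposition in the direction $\DKL(P_{j^\star}^{(T)},P_0^{(T)})=\mathbb{E}_{P_{j^\star}}[T]\cdot\Theta(\eps^2)$ controls the wrong expectation, and KL is asymmetric, so no inequality ties it to $\DKL(P_0^{(T)},P_{j^\star}^{(T)})=\mathbb{E}_{P_0}[T]\cdot\Theta(\eps^2)$. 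If instead you use the error probabilities pairwise (Bretagnolle--Huber or Pinsker between $P_0$ and one $P_j$), they only force $\DKL(P_0^{(T)},P_j^{(T)})=\Omega(1)$, i.e.\ $\mathbb{E}_{P_0}[T]=\Omega(1/\eps^2)$: under your contradiction hypothesis the pairwise KL may already be as large as $c_1\log(m+1)$, so $e^{-\DKL}$ is polynomially small in $m$ and the pairwise bound is vacuous --- the $\log m$ factor is exactly what pairwise comparisons lose. Symmetrization equalizes the $\mathbb{E}_{P_j}[T]$ among themselves but does not relate them to $\mathbb{E}_{P_0}[T]$.

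The fix is to keep the \emph{mixture} (equivalently, the disjointness of the events $\{\hat J=j,\,T\le T_0\}$ over all $j$) in the step that touches $P_0$. For instance, after symmetrizing, every $P_j[T>T_0]$ is bounded below, hence so is $P_{\mathrm{mix}}[T>T_0]$, while your own chi-square bound gives $\mathrm{TV}\big(P_{\mathrm{mix}}^{(T_0)},P_0^{(T_0)}\big)\le\frac12\sqrt{(\exp(T_0\eps^2)-1)/m}$, a small constant for small $c_1$; this contradicts $P_0[T>T_0]\le 1/C$ from Markov. This mixture-versus-null comparison is precisely the paper's route: it bounds $\DKL(\mathbf{P}_{\@N_{\mathrm{mix}}},\mathbf{P}_{\@N_0})$ in \Cref{lem: gaussian KL}, applies Bretagnolle--Huber (\Cref{thm: huber}) at a fixed horizon to get a tail bound under the all-$\!{Ber}(\frac12)$ instance (\Cref{lem: gaussian-lb,lem: non-adapt-BAI-lb}, passing from Gaussian to Bernoulli by an explicit thresholding map rather than KL matching), and then converts a PAC algorithm into a distinguisher with an $O(\eps^{-2})$ verification phase to obtain the expectation bound (\Cref{lem: block-BAI-lb-simple}), with the bounded-$m$ case handled separately. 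So the ingredients you chose are right, but the final step as stated would fail; replace the pairwise Wald/KL transfer by a mixture-versus-$P_0$ comparison on events measurable at horizon $T_0$.
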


We will prove the lemma in \Cref{ssubsec: gau2ber} via a reduction from a lower bound for \emph{Gaussian arms} established in \Cref{ssubsec: gaussian}.

\subsubsection{The Gaussian Arms}\label{ssubsec: gaussian}

In this section, we relax the constraint on the range of each arm's loss and allow the losses to be arbitrary real numbers. Let $\eps\in\tp{0,\frac{1}{2}}$ and $\sigma\in \tp{\frac{1}{2\sqrt{2\pi}},\frac{1}{\sqrt{2\pi}}}$. We construct $m+1$ instances $\set{\@{N}_j}_{j\in\set{0}\cup [m]}$ with Gaussian distributions:
\begin{itemize}
	\item In the instance $\@{N}_0$, for each $i\in[m]$, $\ell(i)$ is independently drawn from a Gaussian distribution $\+N(0,\sigma^2)$;
	\item In the instance $\@{N}_j$ for $j\in[m]$, $\ell(j)\sim \+N(-\eps,\sigma^2)$ and $\ell(i)\sim \+N(0,\sigma^2)$ for each $i\neq j$ and $i\in[m]$ independently. 
\end{itemize}



\begin{lemma}[Bretagnolle-Huber inequality, see e.g.~\cite{LS20}]\label{thm: huber}
	Let $\*{P}_1$ and $\*{P}_2$ be two probability measures on the same measurable space $\tp{\Omega,\+F}$, and let $E\in \+F$ be an arbitrary event. Then 
	\[
		\*P_1[E]+\*P_2[\ol E]\geq \frac{1}{2}e^{-\DKL\tp{\*{P}_1, \*{P}_2}}
	\]
\end{lemma}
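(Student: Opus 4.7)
The plan is to pass through the Hellinger affinity $\int \sqrt{p_1 p_2}$ (where $p_1, p_2$ are densities of $\*P_1, \*P_2$ with respect to a common dominating measure, say $\*P_1 + \*P_2$). The first step is the easy observation
\[
\*P_1[E] + \*P_2[\bar E] = \int_E p_1\,\dd\mu + \int_{\bar E} p_2\,\dd\mu \ge \int \min(p_1,p_2)\,\dd\mu,
\]
which reduces the task to proving $\int \min(p_1,p_2)\,\dd\mu \ge \tfrac12 e^{-\DKL(\*P_1,\*P_2)}$. This lower bound on the ``overlap'' is the actual content of the inequality.

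Next, I would apply the pointwise identity $\min(a,b)\cdot\max(a,b) = ab$ together with Cauchy--Schwarz to get
\[
\Bigl(\int \sqrt{p_1 p_2}\,\dd\mu\Bigr)^2 = \Bigl(\int \sqrt{\min(p_1,p_2)}\cdot\sqrt{\max(p_1,p_2)}\,\dd\mu\Bigr)^2 \le \Bigl(\int \min(p_1,p_2)\,\dd\mu\Bigr)\Bigl(\int \max(p_1,p_2)\,\dd\mu\Bigr).
\]
Setting $a = \int \min(p_1,p_2)\,\dd\mu$ and $b = \int \max(p_1,p_2)\,\dd\mu$, we have $a + b = \int (p_1 + p_2)\,\dd\mu = 2$, so in particular $b \le 2$. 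Hence
\[
a \ge \frac{1}{b}\Bigl(\int \sqrt{p_1 p_2}\,\dd\mu\Bigr)^2 \ge \frac{1}{2}\Bigl(\int \sqrt{p_1 p_2}\,\dd\mu\Bigr)^2.
\]

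It remains to bound the Hellinger affinity from below by $e^{-\DKL(\*P_1,\*P_2)/2}$. Here I would use Jensen's inequality applied to the concave function $\log$ under the measure $\*P_1$:
\[
\log \int \sqrt{p_1 p_2}\,\dd\mu = \log \E_{\*P_1}\!\Bigl[\sqrt{p_2/p_1}\Bigr] \ge \E_{\*P_1}\!\Bigl[\log \sqrt{p_2/p_1}\Bigr] = -\tfrac12\DKL(\*P_1,\*P_2).
\]
Exponentiating and squaring gives $\bigl(\int \sqrt{p_1 p_2}\,\dd\mu\bigr)^2 \ge e^{-\DKL(\*P_1,\*P_2)}$, which combined with the previous display yields the desired inequality.

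The only potential subtlety is handling the case where $\*P_1$ is not absolutely continuous with respect to $\*P_2$ (so $\DKL(\*P_1,\*P_2) = \infty$), but the inequality is trivial there since the right-hand side is $0$. Otherwise, working with $\mu = \*P_1 + \*P_2$ as the dominating measure makes all densities well-defined and the Jensen step legitimate. I do not expect a major obstacle; the proof is a textbook combination of Cauchy--Schwarz and Jensen, and the main thing to be careful about is stating the reduction to overlap cleanly so that the chain of inequalities composes correctly.
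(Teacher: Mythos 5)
Your proof is correct. The paper itself does not prove this lemma---it simply cites~\cite{LS20}---and your argument (reduce to the overlap $\int\min(p_1,p_2)\dd{\mu}$, then Cauchy--Schwarz via $\min\cdot\max=p_1p_2$ with $\int\max\le 2$, then Jensen to bound the Hellinger affinity by $e^{-\DKL(\*P_1,\*P_2)/2}$) is essentially the standard textbook proof found in that reference, with the degenerate case $\DKL(\*P_1,\*P_2)=\infty$ correctly noted as trivial.
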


Let $\@{N}_{\!{mix}}$ be the mixture of $\set{\@{N}_{j}}_{j\in[m]}$ meaning that the environment chooses $k$ from $[m]$ uniformly at random and generates losses according to $\@{N}_k$ in the following \BAI game. Let $\+A$ be an algorithm distinguishing $\set{\@{N}_{j}}_{j\in[m]\cup \set{0}}$. Let $\Omega$ be the set of all possible outcomes during the first $t^*$ rounds, including the samples according to the input distribution and the output of $\+A$ (if $\+A$ does not terminate after the $t^*$-th round, we assume its output is $-1$). Note that if the algorithm terminates in $t'<t^*$ rounds, we can always add $t^*-t'$ virtual rounds so that it still produces a certain loss sequence in $\bb R^{m\times t^*}$. 

As a result, each outcome $\omega\in \Omega$ can be viewed as a pair $\omega = (w, x)$ where $w\in \bb R^{m\times t^*}$ is the loss sequence and $x\in\set{-1,0,1,\dots,m}$ indicates the output of $\+A$. Thus $\Omega = W\times \set{-1,0,1,\dots,m}$ where $W=\bb R^{m\times t^*}$.

To ease the proof below, we slightly change $\+A$'s output: if the original output is $x\in \set{-1,0,\dots,m}$, we instead output a uniform real in $[x,x+1)$. Therefore, we can let $\Omega=W\times X$ where $W=\bb R^{m\times t^*}$ and $X=\bb R$. The benefit of doing so is that we can let $\+F$ be the Borel sets in $\Omega$ which is convenient to work with. Clearly it is sufficient to establish lower bounds for the algorithms after the change.


For any instance $\@{H}^{(m)}$, let $\*{P}_{\@{H}^{(m)}}$ be the measure of outcomes of $\+A$ in $t^*$ rounds with input instance $\@{H}^{(m)}$ and $\*p_{\@{H}^{(m)}}$ be the corresponding probability density function (PDF). Then $\*{P}_{\@{N}_0}$ and $\*{P}_{\@{N}_{\!{mix}}}$ are two probability measures on $(\Omega,\+F)$ and $\*p_{\@{N}_{\!{mix}}}(\omega)=\frac{1}{m}\sum_{j\in[m]}\*p_{\@{N}_j}(\omega)$ for any $\omega=(w,x)\in \Omega=\bb R^{m\times t^*+1}$. We also let $\*p^W_{\@{H}^{(m)}}$ be the PDF of the samples during the first $t^*$ rounds according to the input $\@{H}^{(m)}$ and $\*p^X_{\@{H}^{(m)}}$ be the PDF of $\+A$'s output. Furthermore, we let $\*p^{X|W}_{\@{H}^{(m)}}$ to be the conditional density function of $X$ given $W$. By definition, we have $\*p^{X|W}_{\@{H}^{(m)}}(x|w) = \frac{\*p_{\@{H}^{(m)}}(\omega)}{\*p^{W}_{\@{H}^{(m)}}(w)}$.


\begin{lemma}\label{lem: gaussian KL}
	\[
		\DKL\tp{\*{P}_{\@{N}_{\!{mix}}},\*{P}_{\@{N}_0}}\leq \log \frac{{m-1+\exp\tuple{\frac{\eps^2t^*}{\sigma^2}}}}{m}.
	\]
\end{lemma}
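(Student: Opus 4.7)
The plan is to reduce the KL to a KL between the loss distributions alone, then apply Jensen's inequality to exchange the log and the mixture average, and finally carry out a Gaussian second-moment computation.

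First I would use the data-processing property of KL divergence with respect to the natural factorization $\*p_{\@H^{(m)}}(\omega) = \*p^W_{\@H^{(m)}}(w)\cdot \*p^{X|W}(x\mid w)$. Because the algorithm $\+A$ is a fixed (possibly randomized) map from observations to outputs, the conditional density $\*p^{X|W}$ is the \emph{same} under every instance, so it cancels in the log-ratio and
\[
\DKL\tp{\*P_{\@N_{\!{mix}}},\*P_{\@N_0}} \;=\; \DKL\tp{\*P^W_{\@N_{\!{mix}}},\*P^W_{\@N_0}}.
\]
Moreover, since in $(m)$-\BAI every pull reveals the losses of all $m$ arms, $W$ is distributed as $t^*$ i.i.d.\ draws from the joint loss vector under the underlying instance.

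Next, write the likelihood ratio $L_k(w) \defeq \*p^W_{\@N_k}(w)/\*p^W_{\@N_0}(w)$. Since $\*p^W_{\@N_{\!{mix}}} = \frac{1}{m}\sum_{k\in[m]} \*p^W_{\@N_k}$, Jensen's inequality (concavity of $\log$) gives
\[
\DKL\tp{\*P^W_{\@N_{\!{mix}}},\*P^W_{\@N_0}} \;=\; \E_{\@N_{\!{mix}}}\!\left[\log\Big(\tfrac{1}{m}\sum_{k} L_k(w)\Big)\right] \;\le\; \log\E_{\@N_{\!{mix}}}\!\left[\tfrac{1}{m}\sum_{k} L_k(w)\right].
\]
It then suffices to evaluate $\frac{1}{m}\sum_{k} \E_{\@N_{\!{mix}}}[L_k] = \frac{1}{m^2}\sum_{j,k} \E_{\@N_j}[L_k]$.

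The key computation is instance-by-instance: for $j=k$, the single-round integral
\[
\int \frac{\varphi(w;-\eps,\sigma^2)^2}{\varphi(w;0,\sigma^2)}\,\dd w
\]
is a standard Gaussian moment that, after completing the square, equals $\exp(\eps^2/\sigma^2)$; since the $t^*$ rounds are independent this exponentiates to $\exp(t^*\eps^2/\sigma^2)$. For $j\ne k$, the arm $k$ coordinate is $\+N(0,\sigma^2)$ under $\@N_j$, hence $\E_{\@N_j}[L_k] = \int \varphi(w;-\eps,\sigma^2)\,\dd w = 1$ (the likelihood ratio integrates to $1$ against its reference). Combining, $\sum_j \E_{\@N_j}[L_k] = (m-1) + \exp(t^*\eps^2/\sigma^2)$ for every $k$, and the average over $k$ yields exactly $\frac{m-1+\exp(t^*\eps^2/\sigma^2)}{m}$.

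The main obstacle is a conceptual one rather than a computational one: recognizing that going directly through $\DKL(\@N_j \| \@N_0)$ and convexity of KL would give the much weaker bound $t^*\eps^2/(2\sigma^2)$, and that the crucial saving comes from first bringing the mixture \emph{inside} the log via Jensen, which essentially replaces KL by the $\chi^2$-type quantity $\log(1+\chi^2)$. Once this step is identified, the Gaussian integrals are routine and the remaining algebra is a matter of counting the $m$ diagonal versus $m(m-1)$ off-diagonal contributions.
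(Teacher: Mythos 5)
Your proposal is correct and follows essentially the same route as the paper's proof: cancel the algorithm's output component from the likelihood ratio, apply Jensen's inequality to move the mixture inside the logarithm, and evaluate the pairwise Gaussian integrals $\E_{\@N_j}[L_k]$, which give $1$ off-diagonal and $\exp(\eps^2 t^*/\sigma^2)$ on the diagonal. The counting $\frac{1}{m^2}\bigl(m(m-1)+m\exp(\eps^2 t^*/\sigma^2)\bigr)$ matches the paper exactly.
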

\begin{proof}
	For any $\omega=(w,x) \in \Omega$, let $w_{j,t}$ denote the $(j,t)^\text{th}$ entry of the matrix $w$ for every $j\in[m]$ and $t\in[t^*]$. That is, $w_{j,t}=\ell^{(t)}(j)$, which is the loss of arm $j$ in the $t$-th round. Then for each $i\in[m]$,
	\[
		\*p^W_{\@{N}_i}(w)=\tp{2\pi\sigma^2}^{-\frac{mt^*}{2}}\exp\tp{-\frac{\sum_{t\in [t^*]}\tp{(w_{i,t}+\eps)^2+\sum_{j\neq i}w_{j,t}^2}}{2\sigma^2}}
	\] and 
	\[
		\*p^W_{\@{N}_0}(w)=\tp{2\pi\sigma^2}^{-\frac{mt^*}{2}}\exp\tp{-\frac{\sum_{t\in [t^*],j\in[m]}w_{j,t}^2}{2\sigma^2}}.
	\]
	Therefore we have
	\begin{align*}
		\frac{\*p_{\@{N}_i}(\omega)}{\*p_{\@{N}_0}(\omega)}= \frac{\*p^W_{\@{N}_i}(w)}{\*p^W_{\@{N}_0}(w)} &=\frac{\tp{2\pi\sigma^2}^{-\frac{mt^*}{2}}\exp\tp{-\frac{\sum_{t\in [t^*]}\tp{(w_{i,t}+\eps)^2+\sum_{j\neq i}w_{j,t}^2}}{2\sigma^2}}}{\tp{2\pi\sigma^2}^{-\frac{mt^*}{2}}\exp\tp{-\frac{\sum_{t\in [t^*],j\in[m]}w_{j,t}^2}{2\sigma^2}}} \\
		&=\exp\tp{-\frac{\eps^2t^*+2\eps\sum_{t\in[t^*]}w_{i,t}}{2\sigma^2}}.
	\end{align*}
	From Jensen's inequality, we have 
	\begin{align*}
		\DKL\tp{\*{P}_{\@{N}_{\!{mix}}},\*{P}_{\@{N}_0}}&= \int_{\Omega} \log\frac{\*p_{\@{N}_{\!{mix}}}(\omega)}{\*p_{\@{N}_0}(\omega)}\d {\*P_{\@{N}_{\!{mix}}}(\omega)} \leq \log \int_{\Omega} \frac{\*p_{\@{N}_{\!{mix}}}(\omega)}{\*p_{\@{N}_0}(\omega)}\d {\*P_{\@{N}_{\!{mix}}}(\omega)} \\
		&= \log \int_{\Omega} \frac{1}{m}\sum_{j\in[m]}\*p_{\@{N}_j}(\omega)\frac{\frac{1}{m}\sum_{i\in[m]}\*p_{\@{N}_j}(\omega)}{\*p_{\@{N}_0}(\omega)}\d \omega.
	\end{align*}
	Note that for $\omega=(w,x)$,  For $i,j\in[m]$ and $i\neq j$,
	\begin{align*}
		\int_{\Omega}\*p_{\@{N}_i}(\omega)\frac{\*p_{\@{N}_j}(\omega)}{\*p_{\@{N}_0}(\omega)}\d \omega &=\int_{W}\int_{X}\*p^W_{\@{N}_i}(w)\cdot \*p^{X|W}_{\@{N}_i}(x|w) \frac{\*p^W_{\@{N}_j}(w)}{\*p^W_{\@{N}_0}(w)}\dd x\dd w \\
		&= \int_{W}\*p^W_{\@{N}_i}(w) \frac{\*p^W_{\@{N}_j}(w)}{\*p^W_{\@{N}_0}(w)}\dd w \\
		&= \tp{2\pi\sigma^2}^{-\frac{mt^*}{2}}\cdot \int_{\Omega}\exp\tp{-\frac{\sum_{t\in[t^*]}\tp{(w_{i,t}+\eps)^2+(w_{j,t}+\eps)^2}+\sum_{\substack{j'\neq i \\ j'\neq j}}w_{j',t}^2}{2\sigma^2}}\d w = 1.
	\end{align*}
	For $i\in[m]$, 
	\begin{align*}
		\int_{\Omega}\*p_{\@{N}_i}(\omega)\frac{\*p_{\@{N}_i}(\omega)}{\*p_{\@{N}_0}(\omega)}\d \omega &= \int_{W}\int_{X}\*p^W_{\@{N}_i}(w)\cdot \*p^{X|W}_{\@{N}_i}(x|w) \frac{\*p^W_{\@{N}_i}(w)}{\*p^W_{\@{N}_0}(w)}\dd x\dd w \\
		&= \int_{W}\*p^W_{\@{N}_i}(w) \frac{\*p^W_{\@{N}_i}(w)}{\*p^W_{\@{N}_0}(w)}\dd w \\
		&= \tp{2\pi\sigma^2}^{-\frac{mt^*}{2}}\cdot \int_{\Omega}\exp\tp{ -\frac{\sum_{t\in[t^*]}\tp{(w_{i,t}+2\eps)^2+\sum_{j^\prime\neq i}w_{j',t}^2}-2\eps^2t^*}{2\sigma^2}}\d w = \exp\tp{\frac{\eps^2t^*}{\sigma^2}}.
	\end{align*}

	Therefore, combining the equations above, we get
	\begin{align*}
		\int_{\Omega} \frac{1}{m}\sum_{j\in[m]}\*p_{\@{N}_j}(\omega)\frac{\frac{1}{m}\sum_{i\in[m]}\*p_{\@{N}_i}(\omega)}{\*p_{\@{N}_0}(\omega)}\d \omega &= \frac{1}{m^2}\sum_{i,j\in[m]}\int_{\Omega} \*p_{\@{N}_i}(\omega)\frac{\*p_{\@{N}_j}(\omega)}{\*p_{\@{N}_0}(\omega)} \d \omega\\
		&= \frac{m(m-1) + m\cdot \exp\tp{\frac{\eps^2t^*}{\sigma^2}}}{m^2} = \frac{m-1+\exp\tp{\frac{\eps^2t^*}{\sigma^2}}}{m},
	\end{align*}
	where the first equality follows from Fubini's theorem. This indicates that $\DKL\tp{\*{P}_{\@{N}_{\!{mix}}},\*{P}_{\@{N}_0}}\leq \log \frac{m-1+\exp\tp{\frac{\eps^2t^*}{\sigma^2}}}{m}$.
\end{proof}

Let $t^*=\frac{c_0\log (m+1)}{\eps^2}$, where  $c_0\leq\sigma^2$ is a universal constant. We have the following lemma to bound $\Pr[{\@{N}_0}]{T\geq t^*}$.  Here the randomness comes from the algorithm and environment when the input instance is $\@{N}_0$.
\begin{lemma}\label{lem: gaussian-lb}
	For any algorithm distinguishing $\set{\@{N}_j}_{j\in[m]\cup\set{0}}$ with probability $0.925$, we have $\Pr[{\@{N}_0}]{T\geq t^*}\geq 0.1$.
\end{lemma}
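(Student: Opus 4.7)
}

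The plan is to argue by contradiction, combining the Bretagnolle–Huber inequality with the KL bound from \Cref{lem: gaussian KL}. Suppose for contradiction that some algorithm $\+A$ distinguishes $\set{\@{N}_j}_{j\in[m]\cup\set{0}}$ with probability $0.925$ and yet $\Pr[\@{N}_0]{T\ge t^*}<0.1$. The key is to design a single event that is simultaneously (i) likely under $\@{N}_0$, (ii) unlikely under the mixture $\@{N}_{\mathrm{mix}}$, and (iii) measurable with respect to the first $t^*$ rounds, so that the KL calculation of \Cref{lem: gaussian KL} can be leveraged.

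Concretely, I would define $E=\set{T< t^* \text{ and } \+A \text{ outputs } 0}$. By construction, $E$ depends only on the outcomes of the first $t^*$ rounds and hence lies in the $\sigma$-algebra on which $\*P_{\@{N}_0}$ and $\*P_{\@{N}_{\mathrm{mix}}}$ are compared in \Cref{lem: gaussian KL}. A union bound shows
\[
\Pr[\@{N}_0]{E}\ge \Pr[\@{N}_0]{\+A\text{ outputs }0}-\Pr[\@{N}_0]{T\ge t^*}\ge 0.925-0.1=0.825,
\]
so $\Pr[\@{N}_0]{\overline{E}}\le 0.175$. For the mixture, since $\+A$ distinguishes every $\@{N}_j$ with $j\in[m]$ with probability $\ge 0.925$, we have $\Pr[\@{N}_j]{\+A\text{ outputs }0}\le 0.075$ for each such $j$, and averaging gives $\Pr[\@{N}_{\mathrm{mix}}]{E}\le \Pr[\@{N}_{\mathrm{mix}}]{\+A\text{ outputs }0}\le 0.075$.

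Now I would apply \Cref{thm: huber} with $\*P_1=\*P_{\@{N}_{\mathrm{mix}}}$, $\*P_2=\*P_{\@{N}_0}$, and the event $E$ above:
\[
0.075+0.175\;\ge\;\Pr[\@{N}_{\mathrm{mix}}]{E}+\Pr[\@{N}_0]{\overline{E}}\;\ge\;\tfrac{1}{2}\exp\!\bigl(-\DKL(\*P_{\@{N}_{\mathrm{mix}}},\*P_{\@{N}_0})\bigr),
\]
hence $\DKL(\*P_{\@{N}_{\mathrm{mix}}},\*P_{\@{N}_0})\ge \log 2$. Combining with \Cref{lem: gaussian KL} and the choice $t^*=c_0\log(m+1)/\eps^2$,
\[
\log 2\;\le\;\log\frac{m-1+\exp(\eps^2 t^*/\sigma^2)}{m}\;=\;\log\frac{m-1+(m+1)^{c_0/\sigma^2}}{m}.
\]
Picking the universal constant $c_0<\sigma^2$ (which is compatible with the stated $c_0\le\sigma^2$) yields $(m+1)^{c_0/\sigma^2}<m+1$ for all $m\ge 2$, so the right-hand side is strictly less than $\log 2$, a contradiction.

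The main obstacle is conceptual rather than computational: it is choosing the right event $E$ so that the assumption $\Pr[\@{N}_0]{T\ge t^*}<0.1$ feeds into a Bretagnolle–Huber bound whose KL side is controlled by \Cref{lem: gaussian KL}. In particular, one must be careful that $E$ is determined by the first $t^*$ rounds (so that the KL bound, which is stated for the $t^*$-round truncation, is applicable) and that the directions in Bretagnolle–Huber and \Cref{lem: gaussian KL} line up; using the symmetric form of Bretagnolle–Huber obtained by swapping $E$ with $\overline{E}$ (and the two measures) resolves the latter point.
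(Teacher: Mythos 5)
Your proposal is correct and follows essentially the same route as the paper: the same event (terminating within $t^*$ rounds and outputting $0$), the same union bound under $\@{N}_0$, and the same combination of Bretagnolle--Huber with the KL bound of \Cref{lem: gaussian KL} applied to $\*P_{\@{N}_{\!{mix}}}$ versus $\*P_{\@{N}_0}$. The only difference is cosmetic: you derive a contradiction on the KL value ($\DKL \ge \log 2$ versus $\le \log 2$), whereas the paper rearranges the same inequalities to contradict the accuracy guarantee under some $\@{N}_j$.
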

\begin{proof}

	Let $\+A$ be an algorithm that can distinguish $\set{\@{N}_j}_{j\in[m]\cup\set{0}}$ with probability $0.925$. Let $E$ be the event that $\+A$ terminates within $t^*$ rounds and gives answer $\@{N}_0$. Recall that $T$ is a random variable which represents the rounds that $\+A$ runs. Assume $\Pr[{\@{N}_0}]{T\geq t^*}< 0.1$. Then we have $\Pr[{\@{N}_0}]{\ol E}< 0.075+0.1$ from the union bound. Combining \Cref{thm: huber} and \Cref{lem: gaussian KL}, we get
	\[
		\Pr[{\@{N}_{\!{mix}}}]{E}\geq \frac{m}{2\tp{m-1+\exp\tp{\frac{\eps^2t^*}{\sigma^2}}}}-\Pr[{\@{N}_0}]{\ol E}> \frac{m}{2\tp{m-1+m+1}}-0.1-0.075 \geq 0.075
	\]
	for every $m\geq 1$. This indicates the existence of some $j\in[m]$ such that $\Pr[\@{N}_j]{E}>0.075$, which is in contradiction to the promised success probability of $\+A$. Therefore $\+A$ satisfies
	\[
		\Pr[{\@{N}_0}]{T\geq t^*}\geq 0.1.
	\]
\end{proof}

\subsubsection{From Gaussian to Bernoulli}\label{ssubsec: gau2ber}

We then show a reduction from Gaussian arms to Bernoulli arms which implies lower bounds for instances $\set{\@{H}_j^{(m)}}_{j\in[m]\cup \set{0}}$. 

Given an input instance from $\set{\@{N}_j}_{j\in[m]\cup\set{0}}$, we can map it to a corresponding instance among $\set{\@{H}_j^{(m)}}_{j\in[m]\cup\set{0}}$ by the following rules.

In each round, if an arm receives a loss $\ell\in \bb R$, let
\begin{equation}
    \wh \ell=\begin{cases}
        0,\quad \mbox{if } & \ell<0;\\
        1,\quad \mbox{if } & \ell\geq 0.
    \end{cases}\label{eq: hat l}
\end{equation}

Obviously, losses drawn from Gaussian distribution $\+N(0, \sigma^2)$ are mapped to $\!{Ber}\tp{\frac{1}{2}}$ losses. For a biased Gaussian $\+N\tp{-\eps,\sigma^2}$, as \Cref{fig: gau-ber} shows, it holds that
\begin{align*}
    \Pr{\wh \ell< 0}&=\int_{-\infty}^{-\eps}\frac{1}{\sqrt{2\pi}\sigma}e^{-\frac{(x+\eps)^2}{2\sigma^2}}\dd{x} + \int_{-\eps}^{0}\frac{1}{\sqrt{2\pi}\sigma}e^{-\frac{(x+\eps)^2}{2\sigma^2}}\dd{x}\\
    &=\frac{1}{2}+\int_{-\eps}^{0}\frac{1}{\sqrt{2\pi}\sigma}e^{-\frac{(x+\eps)^2}{2\sigma^2}}\dd{x}.
\end{align*}

\begin{figure}[ht]
	\centering
\tikzset{every picture/.style={line width=0.75pt}} 
\begin{tikzpicture}[x=0.75pt,y=0.75pt,yscale=-1,xscale=1]


\draw [draw opacity=0, fill=gray!30]  (248.21,41) .. controls (265.79,41.37) and (269.38,46.69) .. (278.13,48.79) -- (278.13,159.74) -- (248.21,159.74) -- (248.21,41) -- cycle;

\draw  (103.69,159) -- (410.47,159)(278.14,30.67) -- (278.14,180.6) (403.47,154) -- (410.47,159) -- (403.47,164) (273.14,37.67) -- (278.14,30.67) -- (283.14,37.67)  ;
\draw    (248.22,40.26) .. controls (309.93,40.73) and (327.51,157.16) .. (396.77,157.16) ;
\draw    (248.22,40.26) .. controls (186.5,40.73) and (168.92,157.16) .. (99.67,157.16) ;
\draw  [dash pattern={on 0.84pt off 2.51pt}]  (248.22,40.26) -- (248,158.13) ;

\draw   (72,176.5) .. controls (72,181.17) and (74.33,183.5) .. (79,183.5) -- (162.5,183.5) .. controls (169.17,183.5) and (172.5,185.83) .. (172.5,190.5) .. controls (172.5,185.83) and (175.83,183.5) .. (182.5,183.5)(179.5,183.5) -- (266,183.5) .. controls (270.67,183.5) and (273,181.17) .. (273,176.5) ;
\draw   (283,177) .. controls (283.02,181.67) and (285.36,183.99) .. (290.03,183.97) -- (331.03,183.79) .. controls (337.7,183.76) and (341.04,186.08) .. (341.06,190.75) .. controls (341.04,186.08) and (344.36,183.73) .. (351.03,183.7)(348.03,183.72) -- (392.03,183.52) .. controls (396.7,183.5) and (399.02,181.16) .. (399,176.49) ;

\draw (265,159.72) node [anchor=north west][inner sep=0.75pt]   [align=left] {$0$};
\draw (240,162) node [anchor=north west][inner sep=0.75pt]   [align=left] {$-\eps$};
\draw (231,13.4) node [anchor=north west][inner sep=0.75pt]    {$\frac{1}{\sqrt{2\pi}\sigma}$};
\draw (158,192.4) node [anchor=north west][inner sep=0.75pt]    {$\hat \ell =0$};
\draw (327,192.4) node [anchor=north west][inner sep=0.75pt]    {$\hat \ell =1$};

\end{tikzpicture}
\caption{From Gaussian to Bernoulli}
\label{fig: gau-ber}
\end{figure}
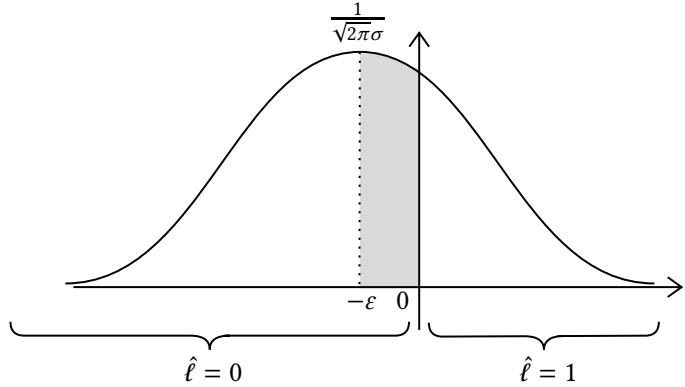

Let $f(\sigma)=\int_{-\eps}^{0}\frac{1}{\sqrt{2\pi}\sigma}e^{-\frac{(x+\eps)^2}{2\sigma^2}}\dd{x}$ denote the shadowed area in \Cref{fig: gau-ber}. Note that $f$ is continuous with regard to $\sigma$ and 
\[
    f(\sigma)\in\tp{\frac{\eps}{\sqrt{2\pi}\sigma}e^{-\frac{\eps^2}{2\sigma^2}}, \frac{\eps}{\sqrt{2\pi}\sigma}}.
\]
Assume that $\eps<\frac{1}{8}$. Therefore, there exists $\sigma_0\in\tp{\frac{1}{2\sqrt{2\pi}},\frac{1}{\sqrt{2\pi}}}$ such that $f(\sigma_0)=\eps$. Choose $\sigma=\sigma_0$. Then we map $\+N(-\eps,\sigma^2)$ to $\!{Ber}\tp{\frac{1}{2}-\eps}$ and transform the sample space from $\bb R^{m\times t^*}$ to $\set{0,1}^{m\times t^*}$.

\begin{lemma}\label{lem: non-adapt-BAI-lb}
	Let $\eps$ be a number in $\tp{0,\frac{1}{8}}$. For any algorithm distinguishing $\set{\@{H}^{(m)}_j}_{j\in[m]\cup\set{0}}$ with probability $0.925$, we have $\Pr[\@{H}^{(m)}_0]{T\geq t^*}\geq 0.1$.
\end{lemma}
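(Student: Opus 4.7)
My plan is to prove this Bernoulli statement by a sample-space reduction to the Gaussian version already established in \Cref{lem: gaussian-lb}. Given any algorithm $\+A$ distinguishing the Bernoulli instances $\set{\@H^{(m)}_j}_{j\in[m]\cup\set{0}}$ with probability $0.925$, I will construct a companion algorithm $\+A'$ for the Gaussian instances $\set{\@N_j}_{j\in[m]\cup\set{0}}$ with the same success probability and whose termination time $T$ has exactly the same distribution as that of $\+A$. Concretely, $\+A'$ interposes the thresholding map $\wh\ell$ from \eqref{eq: hat l} between the Gaussian environment and $\+A$: whenever a real-valued loss $\ell$ is revealed, $\+A'$ feeds the single bit $\wh\ell\in\set{0,1}$ to $\+A$ and finally returns whatever index $\+A$ produces.

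The key step is to verify that with $\sigma=\sigma_0$ the pushforward distributions of the two families coincide exactly. First I would observe that by symmetry of the Gaussian density about $0$, under $\+N(0,\sigma_0^2)$ the bit $\wh\ell$ is distributed as $\!{Ber}\tp{\tfrac12}$, and the defining identity $f(\sigma_0)=\eps$ was arranged precisely so that under $\+N(-\eps,\sigma_0^2)$ the bit $\wh\ell$ is distributed as $\!{Ber}\tp{\tfrac12-\eps}$. Because the map acts coordinate-wise, independence across arms and across rounds is preserved, so the joint law of the bit stream that $\+A$ sees when $\+A'$ is fed $\@N_j$ coincides with the joint law of observations that $\+A$ would see on the Bernoulli instance $\@H^{(m)}_j$, for every $j\in[m]\cup\set{0}$. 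The hypothesis $\eps<\tfrac18$ is what secures the existence of a suitable $\sigma_0$ inside the interval $\tp{\tfrac{1}{2\sqrt{2\pi}},\tfrac{1}{\sqrt{2\pi}}}$ required by \Cref{lem: gaussian-lb}.

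Once this equivalence of laws is in hand, the conclusion is immediate. Since $\+A'$ returns whatever $\+A$ returns, the probability that $\+A'$ distinguishes $\@N_j$ equals the probability that $\+A$ distinguishes $\@H^{(m)}_j$, and $\+A'$ therefore inherits the $0.925$ guarantee; moreover the round count $T$ of $\+A'$ on $\@N_j$ equals in distribution that of $\+A$ on $\@H^{(m)}_j$. Applying \Cref{lem: gaussian-lb} to $\+A'$ on input $\@N_0$ then yields $\Pr[{\@N_0}]{T\ge t^*}\ge 0.1$, which by the equality of laws is precisely $\Pr[{\@H^{(m)}_0}]{T\ge t^*}\ge 0.1$.

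The main obstacle is ensuring that the pushforward of $\+N(-\eps,\sigma^2)$ under the thresholding map hits $\!{Ber}\tp{\tfrac12-\eps}$ exactly rather than approximately; any mismatch would break the coupling and erode the inherited success probability. This is handled by the intermediate value theorem applied to the continuous function $f$ on the stated $\sigma$-interval, whose range provably contains $\eps$ whenever $\eps<\tfrac18$. Once this single real number is pinned down, the rest of the argument is a routine relabeling of the sample space that commutes with the algorithm's internal randomness and stopping rule.
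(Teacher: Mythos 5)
Your proposal is correct and follows essentially the same route as the paper: interpose the thresholding map \eqref{eq: hat l} to turn a Bernoulli-distinguishing algorithm into a Gaussian-distinguishing one with identical success probability and round-count distribution, using $f(\sigma_0)=\eps$ (available since $\eps<\tfrac18$) to make the pushforwards exactly $\!{Ber}(\tfrac12)$ and $\!{Ber}(\tfrac12-\eps)$, and then invoke \Cref{lem: gaussian-lb}. The paper phrases this as a proof by contradiction while you argue directly, but the reduction and all key ingredients are the same.
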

\begin{proof}
	Assume that there exists such an algorithm $\+A$ with $\Pr[\@{H}^{(m)}_0]{T\geq t^*}< 0.1$. We then construct an algorithm $\+A'$ to distinguish $\set{\@{N}_j}_{j\in[m]\cup\set{0}}$.

	The algorithm $\+A'$ proceeds as follows: When $\+A'$ receives a loss $\ell$, it first calculates $\wh \ell$ as \Cref{eq: hat l} and treats $\wh \ell$ as the loss to apply $\+A$. If $\+A$ outputs $\@{H}^{(m)}_j$, $\+A'$ output $\@{N}_j$. Therefore, $\+A'$ also succeeds with probability $0.925$ while satisfying $\Pr[\@{N}_0]{T\geq t^*}< 0.1$. This violates \Cref{lem: gaussian-lb}.
\end{proof}

We remark that we cannot replace $\@{H}^{(m)}_0$ by $\@{H}^{(m)}_j$ for any $j\in [m]$ in \Cref{lem: non-adapt-BAI-lb}, since an ``$\@{H}^{(m)}_j$ favourite'' algorithm exists for every $j\in [m]$. For example, an ``$\@{H}^{(m)}_1$ favourite'' algorithm is as follows: one first sample the arms for $\frac{2\log \frac{1}{0.03}}{\eps^2}$ rounds. If the empirical mean $\wh p_1< \frac{1}{2}-\frac{\eps}{2}$, terminate and output $\@{H}^{(m)}_1$. Otherwise apply an algorithm which can distinguish $\set{\@{H}_j^{(m)}}_{j\in[m]\cup\set{0}}$ with probability $0.96$. By the Hoeffding's inequality, the error probability in the first stage is at most $0.03$. Therefore, this ``$\@{H}^{(m)}_1$ favourite'' algorithm has success probability $0.925$ and with high probability, it only needs to play $\frac{2\log \frac{1}{0.03}}{\eps^2}$ rounds when the input instance is $\@{H}^{(m)}_1$.

Then we are ready to prove \Cref{lem:block-BAI-lb}, which is a direct corollary of the following lemma.
\begin{lemma}\label{lem: block-BAI-lb-simple}
	Let $\eps$ be a number in $\tp{0,\frac{1}{8}}$ and assume $m\geq 2$. There exists a constant $c_1>0$ such that for any algorithm $\+A$ which can output an $\eps$-optimal arm on any instance among $\set{\@{H}^{(m)}_j}_{j\in[m]\cup\set{0}}$ with probability at least $0.95$, we have $\E[\@{H}^{(m)}_0]{T}\geq \frac{c_1\log (m+1)}{\eps^2}$.
\end{lemma}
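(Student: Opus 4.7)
The proof plan is to reduce \Cref{lem: block-BAI-lb-simple} to \Cref{lem: non-adapt-BAI-lb} by converting the given algorithm $\+A$ into a distinguishing algorithm $\+A'$ for the family $\set{\@H_j^{(m)}}_{j\in[m]\cup\set{0}}$ with success probability at least $0.925$. The lower bound on $\E[\@H_0^{(m)}]{T}$ will then follow from \Cref{lem: non-adapt-BAI-lb} via Markov's inequality.

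The key observation driving the reduction is that the unique $\eps$-optimal arm of $\@H_j^{(m)}$ for $j\in[m]$ is the $j$-th one, since any other arm has mean $\frac{1}{2}$ which is not strictly smaller than $p_{i^*}+\eps=\frac{1}{2}$. Hence the hypothesis of the lemma forces $\+A$ to output arm $j$ with probability at least $0.95$ on $\@H_j^{(m)}$. By contrast, on $\@H_0^{(m)}$ every arm is $\eps$-optimal, so the output of $\+A$ carries no information about the instance. To compensate I define $\+A'$ as follows: run $\+A$ to termination, let $j$ denote its output, then perform $n=\lceil C/\eps^2\rceil$ fresh pulls (each of which observes all arms in the $(m)$-\BAI setting) and let $\hat p_j$ denote the empirical mean of arm $j$ over these $n$ samples; output $j$ if $\hat p_j\le\frac{1}{2}-\frac{\eps}{2}$ and otherwise output $0$. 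I choose $C$ large enough that Hoeffding's inequality guarantees the check's error probability is at most $0.025$ under either $\!{Ber}(\frac{1}{2})$ or $\!{Ber}(\frac{1}{2}-\eps)$.

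A union bound then verifies that $\+A'$ distinguishes $\set{\@H_j^{(m)}}_{j\in[m]\cup\set{0}}$ with probability at least $0.925$: on $\@H_j^{(m)}$ with $j\ge 1$, both the correct output by $\+A$ (probability $\ge 0.95$) and the Hoeffding acceptance (probability $\ge 0.975$) occur, yielding output $j$; on $\@H_0^{(m)}$ the Hoeffding rejection alone has probability $\ge 0.975$, yielding output $0$. Since $T_{\+A'}\le T_{\+A}+n$ on every input, applying \Cref{lem: non-adapt-BAI-lb} to $\+A'$ gives $\Pr[\@H_0^{(m)}]{T_{\+A}+n\ge t^*}\ge 0.1$, and Markov's inequality then delivers $\E[\@H_0^{(m)}]{T_{\+A}}\ge 0.1\cdot t^*-n = \tp{0.1\,c_0\log(m+1)-C}/\eps^2$.

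The main obstacle is that the additive constant $C$ from the Hoeffding check is of the same order as $0.1\,c_0\log(m+1)$ when $m$ is small, so the inequality above does not directly yield a positive $c_1$ uniform in $m\ge 2$. For $m$ larger than some universal constant $m_0$, the $\log(m+1)$ factor dominates $C$ and one may take $c_1=0.05\,c_0$. For $2\le m<m_0$ the target bound $c_1\log(m+1)/\eps^2$ is merely $\Omega(1/\eps^2)$, which can be obtained by a direct two-hypothesis argument: choose $j^*\in[m]$ with $\Pr[\@H_0^{(m)}]{\mbox{output }j^*}\le 1/m\le 1/2$ and compare with $\Pr[\@H_{j^*}^{(m)}]{\mbox{output }j^*}\ge 0.95$ via Pinsker's inequality together with the decomposition $\DKL\tp{\*P_{\@H_0^{(m)}},\*P_{\@H_{j^*}^{(m)}}}=\E[\@H_0^{(m)}]{T}\cdot\DKL\tp{\!{Ber}(\tfrac{1}{2}),\!{Ber}(\tfrac{1}{2}-\eps)}=O\tp{\E[\@H_0^{(m)}]{T}\cdot\eps^2}$. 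Combining the two regimes yields a single universal $c_1>0$, completing the proof.
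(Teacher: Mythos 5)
Your proposal is correct, and its core reduction is the same as the paper's: append a Hoeffding verification of the arm that $\+A$ outputs (an extra $O(1/\eps^2)$ fresh rounds, exploiting that every pull observes all arms), thereby turning the $(\eps,0.05)$-PAC guarantee into an algorithm distinguishing $\set{\@{H}^{(m)}_j}_{j\in[m]\cup\set{0}}$ with probability $0.925$, and then invoke \Cref{lem: non-adapt-BAI-lb}; the paper runs this as a contradiction on the tail probability $\Pr[\@{H}^{(m)}_0]{T\ge t^*/2}<0.1$ and converts tails to expectations at the end, while you apply the lemma directly and use $\E{T}\ge a\Pr{T\ge a}$ — the same content, and both must split off the small-$m$ regime because the additive verification cost is comparable to $\log(m+1)/\eps^2$ there. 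The genuine divergence is in that small-$m$ regime: the paper falls back on \Cref{lem: non-adapt-lb-small-m}, proved in the appendix via an elementary likelihood-ratio lemma combined with Kolmogorov's maximal inequality, giving the tail bound $\Pr[\@{H}^{(m)}_0]{T\ge c_s/\eps^2}\ge 0.1$; you instead run a direct two-hypothesis test (an arm $j^*$ output with probability at most $1/m\le 1/2$ under $\@{H}^{(m)}_0$ versus at least $0.95$ under $\@{H}^{(m)}_{j^*}$) through Pinsker's inequality and the divergence decomposition $\DKL\tp{\*P_{\@{H}^{(m)}_0},\*P_{\@{H}^{(m)}_{j^*}}}=\E[\@{H}^{(m)}_0]{T}\cdot\DKL\tp{\!{Ber}(\tfrac12),\!{Ber}(\tfrac12-\eps)}=O(\E[\@{H}^{(m)}_0]{T}\,\eps^2)$, which yields the $\Omega(1/\eps^2)$ expectation bound in one step. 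Your route is shorter and avoids the appendix machinery, but it leans on the stopping-time (Wald-type) KL decomposition, which you would need to state and justify carefully (here it is clean because every round observes all $m$ arms, so the per-round divergence is instance-independent, but $T$ must be a stopping time of the observation filtration and the output a function of the stopped history plus independent randomness); the paper's likelihood lemma is more self-contained at the cost of extra bookkeeping. Your uniqueness claim for the $\eps$-optimal arm in $\@{H}^{(m)}_j$ and the final bookkeeping $\E[\@{H}^{(m)}_0]{T}\ge 0.1\tp{t^*-n}$ are both fine, so combining the two regimes into a single universal $c_1$ works as you describe.
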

\begin{proof}
	We first consider the case $c_0\log (m+1)> 4\log 40$ where $c_0$ is the universal constant in the definition of $t^*$. We reduce from the hypothesis testing lower bound in \Cref{lem: non-adapt-BAI-lb}. Assume $\+A$ satisfying $\Pr[\@{H}^{(m)}_0]{T\geq \frac{c_0\log (m+1)}{2\eps^2}}< 0.1$. Then we construct an algorithm $\+A'$ to distinguish $\set{\@{H}^{(m)}_j}_{j\in[m]\cup\set{0}}$. Given an instance among $\set{\@{H}^{(m)}_j}_{j\in[m]\cup\set{0}}$, we first apply $\+A$ to get an output arm $i$. Then we sample $\frac{2\log \frac{1}{0.025}}{\eps^2}$ rounds and check whether the empirical mean $\wh p_i\leq \frac{1}{2}-\frac{\eps}{2}$. If so, output $\@{H}^{(m)}_i$. Otherwise, output $\@{H}^{(m)}_0$. The success probability of at least $0.925$ is guaranteed by Hoeffding's inequality and the union bound. 
	
	According to our assumption, with probability larger than $0.9$, $\+A'$ terminates in $\frac{c_0\log (m+1)}{2\eps^2} + \frac{2\log \frac{1}{0.025}}{\eps^2}< \frac{c_0\log (m+1)}{\eps^2}$ rounds. This violates \Cref{lem: non-adapt-BAI-lb}.

	Then we consider the case $c_0\log (m+1)\leq 4\log 40$; that is, when $m$ is bounded by some constant. It then follows from \Cref{lem: non-adapt-lb-small-m} that $\+A$ satisfies $\Pr[\@{H}^{(m)}_0]{T\geq \frac{c_s}{\eps^2}}\geq 0.1$ for a universal constant $c_s$ when $m\geq 2$. 

	Then choosing $c_1=\min\set{\frac{c_0}{20}, \frac{c_s}{10\log (m_0+1)}}$ where $m_0=\lfloor e^{\frac{4\log 40}{c_0}}-1 \rfloor$, we have $\E[\@{H}^{(m)}_0]{T}\geq \frac{c_1\log (m+1)}{\eps^2}$ for any algorithms that can output an $\eps$-optimal arm on any instance among $\set{\@{H}^{(m)}_j}_{j\in[m]\cup\set{0}}$ with probability at least $0.95$ when $m\geq 2$.
\end{proof}

\subsection{The Lower Bound for \texorpdfstring{$\*m$}~-\BAI}\label{subsec: block BAI-lb}

Recall that in $\*m$-\BAI, the $N$ arms are partitioned into $K$ groups with size $m_1,m_2,\dots, m_K$ respectively. Each pull of an arm results in an observation of all the arms in its group. 
Consider an $\*m$-\BAI instance $\@{H}^{\*m}_0$ which consists of all fair coins. Recall that we use $T^{(k)}$ to denote the number of rounds in which the pulled arm belongs to the $k$-th group. 

We then prove the following lemma, which indicates the result of \Cref{thm:lb-m-BAI} directly.
\begin{lemma}\label{lem: block-BAI-lb}
	Let $\eps$ be a number in $\tp{0,\frac{1}{8}}$. For every $\tp{\eps,0.05}$-PAC algorithm of $\*m$-\BAI, we have $\E[\@{H}^{\*m}_0]{T^{(k)}}\geq \frac{c_1\log (m_k+1)}{\eps^2}$ for every $k\in[K]$ with $m_k\geq 2$ and $\E[\@{H}^{\*m}_0]{T}\geq \sum_{k=1}^K \frac{c_1\log (m_k+1)}{2\eps^2}$ if the total number of arms $\sum_{k=1}^K m_k\geq 2$,  where $c_1$ is the constant in \Cref{lem: block-BAI-lb-simple}.

	Moreover, these lower bounds still hold even the algorithm can identify the $\eps$-optimal arm with probability $0.95$ only when the input arms have losses drawn from either $\!{Ber}\tp{\frac{1}{2}}$ or $\!{Ber}\tp{\frac{1}{2}-\eps}$.
\end{lemma}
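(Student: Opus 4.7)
The plan is to derive the per-group bound $\E[\@H^{\*m}_0]{T^{(k)}} \ge c_1\log(m_k+1)/\eps^2$ for each $k$ with $m_k\ge 2$ by a direct-sum reduction to \Cref{lem: block-BAI-lb-simple}, and then sum over $k$ while supplementing the singleton groups with a pairwise Bretagnolle--Huber step to get the total lower bound.

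Fix $k$ with $m_k\ge 2$ and suppose for contradiction that some $(\eps,0.05)$-\PAC algorithm $\+A$ for $\*m$-\BAI satisfies $\E[\@H^{\*m}_0]{T^{(k)}}<c_1\log(m_k+1)/\eps^2$. I would build an $(\eps,0.05)$-\PAC algorithm $\+A'$ for $(m_k)$-\BAI on the family $\set{\@H^{(m_k)}_j}_{j\in[m_k]\cup\set{0}}$ that violates \Cref{lem: block-BAI-lb-simple}. Given an input from that family, $\+A'$ internally simulates $\+A$ on a virtual $\*m$-\BAI instance in which the $k$-th group is the input and all other groups consist of $\!{Ber}(\tfrac12)$ arms. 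Whenever $\+A$ pulls an arm in group $k$, $\+A'$ performs a real pull on its input and feeds the full group-$k$ loss vector back to $\+A$; whenever $\+A$ pulls an arm outside group $k$, $\+A'$ fabricates fresh $\!{Ber}(\tfrac12)$ observations locally. When $\+A$ outputs $(k',j')$, $\+A'$ returns $j'$ if $k'=k$ and a fixed default arm otherwise. On input $\@H^{(m_k)}_j$ with $j\ge 1$, the virtual instance has unique $\eps$-optimal arm $(k,j)$---every other arm has mean $\tfrac12=(\tfrac12-\eps)+\eps$, which does not strictly satisfy the $\eps$-optimality inequality---so $\+A$ returns $(k,j)$ with probability at least $0.95$ and consequently so does $\+A'$; on input $\@H^{(m_k)}_0$ the virtual instance is $\@H^{\*m}_0$ and every arm is $\eps$-optimal, so $\+A'$ trivially succeeds. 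Since $\+A'$'s pull count equals $T^{(k)}$ of $\+A$, we obtain $\E[\@H^{(m_k)}_0]{T_{\+A'}}=\E[\@H^{\*m}_0]{T^{(k)}}<c_1\log(m_k+1)/\eps^2$, contradicting \Cref{lem: block-BAI-lb-simple}.

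Because the reduction only ever presents $\+A$ with arms that are $\!{Ber}(\tfrac12)$ or $\!{Ber}(\tfrac12-\eps)$, it works under the weakened PAC guarantee of the ``Moreover'' clause. For the total bound, $\E[\@H^{\*m}_0]{T}=\sum_k\E[\@H^{\*m}_0]{T^{(k)}}\ge\sum_{k:m_k\ge 2}c_1\log(m_k+1)/\eps^2$. To absorb the singleton groups, I would apply Bretagnolle--Huber to the pair $(\@H^{\*m}_0,\@H^{(k,1)})$ for each $k$ with $m_k=1$, where $\@H^{(k,1)}$ denotes the instance that replaces group $k$'s only arm with $\!{Ber}(\tfrac12-\eps)$ and keeps all other groups fair; since $(k,1)$ is then the unique $\eps$-optimal arm, the Bretagnolle--Huber bound combined with Jensen's inequality (in the style of \Cref{lem: gaussian KL}) yields $\sum_{k:m_k=1}\E[\@H^{\*m}_0]{T^{(k)}}=\Omega(\abs{\set{k:m_k=1}}/\eps^2)$, matching $\sum_{k:m_k=1}\log 2/\eps^2$ up to a constant. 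Combining the two sums and shrinking $c_1$ produces the claimed lower bound; the slack factor of $\tfrac12$ in the statement absorbs the constant losses from these adjustments.

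The main obstacle is the singleton case: \Cref{lem: block-BAI-lb-simple} explicitly requires $m\ge 2$, because in $(1)$-\BAI any output is $\eps$-optimal on $\@H^{(1)}_0$ and zero pulls suffice, so the direct-sum reduction gives no lower bound in isolation for $m_k=1$. Recovering the $\log 2/\eps^2$ contribution per singleton has to exploit the \emph{joint} structure of $\*m$-\BAI---that a biased singleton must be correctly identified in the presence of other fair groups---through a direct information-theoretic argument rather than a black-box reduction.
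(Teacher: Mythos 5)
Your per-group argument is essentially the paper's own proof: the paper also fixes $k$, embeds an $(m_k)$-\BAI instance from $\set{\@H^{(m_k)}_j}_{j}$ as the $k$-th group of an $\*m$-\BAI instance with all other arms fair, runs the assumed algorithm, maps output $(k,j)$ to $j$ (arbitrary otherwise), and contradicts \Cref{lem: block-BAI-lb-simple} via $\E[\@H^{(m_k)}_0]{T_{\+A'}}=\E[\@H^{\*m}_0]{T^{(k)}}$; you are in fact more explicit than the paper about simulating the fair groups locally and about why $(k,j)$ is the unique $\eps$-optimal arm when $j\ge 1$, and your observation that the reduction only ever feeds $\+A$ instances from $\+H$ is exactly how the paper gets the ``Moreover'' clause.

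The gap is in your treatment of the singleton groups, and it is a real one. The per-singleton Bretagnolle--Huber step compares $\@H^{\*m}_0$ with the instance biasing one singleton, with event $E_k=\set{\text{output is }(k,1)}$; the inequality only yields $\E[\@H^{\*m}_0]{T^{(k)}}=\Omega(1/\eps^2)$ when $\Pr[\@H^{\*m}_0]{E_k}$ is bounded well below $1/2$, and under $\@H^{\*m}_0$ every arm is $\eps$-optimal, so the algorithm is free to put large probability on any singleton. Your averaging over $k$ (using $\sum_k \Pr[\@H^{\*m}_0]{E_k}\le 1$) rescues this only when the number of singletons is large: for $\*m=(1,1)$ an algorithm may output each singleton with probability $1/2$ at $\@H^{\*m}_0$, making both pairwise bounds vacuous, yet the lemma still demands $c_1\log 2/\eps^2$ there; similarly for $\*m=(2,1,1)$ the per-group bound $c_1\log 3/\eps^2$ plus your singleton argument falls short of the claimed $c_1(\log 3+2\log 2)/(2\eps^2)$. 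Replacing the pairwise comparison by a comparison with the uniform mixture over the biased singletons (the Jensen-style step of \Cref{lem: gaussian KL}) does not help either: that gives only $\Omega(\log s/\eps^2)$ for $s$ singletons, not the needed $\Omega(s/\eps^2)$. You acknowledge this obstacle but do not close it, whereas the paper's fix is elementary: pair up the singleton groups and merge each pair into a group of size at least $2$ (merging only enlarges the feedback, so any algorithm for the original problem is also one for the merged problem with the same $T$, and lower bounds transfer), then apply the already-proved $m_k\ge 2$ per-group bound to the merged vector; since $\log 3\ge \tfrac{1}{2}\cdot 2\log 2$, this costs at most a factor $2$, which is precisely the $\tfrac{1}{2}$ appearing in the statement.
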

\begin{proof}
	We only prove the latter case which is stronger. Let $\+H$ be the set of all $\*m$-\BAI instances where the input arms have losses drawn from either $\!{Ber}\tp{\frac{1}{2}}$ or $\!{Ber}\tp{\frac{1}{2}-\eps}$. 

	Let $\+A$ be an algorithm that identifies the $\eps$-optimal arm with probability $0.95$ when the input instance is in $\+H$. Assume $\+A$ satisfies $\E[\@{H}^{\*m}_0]{T^{(k)}}<\frac{c_1\log (m_k+1)}{\eps^2}$ for some $k\in[K]$. In the following, we construct an algorithm $\+A'$ to find an $\eps$-optimal arm given instances in $\set{\@{H}^{(m_k)}_j}_{j\in[m]\cup\set{0}}$. 

	Given any $(m_k)$-\BAI instance $\@{H}^{(m_k)}\in \set{\@{H}^{(m_k)}_j}_{j\in[m]\cup\set{0}}$ , we construct an $\*m$-\BAI instance: set $\@{H}^{(m_k)}$ to be the $k$-th group and all remaining arms are fair ones. Then we apply $\+A$ on this instance.  
	The output of $\+A'$ is as follows:
	\[
		\mbox{Output of }\+A'=
		\begin{cases}
			\mbox{ arm }j, &\mbox{if the output of }\+A \mbox{ is arm }(k,j);\\
			\mbox{ an arbitrary arm}, &\mbox{otherwise}.
		\end{cases}
	\]

	Clearly, the correct probability of $\+A'$ is at least $0.95$. However, $\+A'$ satisfies $\E[\@{H}_0^{(m_k)}]{T}< \frac{c_1\log (m_k+1)}{\eps^2}$, which violates \Cref{lem: block-BAI-lb-simple}.
	
	Therefore, we have $\E[\@{H}^{\*m}_0]{T^{(k)}}\geq \frac{c_1\log (m_k+1)}{\eps^2}$ for every $k\in[K]$ with $m_k\geq 2$ and thus have proved $\E[\@{H}^{\*m}_0]{T}\geq \sum_{k=1}^K \frac{c_1\log (m_k+1)}{\eps^2}$ as long as each $m_k\geq 2$. For those groups of size one, we can pair and merge them so that each group contains at least two arms (in case there are odd number of singleton groups, we merge the remaining one to any other groups). Notice that this operation only makes the problem easier (since one can observe more arms in each round) and only affects the lower bound by a factor of at most $2$. Therefore, we still have
	\[
		\E[\@{H}^{\*m}_0]{T} \geq \sum_{k=1}^K \frac{c_1\log (m_k+1)}{2\eps^2}.
	\]
	
	

\end{proof}

\section{Regret Lower Bounds}\label{sec: regret lb}
In this section we prove lower bounds for minimax regrets in various settings. All lower bounds for regrets in the section are based on the lower bounds for $\*m$-\BAI established in \Cref{sec: BAI lb}.

\subsection{Regret Lower Bound for \texorpdfstring{$\*m$}~-\MAB}\label{subsec: regret-lb-MAB}

Let us fix $\*m=(m_1,\dots,m_K)$. We then derive a regret lower bound for $\*m$-MAB and thus prove \Cref{thm:lb-m-MAB}.
Let $T$ be the time horizon and $c_1$ be the constant in \Cref{lem: block-BAI-lb-simple}. Consider a set of $\*m$-BAI instances where each arm has losses drawn from either $\!{Ber}\tp{\frac{1}{2}}$ or $\!{Ber}\tp{\frac{1}{2}-\eps}$ where $\eps=\sqrt{\frac{c_1\sum_{k=1}^K \log (m_k+1)}{8T}}$. Denote this set by $\+H$.
\begin{lemma}\label{lem:lb-m-MAB}
	For any algorithm $\+A$ of $(m_1,\dots,m_k)$-\MAB, for any sufficiently large $T>0$, there exists $\@H\in \+H$ such that the expected regret of $\+A$ satisfies
	\[
		\E[\@H]{R(T)}\geq c'\cdot \sqrt{T\cdot\sum_{k=1}^K \log (m_k+1)}
	\]
	where $c'>0$ is a universal constant. Here the expectation is taken over the randomness of losses which are drawn from $\@H$ independently in each round.
\end{lemma}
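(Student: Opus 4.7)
The plan is to derive \Cref{lem:lb-m-MAB} by contradiction from the PAC lower bound for $\*m$-\BAI in \Cref{lem: block-BAI-lb}, using the \BAI-to-\MAB reduction of \Cref{lem: reduction}. Suppose the conclusion fails; then for some sufficiently small universal constant $c'>0$ to be chosen below, we have $\E[\@H]{R(T,\+A,\vec L)} < c'\sqrt{T\sum_{k=1}^K\log(m_k+1)}$ simultaneously for every $\@H \in \+H$.

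Since every instance in $\+H$ is stochastic with Bernoulli arms, I invoke the ``moreover'' clause of \Cref{lem: reduction}, taking the bound $r(T,\vec L)$ there to be $R_{a^*}(T,\+A,\vec L)$ itself. This hands me an $(\eps,0.05)$-PAC algorithm $\+A'$ for $\*m$-\BAI that is correct on every input drawn from $\+H$ and terminates after exactly $T^*_\+H$ rounds, where
\[
  T^*_\+H = \frac{2500\,\max_{\@H\in\+H}\E[\@H]{R(T^*_\+H,\+A,\vec L)}}{\eps} \le \frac{2500\,c'\sqrt{T^*_\+H\cdot\sum_{k=1}^K\log(m_k+1)}}{\eps}.
\]
Solving for $T^*_\+H$ yields the upper bound $T^*_\+H \le (2500c')^2\sum_{k=1}^K\log(m_k+1)/\eps^2$.

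The crucial observation is that the all-fair instance $\@H^{\*m}_0$ belongs to $\+H$, so the ``moreover'' clause of \Cref{lem: block-BAI-lb} applies to $\+A'$ and produces
\[
  T^*_\+H = \E[\@H^{\*m}_0]{T^*_\+H} \ge \sum_{k=1}^K \frac{c_1\log(m_k+1)}{2\eps^2}.
\]
Substituting $\eps^2 = c_1\sum_{k=1}^K\log(m_k+1)/(8T)$ turns these bounds into $T^*_\+H \le (2500c')^2\cdot 8T/c_1$ and $T^*_\+H \ge 4T$, which are inconsistent as soon as $c' < \sqrt{c_1/2}/2500$. Fixing any such $c'$ yields the desired contradiction. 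The hypothesis that $T$ be ``sufficiently large'' enters only to guarantee $\eps<\tfrac18$, as required to invoke \Cref{lem: block-BAI-lb}.

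The only place demanding care, and arguably the main potential obstacle, is justifying that \Cref{lem: reduction} can be invoked with $r$ equal to the pseudo-regret in the stochastic setting. Its proof applies Markov's inequality to the (nonnegative) conditional expected regret $R(\overline Z)$ given the empirical pull distribution $\overline Z$, and nothing breaks when one additionally averages over $\vec L \sim \@H$ afterwards; so this is routine bookkeeping rather than a genuine new difficulty.
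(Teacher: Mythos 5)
Your overall route is the paper's: negate the claim, use \Cref{lem: reduction} to turn $\+A$ into a PAC algorithm for $\*m$-\BAI on the family $\+H$, and contradict \Cref{lem: block-BAI-lb}. However, the step in which you upper-bound $T^*_{\+H}$ has a genuine gap. You take $r(T,\vec L)=R_{a^*}(T,\+A,\vec L)$ itself and then write $\max_{\@H\in\+H}\E[\@H]{R(T^*_{\+H},\+A,\vec L)}\le c'\sqrt{T^*_{\+H}\sum_{k=1}^K\log(m_k+1)}$. But the contradiction hypothesis is only the negation of the lemma at the \emph{given} horizon $T$: it asserts $\E[\@H]{R(T)}<c'\sqrt{T\sum_{k=1}^K\log(m_k+1)}$ for all $\@H\in\+H$, where both $\eps$ and the family $\+H$ are defined in terms of this particular $T$. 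It gives no control on the regret of $\+A$ at the unknown horizon $T^*_{\+H}$; with $r$ equal to the true regret the defining equation for $T^*_{\+H}$ need not even have a solution, and nothing ties $T^*_{\+H}$ to $T$. Hence the displayed inequality, and with it the bound $T^*_{\+H}\le (2500c')^2\sum_{k=1}^K\log(m_k+1)/\eps^2$, is unjustified, and without it there is no contradiction.

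The repair is exactly what the paper's choice of $\eps$ is engineered for: pin the reduction at horizon $T$ itself. The proof of \Cref{lem: reduction} only needs $\eps T\ge 2500\cdot\max_{\@H\in\+H}\E[\@H]{R(T)}$ for the sampled-arm construction run for $T$ rounds to be correct with probability $0.95$ on every instance of $\+H$; with $\eps=\sqrt{c_1\sum_{k=1}^K\log(m_k+1)/(8T)}$ this is precisely what your hypothesis at horizon $T$ supplies once $c'\le\sqrt{c_1/8}/2500$ (the paper's $c'=\sqrt{c_1/2}/5000$ is this threshold, i.e.\ it makes the fixed point of the closed-form bound equal to $T$). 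The resulting \BAI algorithm always stops after exactly $T$ pulls, while the ``moreover'' clause of \Cref{lem: block-BAI-lb} forces $\E[\@{H}^{\*m}_0]{T}\ge\sum_{k=1}^K c_1\log(m_k+1)/(2\eps^2)=4T$, a contradiction. So your argument becomes correct once you invoke the reduction at the horizon where the hypothesis actually holds, rather than at a fixed point defined through the unknown regret function.
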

\begin{proof}
	Assume $\+A$ satisfies
	\[
		\E[\@H]{R(T)} < \frac{\sqrt{T\cdot \frac{1}{2}\sum_{k=1}^K c_1\log (m_k+1) }}{5000}
	\]
	for every $\@H\in \+H$ where $c_1$ is the constant in \Cref{lem: block-BAI-lb-simple}. \Cref{lem: reduction} shows that $\+A$ implies an algorithm to identify the $\eps$-optimal arm for $\*m$-\BAI instances in $\+H$ with probability $0.95$ which terminates in $c_1\cdot \frac{\sum_{k=1}^K \log (m_k+1)}{8\eps^2}$ rounds. We can assume $\eps<\frac{1}{8}$ since $T$ is sufficiently large.

	However, according to \Cref{lem: block-BAI-lb}, for any such algorithms, there exists some instances in $\+H$ that need at least $\frac{c_1\sum_{k=1}^K \log (m_k+1)}{2\eps^2}$ rounds. This violates \Cref{lem: reduction} and thus indicates a regret lower bound of $\Omega\tp{\sqrt{T\cdot\sum_{k=1}^K \log (m_k+1)}}$.
\end{proof}

\Cref{thm:lb-m-MAB} is a direct corollary of \Cref{lem:lb-m-MAB}.




\subsection{Regret Lower Bounds for Strongly Observable Graphs}\label{subsec: regret-lb-strongly}
Let $G=(V,E)$ be a strongly observable graph with a self-loop on each vertex. Let $N=\abs{V}$. Assume that there exist $K$ \emph{disjoint} sets $S_1,\dots ,S_K\subseteq V$ such that there is no edge between $S_i$ and $S_j$ for any $i\ne j$.
For every $k\in [K]$, let $m_k=\abs{S_k}$. Let $S=\bigcup_{k\in[K]} S_k$. 


\begin{proof}[Proof of \Cref{thm:lb-graph-bandit-strongly}]
	We present a reduction from $\*m$-\MAB to bandit with feedback graph $G$ where $\*m=(m_1,\dots,m_K)$. Let $\+A$ be an algorithm for bandit with feedback graph $G$. Consider a set of instances where the loss of each arm is drawn from either $\!{Ber}\tp{\frac{1}{2}}$ or $\!{Ber}\tp{\frac{1}{2}-\eps}$ where $\eps=\sqrt{\frac{c_1\sum_{k=1}^K \log (m_k+1)}{8T}}$ (here $c_1$ is the constant in \Cref{lem: block-BAI-lb-simple}). Denote this set by $\+H$. When we say the input of \MAB is an instance in $\+H$, we mean that the loss sequence is drawn from this instance independently in each round.

	Then we design an algorithm $\+A'$ for $\*m$-\MAB to deal with instances in $\+H$ as follows. For an $\*m$-\MAB instance $\@H^{\*m}$ in $\+H$, we construct a bandit instance with feedback graph $G$: the losses of arms in $S_k$ correspond to the losses of arms in the $k$-th group of $\@H^{\*m}$ in the $\*m$-\MAB game and the losses of arms in $V\setminus S$ are always equal to $1$. 

	The algorithm $\+A'$ actually makes decisions according to $\+A$. If $\+A$ pulls an arm in $S$, $\+A'$ pulls the corresponding arm in the $\*m$-\MAB game. Otherwise, when $\+A$ requests to pull an arm $A_t\in V\setminus S$, we replace this action by letting $\+A'$ pull the first arm in each group once and then feed the information that $A_t$ should have observed back to $\+A$ (Note that all arms outside $S$ have fixed loss $1$). We force $\+A'$ to terminate after pulling exactly $T$ arms.
	Note that $\eps\ll \frac{1}{K}$ since $T$ is sufficiently large.  If we use $R(T)$ and $R'(T)$ to denote the regret of $\+A$ and $\+A'$ respectively, then by our choice of $\eps$, we have
	\[
		\E{R(T)}\ge \E{R'(T)}
	\]
	where the expectation is taken over the randomness of loss sequences specified above.


	\Cref{lem:lb-m-MAB} shows that there exists $\@H\in \+H$ such that
	\[
		\E[\@H]{R'(T)}\ge c'\sqrt{T\cdot \sum_{k=1}^K \log (m_k+1)}
	\]
	Therefore, there exist some loss sequences on which $\+A$ needs to suffer a regret of $\Omega\tp{\sqrt{T\cdot \sum_{k=1}^K \log (m_k+1)}}$.
\end{proof}

\begin{remark}
	Although we assume each vertex has a self-loop in \Cref{thm:lb-graph-bandit-strongly}, it is easy to verify that this result also holds for strongly observable graphs which contain some vertices without self-loops, as long as we can find legal $\set{S_k}_{k\in[K]}$. For example, for the loopless clique, we can also apply \Cref{thm:lb-graph-bandit-strongly} with $K=1$ and $S_1=V$. It gives a minimax regret lower bound of $\Omega\tp{\sqrt{T\log N}}$, which matches the previous best upper bound in~\cite{ACDK15}.
\end{remark}

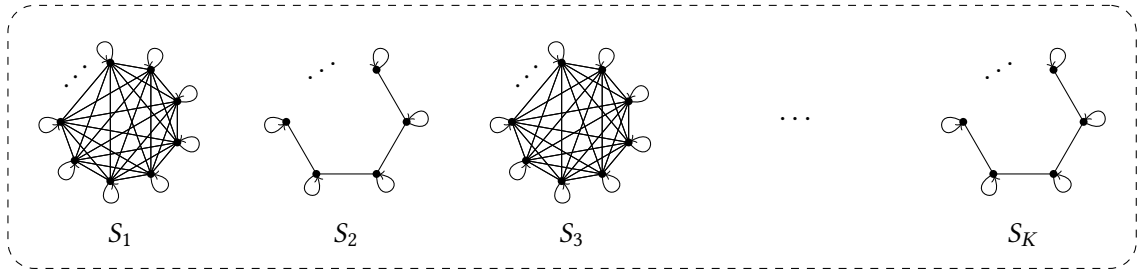
\begin{figure}[ht]
	\centering
        \begin{tikzpicture}
            \tikzset{mynode/.style=fill, circle, inner sep=1pt, minimum size=3pt}
			\node[draw,dashed,rectangle,rounded corners=10pt,minimum width=15cm,minimum height=3.5cm] (box) at (6,2.8) {};
			\foreach \k in {0, 6}{
                \foreach \j in {0,1,2,...,7}{
                    \pgfmathparse{cos(40*(\j))};
                    \tikzmath{\x =\pgfmathresult;}
                    \pgfmathparse{sin(40*(\j))};
                    \tikzmath{\y =\pgfmathresult;}
                    \node[mynode] at (\k-0.8*\x,3-0.8*\y){};
					\coordinate (MAB\j) at (\k-0.8*\x,3-0.8*\y){};
                }
                \pgfmathparse{cos(40*8)};
                \tikzmath{\x =\pgfmathresult;}
                \pgfmathparse{sin(40*8)};
                \tikzmath{\y =\pgfmathresult;}
                \node[color=black] at (\k-0.6*\x,3-0.6*\y){\begin{rotate}{85}$\ddots$\end{rotate}};
                \foreach \j in {0,1,2,3,4,5,6,7}{
					\path[-] (MAB\j) edge [in=\j*40-130, out=\j*40-210,loop] ();
                }
                \foreach \i in {0,1,2,3,4,5,6,7}{
                	\foreach \j in {0,1,2,3,4,5,6,7}{
                    	\path[-] (MAB\i) edge (MAB\j);
                	}
                }
			}
			\node () at (0, 1.5){$S_1$};
			\node () at (3, 1.5){$S_2$};
			\node () at (6, 1.5){$S_3$};
			\node () at (12, 1.5){$S_K$};
			\foreach \k in {3, 12}{
                \foreach \j in {0,1,2,3,4}{
                    \pgfmathparse{cos(60*(\j))};
                    \tikzmath{\x =\pgfmathresult;}
                    \pgfmathparse{sin(60*(\j))};
                    \tikzmath{\y =\pgfmathresult;}
                    \node[mynode] at (\k-0.8*\x,3-0.8*\y){};
					\coordinate (MAB\j) at (\k-0.8*\x,3-0.8*\y){};
                }
                \pgfmathparse{cos(60*5)};
                \tikzmath{\x =\pgfmathresult;}
                \pgfmathparse{sin(60*5)};
                \tikzmath{\y =\pgfmathresult;}
                \node[color=black] at (\k-0.5*\x,3-0.5*\y){\begin{rotate}{70}$\ddots$\end{rotate}};
                \foreach \j in {0,1,2,3,4}{
					\path[-] (MAB\j) edge [in=\j*60-130, out=\j*60-210,loop] ();
                }
                \foreach \i in {0,1,2,3}{
					\pgfmathparse{\i+1};
					\tikzmath{\j =\pgfmathresult;}
                    \path[-] (MAB\i) edge (MAB\j);
                }
			}
			\node[color=black] at (9,3){$\cdots$};
        \end{tikzpicture}
		\caption{A Feedback Graph Example}
		\label{fig: graph instance}
\end{figure}

\Cref{thm:lb-graph-bandit-strongly} gives a general regret lower bound for bandit with arbitrary feedback graphs. Intuitively, it allows us to partition the graph and consider the hardness of each single part respectively.

For example, consider the graph shown in \Cref{fig: graph instance}: The feedback graph is the disjoint union of $K_1$ cliques and $K_2=K-K_1$ cycles where each clique contains $m_1$ vertices and each cycle contains $m_2$ vertices. Note that the clique cover of this graph contains $K_1$ cliques of size $m_1$ and $\lceil\frac{K_2m_2}{2}\rceil$ cliques of constant size. According to \Cref{thm:ub_clique}, our Algorithm~\ref{algo:main-algo} gives a regret upper bound of $O\tp{\sqrt{T\tp{K_1\log m_1 + K_2m_2}}}$, which matches the lower bound given in \Cref{thm:lb-graph-bandit-strongly}. The previous best lower bound~(\cite{ACDK15}) on this feedback graph is $\Omega\tp{\sqrt{\tp{K_1+K_2m_2}T}}$. When $K_1$ and $m_1$ are large, our result wins by a factor of $\Theta\tp{\sqrt{\log m_1}}$.

\subsection{Regret Lower Bounds for Weakly Observable Graphs}\label{sec: weakly-regret-lb}

Let $G=(V,E)$ be a weakly observable graph. Assume that $V$ can be partitioned into $K$ disjoint sets $V=V_1\cup V_2\cup \cdots\cup V_K$ and each $G[V_k]$ contains a $t_k$-packing independent set $S_k$ such that every vertex in $S_k$ does not have a self-loop. Assume there are no edges from $V_j$ to $S_i$ for any $i\neq j$. Let $m_k=\abs{S_k}$ and $S=\bigcup_{k\in[K]}S_k$. 

Without loss of generality, we assume in the following proof that each $m_k\geq 2$. When there exists some $m_k=1$, we can pair and merge them into new sets of size  at least $2$ (in case there are odd number of singleton sets, we merge the remaining one to any other sets). This merging process only affects the result by at most a constant factor. Let $\*m=(m_1,\dots,m_K)$. Our proof idea is to embed a certain $\*m'$-\BAI instance in $G$ so that the lower bound follows from the lower bound of $\*m'$-\BAI.

\begin{proof}[Proof of \Cref{thm:lb-graph-bandit-weakly}]
	Let 
	\[
		\xi_k=\max\set{c_1\log (m_k+1), \frac{c_2m_k}{t_k}}
	\] 
	for every $k\in [K]$ where $c_1>0$ is the constant in \Cref{lem: block-BAI-lb} and $c_2=\frac{c_1\log 3}{4}$. Assume there exists an algorithm $\+A$ such that 
	\begin{equation}
		R(T)<\frac{1}{2\cdot 1250^\frac{2}{3}}\tp{\sum_{k=1}^K \xi_k}^{\frac{1}{3}}\cdot T^{\frac{2}{3}}\label{eq: weakly R}
	\end{equation}
	for every loss sequence. We will construct an $\*m'$-\BAI game for some $\*m'=\tp{m'_1,m'_2,\dots, m'_{K'}}$ and reduce this \BAI game to the bandit problem with feedback graph $G$. The vector $\*m'$ is obtained from $\*m$ in the following ways. For every $k\in [K]$, we distinguish between two cases:
	
	\begin{itemize}
		\item Case $1$: if $c_1\log (m_k+1) \geq \frac{c_2m_k}{t_k}$, we let the arms in $S_k$ form a group in the $\*m'$-\BAI instance;
		\item Case $2$: if $c_1\log (m_k+1) < \frac{c_2m_k}{t_k}$, we divide $S_k$ into $\lfloor \frac{m_k}{2} \rfloor$ small sets, each with size at least two. Each small set becomes a group in the $\*m'$-\BAI instance.
	\end{itemize}

	In other words, each group in the $\*m'$-\BAI instance is either one of $S_k$ (Case 1) or is a subset of a certain $S_k$ (Case 2).
	
	Given an $\*m'$-\BAI instance and time horizon $T>0$, we now define the loss sequence for bandit with feedback graph $G$: the losses of arms in $S$ in each round are sampled from the distribution of the corresponding arm in the $\*m'$-\MAB instance independently, and the losses of arms in $V\setminus S$ are always equal to $1$. We then design an algorithm $\+A'$ for the $\*m'$-\BAI game by simulating $\+A$ on this graph bandit problem. 
	If $\+A$ pulls an arm in $V\setminus S$ and observes arms in $S_k$, we again consider two cases:
	\begin{itemize}
		\item Case 1: if $c_1\log (m_k+1) \geq \frac{c_2m_k}{t_k}$, we let $\+A'$ pull an arbitrary arm in the corresponding group $\*m'$-\MAB instance;
		\item Case 2: if $c_1\log (m_k+1) < \frac{c_2m_k}{t_k}$, for each arm in $S_k$ that will be observed, $\+A'$ pulls the corresponding arm in the  $\*m'$-\MAB instance once.
	\end{itemize}
	Otherwise if $\+A$ pulls an arm in $S$, $\+A'$ does nothing and just skips this round. Note that $\+A'$ can always observe more information about the feedback of arms in $S$ than $\+A$. So $\+A'$ can well simulate $\+A$ just by feeding the information it observed to $\+A$ and making decisions according to the behavior of $\+A$ as described above.
	
	Let $T_i$ be the number of times that arm $i$ has been pulled by $\+A$. At the end of the game, $\+A'$ samples an arm in $V$ according to the distribution $\tp{\frac{T_1}{T}, \frac{T_2}{T},\dots, \frac{T_N}{T}}$. If the sampled arm is in $V\setminus S$, $\+A'$ outputs a random arm. Otherwise $\+A'$ outputs the sampled arm. Choose $\eps =1250^{\frac{1}{3}} \tp{\frac{\sum_{k=1}^K \xi_k}{T}}^{\frac{1}{3}}$. We can verify that $\+A'$ is an $\tp{\eps, 0.05}$-PAC algorithm through an argument similar to the one in our proof of \Cref{lem: reduction}.

	Let $T^{(k)}$ be the number of times that the arms in group $k$ have been pulled by $\+A'$ in the $\*m'$-\BAI game. According to \Cref{lem: block-BAI-lb}, for each $k\in[K']$,
	\[
		\E[\@{H}^{\*m'}_0]{T^{(k)}}\geq  \frac{c_1\log (m'_k+1)}{\eps^2},
	\] 
	where $\@{H}^{\*m'}_0$ is the $\*m'$-\BAI instance with all fair coins. Let $\@I_0$ denote the graph bandit instance constructed from above rules based on $\@{H}^{\*m'}_0$. Recall that one pull of $\+A$ corresponds to at most $t_k$ pulls of $\+A'$ in Case $2$. Therefore, when the input is $\@I_0$, $\+A$ must pull the arms in $V_k\setminus S_k$ for at least $\frac{c_1\lfloor \frac{m_k}{2} \rfloor\log 3}{t_k\eps^2}\geq \frac{c_2m_k}{t_k\eps^2}$ times if $k$ is in Case $2$ and at least $\frac{c_1\log (m_k+1)}{\eps^2}$ times if $k$ is in Case $1$. In other words, $\+A$ must pull the arms in $V_k\setminus S_k$ for at least $\frac{\xi_k}{\eps^2}$ times for every $k\in [K]$. Plugging in our choice of $\eps$, $\+A$ needs to pull the arms in $V\setminus S$ for more than $\frac{1}{1250^\frac{2}{3}}\cdot \tp{\sum_{k=1}^K \xi_k}^{\frac{1}{3}} T^{\frac{2}{3}}$ times in total on $\@I_0$. These pulls contribute a regret of at least $\frac{1}{2\cdot 1250^\frac{2}{3}}\tp{\sum_{k=1}^K \xi_k}^{\frac{1}{3}}\cdot T^{\frac{2}{3}}$, which contradicts the assumption in \Cref{eq: weakly R}.

	Therefore, there exists some loss sequences such that $\+A$ satisfies
	\[
		R(T)=\Omega\tp{T^{\frac{2}{3}}\cdot \tp{\sum_{k=1}^K\max\set{\log {m_k}, \frac{m_k}{t_k}}}^{\frac{1}{3}}}.
	\]
\end{proof}

\Cref{thm:lb-graph-bandit-weakly} confirms a conjecture in~\cite{HZ22}. It can also generalize the previous lower bound for weakly observable graphs $\Omega\tp{T^{\frac{2}{3}}\tp{\log \abs{S},\frac{\abs{S}}{t}}^{\frac{1}{3}}}$ in~\cite{CHLZ21} by applying \Cref{thm:lb-graph-bandit-weakly} with $K=1$ and $V_1=V$ where $S\subseteq V$ is a $t$-packing independent set of $G$. As consequences, \Cref{thm:lb-graph-bandit-weakly} provides tight lower bounds for several feedback graphs. For example, when $G$ is the disjoint union of $K$ complete bipartite graphs of size $m_1,m_2,\dots, m_K$ respectively, it implies a lower bound of $\Omega\tp{\tp{\sum_{k\in[K]}\log m_k}^{\frac{1}{3}}T^{\frac{2}{3}}}$, which matches the upper bound in~\cite{HZ22}.

\bibliography{block_bandit}

\newcommand{\etalchar}[1]{$^{#1}$}
\begin{thebibliography}{RvdHCBS22}

\bibitem[AB09]{AB09}
Jean-Yves Audibert and S{\'e}bastien Bubeck.
\newblock Minimax policies for adversarial and stochastic bandits.
\newblock In {\em COLT}, volume~7, pages 1--122, 2009.

\bibitem[ABM10]{ABM10}
Jean-Yves Audibert, S{\'e}bastien Bubeck, and R{\'e}mi Munos.
\newblock Best arm identification in multi-armed bandits.
\newblock In {\em COLT}, pages 41--53, 2010.

\bibitem[ACBDK15]{ACDK15}
Noga Alon, Nicolo Cesa-Bianchi, Ofer Dekel, and Tomer Koren.
\newblock Online learning with feedback graphs: Beyond bandits.
\newblock In {\em Conference on Learning Theory}, pages 23--35. PMLR, 2015.

\bibitem[ACBG{\etalchar{+}}17]{ACGMMS17}
Noga Alon, Nicolo Cesa-Bianchi, Claudio Gentile, Shie Mannor, Yishay Mansour,
  and Ohad Shamir.
\newblock Nonstochastic multi-armed bandits with graph-structured feedback.
\newblock {\em SIAM Journal on Computing}, 46(6):1785--1826, 2017.

\bibitem[BMS09]{BMS09}
S{\'e}bastien Bubeck, R{\'e}mi Munos, and Gilles Stoltz.
\newblock Pure exploration in multi-armed bandits problems.
\newblock In {\em Algorithmic Learning Theory: 20th International Conference,
  ALT 2009, Porto, Portugal, October 3-5, 2009. Proceedings 20}, pages 23--37.
  Springer, 2009.

\bibitem[CHLZ21]{CHLZ21}
Houshuang Chen, Zengfeng Huang, Shuai Li, and Chihao Zhang.
\newblock Understanding bandits with graph feedback.
\newblock In M.~Ranzato, A.~Beygelzimer, Y.~Dauphin, P.S. Liang, and J.~Wortman
  Vaughan, editors, {\em Advances in Neural Information Processing Systems},
  volume~34, pages 24659--24669. Curran Associates, Inc., 2021.

\bibitem[CLQ17]{CLQ17}
Lijie Chen, Jian Li, and Mingda Qiao.
\newblock Towards instance optimal bounds for best arm identification.
\newblock In {\em Conference on Learning Theory}, pages 535--592. PMLR, 2017.

\bibitem[DWZ23]{DWZ23}
Christoph Dann, Chen-Yu Wei, and Julian Zimmert.
\newblock A blackbox approach to best of both worlds in bandits and beyond.
\newblock {\em arXiv preprint arXiv:2302.09739}, 2023.

\bibitem[EDMM02]{EMM02}
Eyal Even-Dar, Shie Mannor, and Yishay Mansour.
\newblock Pac bounds for multi-armed bandit and markov decision processes.
\newblock In {\em COLT}, volume~2, pages 255--270. Springer, 2002.

\bibitem[EECCB23]{EECCB23}
Khaled Eldowa, Esposito Emmanuel, Tommaso Cesari, and Nicolò Cesa-Bianchi.
\newblock On the minimax regret for online learning with feedback graphs.
\newblock {\em arXiv preprint arXiv:2305.15383}, 2023.

\bibitem[EK21]{EK21}
Liad Erez and Tomer Koren.
\newblock Towards best-of-all-worlds online learning with feedback graphs.
\newblock In M.~Ranzato, A.~Beygelzimer, Y.~Dauphin, P.S. Liang, and J.~Wortman
  Vaughan, editors, {\em Advances in Neural Information Processing Systems},
  volume~34, pages 28511--28521. Curran Associates, Inc., 2021.

\bibitem[FS97]{FS97}
Yoav Freund and Robert~E Schapire.
\newblock A decision-theoretic generalization of on-line learning and an
  application to boosting.
\newblock {\em Journal of Computer and System Sciences}, 55(1):119--139, 1997.

\bibitem[HKW95]{HKW95}
David Haussler, Jyrki Kivinen, and Manfred~K Warmuth.
\newblock Tight worst-case loss bounds for predicting with expert advice.
\newblock In {\em European Conference on Computational Learning Theory}, pages
  69--83. Springer, 1995.

\bibitem[HZ22]{HZ22}
Yuchen He and Chihao Zhang.
\newblock Improved algorithms for bandit with graph feedback via regret
  decomposition.
\newblock {\em arXiv preprint arXiv:2205.15076}, 2022.

\bibitem[KC23]{KC23}
Tom{\'a}{\v{s}} Koc{\'a}k and Alexandra Carpentier.
\newblock Online learning with feedback graphs: The true shape of regret.
\newblock {\em arXiv preprint arXiv:2306.02971}, 2023.

\bibitem[KKS13]{KKS13}
Zohar Karnin, Tomer Koren, and Oren Somekh.
\newblock Almost optimal exploration in multi-armed bandits.
\newblock In {\em International Conference on Machine Learning}, pages
  1238--1246. PMLR, 2013.

\bibitem[LS20]{LS20}
Tor Lattimore and Csaba Szepesv{\'a}ri.
\newblock {\em Bandit algorithms}.
\newblock Cambridge University Press, 2020.

\bibitem[MS11]{MS11}
Shie Mannor and Ohad Shamir.
\newblock From bandits to experts: On the value of side-observations.
\newblock In J.~Shawe-Taylor, R.~Zemel, P.~Bartlett, F.~Pereira, and K.Q.
  Weinberger, editors, {\em Advances in Neural Information Processing Systems},
  volume~24. Curran Associates, Inc., 2011.

\bibitem[MT04]{MT04}
Shie Mannor and John~N Tsitsiklis.
\newblock The sample complexity of exploration in the multi-armed bandit
  problem.
\newblock {\em Journal of Machine Learning Research}, 5(Jun):623--648, 2004.

\bibitem[RvdHCBS22]{RVCS22}
Chlo{\'e} Rouyer, Dirk van~der Hoeven, Nicol{\`o} Cesa-Bianchi, and Yevgeny
  Seldin.
\newblock A near-optimal best-of-both-worlds algorithm for online learning with
  feedback graphs.
\newblock {\em Advances in Neural Information Processing Systems},
  35:35035--35048, 2022.

\bibitem[ZL19]{ZL19}
Julian Zimmert and Tor Lattimore.
\newblock Connections between mirror descent, thompson sampling and the
  information ratio.
\newblock In {\em Advances in Neural Information Processing Systems}, pages
  11973--11982, 2019.

\end{thebibliography}
\bibliographystyle{alpha}


\appendix

\section{Lower Bound for \texorpdfstring{$(m)$}~-BAI with Bounded \texorpdfstring{$m$}~}\label{sec: non-adapt-lb-small-m}
In this section, we will lower bound the number of pulls in $\tp{\eps,0.05}$-PAC algorithms of $(m)$-BAI when $m$ is bounded by a constant. To this end, we first prove a likelihood lemma in \Cref{subsec: likelihood}.
\subsection{Likelihood Lemma}\label{subsec: likelihood}
Consider two instances $\@{H}_a$ and $\@{H}_b$ which only differ at one arm (without loss of generality, assume it is the first arm). In $\@{H}_a$, $\ell(1)$ is drawn from $\!{Ber}\tp{\frac{1}{2}}$ and in $\@{H}_b$, $\ell(1)$ is drawn from $\!{Ber}\tp{\frac{1}{2}-\eps}$ where $\eps\in\tp{0,\frac{1}{2}}$ is a fixed number.

Let $\+A$ be a PAC algorithm for \BAI. Let $K_j^t=\sum_{r=1}^t \ell^{(r)}(j)$ be the accumulative loss of arm $j$ before the $(t+1)$-th round and abbreviate $K_j^{N_j}$ as $K_j$. Let $A_j$ be the event that $N_j<\hat{t}$ for a fixed $\hat{t}\in\bb N$. Let $C^a_j$ be the event that $\set{\max_{1\leq t\leq \hat{t}} \abs{K_j^t-\frac{1}{2}t} < \sqrt{\hat{t}\cdot c\eps^2\hat{t}}}$ and $C_j^b$ be the event $\set{\max_{1\leq t\leq \hat{t}} \abs{K_j^t-\tp{\frac{1}{2}-\eps}t} < \sqrt{\hat{t}\cdot c\eps^2\hat{t}}}$ where $c$ is a positive constant.

\begin{lemma}[Lemma 3 of~\cite{MT04}]\label{lem: exp-ineq}
	If $0\leq x\leq \frac{1}{\sqrt{2}}$ and $y>0$, then $(1-x)^y\geq e^{-dxy}$ where $d=1.78$.
\end{lemma}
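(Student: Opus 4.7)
The plan is to reduce the claim to a one-variable calculus inequality by taking logarithms. Observe that $(1-x)^y \geq e^{-dxy}$ is equivalent, after taking the logarithm of both sides and dividing by $y > 0$ (both sides are positive since $x \leq 1/\sqrt{2} < 1$), to $\log(1-x) \geq -dx$. So the $y$-dependence drops out trivially, and it remains to show that the function $f(x) := dx + \log(1-x)$ is nonnegative on $[0, 1/\sqrt{2}]$.

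Next I would analyze $f$ by standard calculus. We have $f(0) = 0$ and $f'(x) = d - \frac{1}{1-x}$, which is positive for $x < 1 - 1/d$ and negative for $x > 1 - 1/d$, so $f$ is concave and unimodal on $[0, 1/\sqrt{2}]$ with maximum at $x^\star = 1 - 1/d$. Since $f$ starts at $0$ and increases on $[0, x^\star]$, it is nonnegative there; by concavity, on $[x^\star, 1/\sqrt{2}]$ the function decreases monotonically, so it suffices to check the single endpoint inequality $f(1/\sqrt{2}) \geq 0$.

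The remaining step is a numerical verification at $x = 1/\sqrt{2}$, which reduces to checking that $\tfrac{1.78}{\sqrt{2}} + \log(1 - 1/\sqrt{2}) \geq 0$; the left summand is approximately $1.259$ while $-\log(1 - 1/\sqrt{2}) \approx 1.228$, so the inequality holds with a small but definite margin. The main (really only) obstacle is that this slack is tight: the optimal constant for which the inequality is an equality at $x = 1/\sqrt{2}$ is $d^\star = -\sqrt{2}\log(1 - 1/\sqrt{2}) \approx 1.737$, and the choice $d = 1.78$ is picked precisely to leave room for a clean numerical check while still being a convenient explicit constant. If one wishes to avoid any numerical estimate, the bound $-\log(1-x) \leq x/(1-x)$ for $x \in [0,1)$ yields $-\log(1-1/\sqrt{2}) \leq (1/\sqrt{2})/(1-1/\sqrt{2}) = 1/(\sqrt{2}-1) = \sqrt{2}+1 \approx 2.414$, which is too weak; hence a direct estimate or a sharper truncated Taylor bound on $\log(1-x)$ at $x = 1/\sqrt{2}$ is needed to close the gap.
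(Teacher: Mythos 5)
Your proof is correct: reducing to the single-variable inequality $\log(1-x)\ge -dx$ on $[0,1/\sqrt{2}]$, noting concavity of $f(x)=dx+\log(1-x)$ with $f(0)=0$, and verifying the endpoint $f(1/\sqrt{2})=1.78/\sqrt{2}+\log(1-1/\sqrt{2})\approx 0.031>0$ is a complete argument, and your identification of the critical constant $d^\star=-\sqrt{2}\log(1-1/\sqrt{2})\approx 1.737$ correctly explains why $d=1.78$ works. The paper itself gives no proof of this statement --- it is quoted verbatim as Lemma~3 of~\cite{MT04} --- and your argument is essentially the standard one used there, so there is nothing further to reconcile.
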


\begin{lemma}[Likelihood Lemma]\label{lem: likelihood}
	Let $S^a=A_1\cap B\cap C_1^a$ and $S^b=A_1\cap B\cap C_1^b$ where $B$ is an arbitrary event. Then we have 
	\begin{equation}
		\Pr[\@{H}_b]{S^a}\geq e^{-8(1+\sqrt{c})\eps^2\hat{t}}\Pr[\@{H}_a]{S^a}
		\label{eq: likeli1}
	\end{equation}
	and 
	\begin{equation}
		\Pr[\@{H}_a]{S^b}\geq e^{- 8(1+\sqrt{c})\eps^2\hat{t}}\Pr[\@{H}_b]{S^b}
		\label{eq: likeli2}
	\end{equation}
\end{lemma}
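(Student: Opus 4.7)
The plan is a standard change-of-measure argument. Since $\@H_a$ and $\@H_b$ agree on arms $2,\dots,m$, the Radon--Nikodym derivative $L := \mathrm{d}\@H_b/\mathrm{d}\@H_a$ on the sigma-algebra generated by the algorithm's observed trace depends only on the samples drawn from arm~$1$: if $N_1$ observations of arm~$1$ are collected and $K_1$ of them equal~$1$, then
\[
L \;=\; \frac{(1/2-\eps)^{K_1}(1/2+\eps)^{N_1-K_1}}{(1/2)^{N_1}} \;=\; (1-2\eps)^{K_1}(1+2\eps)^{N_1-K_1}.
\]
Both inequalities of the lemma then reduce to pointwise bounds on $L$, via $\Pr[\@H_b]{S^a} = \E[\@H_a]{\mathbf{1}_{S^a}\,L}$ and $\Pr[\@H_a]{S^b} = \E[\@H_b]{\mathbf{1}_{S^b}/L}$, so I only need to control $L$ on $S^a$ from below and on $S^b$ from above.

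For \eqref{eq: likeli1}, on $S^a$ one has $N_1 < \hat t$ and (taking $t = N_1$ in $C_1^a$) $|K_1 - N_1/2| < \sqrt{c}\,\eps\,\hat t$. I would set $K_1 = N_1/2 + \delta$ and factor
\[
L \;=\; (1-4\eps^2)^{N_1/2}\cdot\left(\frac{1-2\eps}{1+2\eps}\right)^{\!\delta}.
\]
By \Cref{lem: exp-ineq} the first factor is at least $e^{-2d\eps^2\hat t}$. For the second, use $\tfrac{1-2\eps}{1+2\eps}\geq 1-4\eps$: when $\delta\geq 0$, \Cref{lem: exp-ineq} gives $(1-4\eps)^{\delta}\geq e^{-4d\sqrt c\,\eps^2\hat t}$; when $\delta<0$ the factor is $\geq 1$. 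Multiplying and noting $2d(1+2\sqrt c) = 3.56(1+2\sqrt c)\leq 8(1+\sqrt c)$ yields $L\geq e^{-8(1+\sqrt c)\eps^2\hat t}$.

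For \eqref{eq: likeli2}, the dual computation bounds $L$ from above on $S^b$. Writing $K_1 = (1/2-\eps)N_1 + \delta'$ with $|\delta'| < \sqrt c\,\eps\,\hat t$, I would regroup
\[
L \;=\; \bigl[(1-2\eps)^{1/2-\eps}(1+2\eps)^{1/2+\eps}\bigr]^{N_1}\cdot\left(\frac{1-2\eps}{1+2\eps}\right)^{\!\delta'}.
\]
The bracketed base equals $\exp\!\bigl(\DKL(\mathrm{Ber}(1/2-\eps)\Vert\mathrm{Ber}(1/2))\bigr)\leq e^{4\eps^2}$ using the standard bound $\DKL(p\Vert q)\leq (p-q)^2/(q(1-q))$. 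The second factor is $\leq 1$ when $\delta'\geq 0$ and otherwise at most $((1+2\eps)/(1-2\eps))^{|\delta'|}\leq (1+8\eps)^{|\delta'|}\leq e^{8\sqrt c\,\eps^2\hat t}$ (valid for small $\eps$, e.g.\ $\eps\leq 1/4$). Therefore $L\leq e^{(4+8\sqrt c)\eps^2\hat t}\leq e^{8(1+\sqrt c)\eps^2\hat t}$, giving $1/L\geq e^{-8(1+\sqrt c)\eps^2\hat t}$.

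The main obstacle is justifying the likelihood identity rigorously when the algorithm is \emph{adaptive} and its termination time is a stopping time: one must instantiate the canonical bandit probability space, observe that the action-selection kernel is identical under $\@H_a$ and $\@H_b$, and apply the chain-rule decomposition of densities so that all factors outside arm~$1$ cancel (see, e.g., Chapter~15 of~\cite{LS20}). Once that identity is in place, the rest is a short calculation: \Cref{lem: exp-ineq} handles the ``variance'' term, a Pinsker-type KL bound handles the ``mean'' term, and there is just enough slack in the constants to collapse both $2d(1+2\sqrt c)$ and $4+8\sqrt c$ into the uniform $8(1+\sqrt c)$.
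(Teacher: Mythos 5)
Your proposal is correct and takes essentially the same route as the paper: a pointwise change-of-measure bound on the likelihood ratio (which depends only on the arm-$1$ samples), split into a bias factor controlled by $N_1<\hat t$ and a deviation factor controlled by $C_1^a$ (resp.\ $C_1^b$), and then summed over the event. The only differences are cosmetic — a slightly different factorization, and for \eqref{eq: likeli2} a KL/chi-square estimate in place of \Cref{lem: exp-ineq} together with mild extra smallness assumptions on $\eps$ (e.g.\ $\eps\le 1/4$), which are harmless since the lemma is only invoked with $\eps<1/8$ (and the paper's own use of \Cref{lem: exp-ineq} likewise implicitly needs $\eps$ bounded away from $1/2$).
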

\begin{proof}
	We first prove \Cref{eq: likeli1}. For each $\omega\in S^a$ ($\omega$ is a history of the algorithm, including the behavior of the algorithm and observed result in each round), we have
	\begin{align*}
		\frac{\Pr[\@{H}_b]{\omega}}{\Pr[\@{H}_a]{\omega}}&=\frac{\tp{\frac{1}{2}-\eps}^{K_1}\tp{\frac{1}{2}+\eps}^{N_1-K_1}}{\tp{\frac{1}{2}}^{N_1}}=\tp{1-2\eps}^{K_1}\tp{1+2\eps}^{N_1-K_1}\\
		&=\tp{1-4\eps^2}^{N_1-K_1}\tp{1-2\eps}^{2K_1-N_1}\geq \tp{1-4\eps^2}^{N_1}\tp{1-2\eps}^{2K_1-N_1}.
	\end{align*}
	From \Cref{lem: exp-ineq} and the definition of $S^a$, we have
	\[
		\tp{1-4\eps^2}^{N_1}\geq \tp{1-4\eps^2}^{\hat{t}}\geq e^{-8\eps^2\hat{t}}
	\] and
	\[
		\tp{1-2\eps}^{2K_1-N_1}\geq \tp{1-2\eps}^{2\sqrt{\hat{t}\cdot c\eps^2\hat{t}}}\geq e^{-8\sqrt{c}\eps^2\hat{t}}.
	\]
	Therefore 
	\[
		\frac{\Pr[\@{H}_b]{\omega}}{\Pr[\@{H}_a]{\omega}}\geq e^{-8(1+\sqrt{c})\eps^2\hat{t}}
	\]
	and thus
	\[
		\Pr[\@{H}_b]{S^a}\geq \sum_{\omega\in S^a}\frac{\Pr[\@{H}_b]{\omega}}{\Pr[\@{H}_a]{\omega}}\cdot {\Pr[\@{H}_a]{\omega}} \geq e^{-8(1+\sqrt{c})\eps^2\hat{t}}\Pr[\@{H}_a]{S^a}.
	\]
	The proof of \Cref{eq: likeli2} is similar.
\end{proof}


\subsection{Lower Bound for \texorpdfstring{$(m)$}~-\BAI with Constant \texorpdfstring{$m$}~}
\begin{lemma}\label{lem: non-adapt-lb-small-m}
	There exists a constant $c_s$ such that for any algorithm $\+A$ which can output an $\eps$-optimal arm on any instance among $\set{\@{H}^{(m)}_j}_{j\in[m]\cup\set{0}}$ with probability at least $0.95$ when $m\geq 2$ and $c_0\log (m+1)\leq 4\log 40$, we have $\Pr[\@{H}^{(m)}_0]{T\geq \frac{c_s}{\eps^2}}\geq 0.1$.
\end{lemma}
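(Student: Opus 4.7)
The plan is to argue by contradiction using the likelihood lemma (\Cref{lem: likelihood}) combined with a pigeonhole on $\+A$'s output distribution under $\@H_0^{(m)}$. Set $\hat t=c_s/\eps^2$ for a small universal constant $c_s$ to be chosen, and suppose for contradiction that some $(\eps,0.05)$-PAC algorithm $\+A$ satisfies $\Pr[{\@H_0^{(m)}}]{T\ge \hat t}<0.1$, so in particular $\Pr[{\@H_0^{(m)}}]{A_1}>0.9$ where $A_1=\set{N_1<\hat t}$. Writing $O_j$ for the event ``$\+A$ outputs arm $j$'', we have $\sum_{j\in[m]}\Pr[{\@H_0^{(m)}}]{O_j}\le 1$, so by pigeonhole there exists $j^*\in[m]$ with $\Pr[{\@H_0^{(m)}}]{O_{j^*}}\le 1/m\le 1/2$. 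After relabelling, assume $j^*=1$, so $\Pr[{\@H_0^{(m)}}]{\overline{O_1}}\ge 1/2$.

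Next I would control the typicality event $C_1^a$ under $\@H_0^{(m)}$. Since arm $1$'s samples are iid $\!{Ber}(1/2)$ and independent of $\+A$'s internal coins conditioned on the filtration of past observations, the process $\set{K_1^t-t/2}_{t\ge 1}$ is a bounded-increment martingale in the number-of-pulls index, and Doob's $L^2$ maximal inequality yields
\[
\Pr[{\@H_0^{(m)}}]{\overline{C_1^a}}\ \le\ \frac{\hat t/4}{c\eps^2\hat t^{\,2}}\ =\ \frac{1}{4cc_s}.
\]
Choosing the free constant $c$ in the definition of $C_1^a$ so that $cc_s\ge 10$ makes this at most $1/40$. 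Combining the three bounds by a union bound gives
\[
\Pr[{\@H_0^{(m)}}]{A_1\cap\overline{O_1}\cap C_1^a}\ \ge\ \tfrac{1}{2}-\tfrac{1}{10}-\tfrac{1}{40}\ \ge\ 0.35.
\]

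Finally I would apply \Cref{lem: likelihood} with $\@H_a=\@H_0^{(m)}$, $\@H_b=\@H_1^{(m)}$, and $B=\overline{O_1}$. Since $S^a=A_1\cap\overline{O_1}\cap C_1^a\subseteq\overline{O_1}$, the lemma gives
\[
\Pr[{\@H_1^{(m)}}]{\overline{O_1}}\ \ge\ \Pr[{\@H_1^{(m)}}]{S^a}\ \ge\ e^{-8(1+\sqrt c)c_s}\cdot 0.35.
\]
Setting $c=10/c_s$ turns the exponent into $8c_s+8\sqrt{10c_s}$, which can be made at most $\ln 2$ by taking $c_s$ sufficiently small (and universal in $m$, since the argument only used $m\ge 2$). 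For such $c_s$ we obtain $\Pr[{\@H_1^{(m)}}]{\overline{O_1}}\ge 0.175>0.05$, contradicting the $(\eps,0.05)$-PAC guarantee on $\@H_1^{(m)}$.

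The main obstacle to watch is the tension between wanting $cc_s$ large (so the typicality event is almost sure under $\@H_0^{(m)}$ via Doob) and wanting $(1+\sqrt c)c_s$ small (so the likelihood-ratio loss $e^{-8(1+\sqrt c)c_s}$ is a non-trivial constant). The substitution $c=\Theta(1/c_s)$ resolves this, since then $c_s\sqrt c=\Theta(\sqrt{c_s})\to 0$ as $c_s\to 0$; explicit numerics (e.g.\ $c_s\approx 10^{-4}$) close the argument. A minor technical point is that Doob's inequality must be applied despite $N_1$ being a data-dependent stopping time, but this is standard: the martingale is indexed by the pull index rather than the round index, and successive Ber$(1/2)$ samples of arm $1$ remain independent of $\+A$'s past decisions.
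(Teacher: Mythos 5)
Your proposal is correct and follows essentially the same route as the paper's proof: assume $\Pr[\@{H}^{(m)}_0]{T\ge \hat t}<0.1$, pigeonhole an arm output with probability at most $1/m\le 1/2$ under $\@{H}^{(m)}_0$, control the typicality event via a maximal inequality (the paper uses Kolmogorov's inequality, which is the same bound as your Doob argument), union-bound to get a constant-probability event $S^a$, and transfer it to $\@{H}^{(m)}_{j}$ via \Cref{lem: likelihood} to contradict the $0.95$ success guarantee. The only difference is bookkeeping of constants (the paper fixes $\hat t=\frac{\log 3}{100\eps^2}$ and $c=100$, while you couple $c=\Theta(1/c_s)$ and shrink $c_s$), which is immaterial.
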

\begin{proof}
	Note that there must exist $j\in[m]$ such that $\Pr[\@{H}^{(m)}_0]{\+A \mbox{ output arm }j}\leq \frac{1}{m}$. Let $B$ be the event that the algorithm output any arm except for arm $j$. Apply \Cref{lem: likelihood} with $\hat t = \frac{\log 3}{100\eps^2}$, $c=100$, $\@{H}_b = \@{H}^{(m)}_j$ and $\@{H}_a = \@{H}^{(m)}_0$. Assume that $\Pr[\@{H}^{(m)}_0]{T\geq \hat t}< 0.1$. By the Kolmogorov's inequality, we have $\Pr[\@{H}^{(m)}_0]{\max_{1\leq t\leq \hat{t}} \abs{K_j^t-\frac{1}{2}t} < \sqrt{\hat{t}\cdot c\eps^2\hat{t}}} \geq 1- 0.25$. Therefore, we have $\Pr[\@{H}^{(m)}_0]{S^a}\geq 0.9-\frac{1}{m}-0.25\geq 0.15$ by the union bound.

	Then from \Cref{eq: likeli1}, we have
	\[
		\Pr[\@{H}^{(m)}_j]{B}\geq e^{-8(1+\sqrt{c})\cdot \frac{\log 3}{100}} \cdot \Pr[\@{H}^{(m)}_0]{S^a}> 0.15\cdot \frac{1}{3}=0.05.
	\] 
	However, this is in contradiction with the success probability of $\+A$. Therefore, letting $c_s=\frac{\log 3}{100}$, we have $\Pr[\@{H}^{(m)}_0]{T\geq \frac{c_s}{\eps^2}}\geq 0.1$.
\end{proof}

\end{document}